\def\1{\bm{1}}
\def\ry{{\textnormal{y}}}
\def\rvv{{\mathbf{v}}}
\def\rvw{{\mathbf{w}}}
\def\rvx{{\mathbf{x}}}
\def\rvy{{\mathbf{y}}}
\def\rvz{{\mathbf{z}}}
\def\vzero{{\bm{0}}}
\def\vtheta{{\bm{\theta}}}
\def\vg{{\bm{g}}}
\def\vs{{\bm{s}}}
\def\vu{{\bm{u}}}
\def\vv{{\bm{v}}}
\def\vx{{\bm{x}}}
\def\vy{{\bm{y}}}
\def\vz{{\bm{z}}}
\def\mA{{\bm{A}}}
\def\mB{{\bm{B}}}
\def\mI{{\bm{I}}}
\DeclareMathAlphabet{\mathsfit}{\encodingdefault}{\sfdefault}{m}{sl}
\SetMathAlphabet{\mathsfit}{bold}{\encodingdefault}{\sfdefault}{bx}{n}
\def\gN{{\mathcal{N}}}
\newcommand{\grad}{\ensuremath{\nabla}}
\newcommand{\E}{\mathbb{E}}
\newcommand{\R}{\mathbb{R}}
\newcommand{\KL}[2]{\mathrm{KL}\left(#1 \big\| #2\right)}
\newcommand{\FI}[2]{\mathrm{FI}\left(#1 \big\| #2\right)}
\newcommand{\TVD}[2]{\mathrm{TV}\left(#1, #2\right)}
\newcommand{\Var}{\mathrm{Var}}
\newcommand{\der}{\mathrm{d}}
\def\thickhline{%
  \noalign{\ifnum0=`}\fi\hrule \@height \thickarrayrulewidth \futurelet
   \reserved@a\@xthickhline}
\def\@xthickhline{\ifx\reserved@a\thickhline
               \vskip\doublerulesep
               \vskip-\thickarrayrulewidth
             \fi
      \ifnum0=`{\fi}}
\newlength{\thickarrayrulewidth}
\newtheorem{lemma}{Lemma}[section]
\newtheorem{theorem}[lemma]{Theorem}
\newtheorem{remark}{Remark}
\newcommand*\samethanks[1][\value{footnote}]{\footnotemark[#1]}
\title{Capturing Conditional Dependence via Auto-regressive Diffusion Models}
\author[$\dagger$]{\normalsize Xunpeng Huang\thanks{Equal contribution}}
\author[$\S$]{Yujin Han\samethanks}
\author[$\S$]{Difan Zou}
\author[$\P$]{Yian Ma}
\author[$\ddag$]{Tong Zhang}
\affil[$\dagger$]{Hong Kong University of Science and Technology}
\affil[$\S$]{The University of Hong Kong}
\affil[$\P$]{University of California San Diego}
\affil[$\ddag$]{University of Illinois Urbana-Champaign}
\begin{document}

\date{}
\maketitle

\begin{abstract}

Diffusion models have demonstrated appealing performance in both image and video generation. However, many works discover that they struggle to capture important, high-level relationships that are present in the real world. For example, they fail to learn physical laws from data, and even fail to understand that the objects in the world exist in a stable fashion. This is due to the fact that important conditional dependence structures are not adequately captured in the vanilla diffusion models. 
% In this work, we initiate a study on using auto-regressive (AR) diffusion models to capture conditional dependence in data. 
In this work, we initiate an in-depth study on strengthening the diffusion model to capture the conditional dependence structures in the data.
In particular, we examine the efficacy of the auto-regressive (AR) diffusion models for such purpose and develop the first theoretical results on the sampling error of AR diffusion models under (possibly) the mildest data assumption.
% To learn the AR-diffusion model, we construct a global objective that jointly trains the score estimators to match the conditional scores. 
Our theoretical findings indicate that, compared with typical diffusion models, the AR variant produces samples with a reduced gap in approximating the data conditional distribution. On the other hand, the overall inference time of the AR-diffusion models is only moderately larger than that for the vanilla diffusion models, making them still practical for large scale applications. We also provide empirical results showing that when there is clear conditional dependence structure in the data, the AR diffusion models captures such structure, whereas vanilla DDPM fails to do so. 
On the other hand, when there is no obvious conditional dependence across patches of the data, AR diffusion does not outperform DDPM.
% This finding is consistent with our theoretical analysis. 
\end{abstract}

\section{Introduction}
% \textcolor{red}{Rewrite the intro. Stress the fact that vanilla diffusion models do not understand the world, because they do not capture conditional dependence. Auto-regression tackles the problem.
% Shrink the theoretical contributions into one paragraph.}

Diffusion models transform the data---oftentimes in the form of pixels---\emph{jointly} towards a simple, Gaussian distribution. 
After learning with score matching, they incrementally denoise and transform the normal random variable back to one that follows the data distribution.
They scale well with high dimensional data during inference time and have achieved strong performance in various domains including text-to-image generation \citep{dhariwal2021diffusion, austin2021structured, ramesh2022hierarchical, saharia2022photorealistic}, video generation~\citep{gupta2024photorealistic,luo2023videofusion}, etc.
One salient shortcoming, however, is that they struggle to capture important, high-level relationships that are present in the real world. 
For example, they fail to learn physical laws from the data~\citep{kang2024far}, do not capture the stability of the objects in the world, and in general have difficulty capturing the causal structures that exist in the data.
This is due to the fact that important \emph{conditional dependence} structures are not adequately captured in the vanilla diffusion models such as DDPM \citep{ho2020denoising, song2019generative}. 
% \textcolor{blue}{[Maybe cite Yang Song and Stephan Ermon's paper]}. 
% Especially for data whose features exhibit strong conditional dependencies, the added noise to the data samples dilutes and degrades such dependencies.
% As a result, those conditional dependencies 
% even if the scores are well-trained in expectation, typical diffusion models tend to capture conditional dependencies only weakly, leading to limited performance in fine-grained and physical-laws generation.

% The core idea behind typical diffusion models is to incrementally add noise, thereby gradually transforming the data distribution into a prior that is easier to sample from (e.g., a Gaussian distribution).
% Subsequently, these models parameterize and learn the scores of the noised distributions, enabling the progressive denoising of samples drawn from such priors and ultimately recovering the data distribution~\citep{vincent2011connection,song2019generative,ho2020denoising}.
% Under these conditions, training and inference processes in typical diffusion models heavily depend on the scores associated with the noised samples.
% However, for data whose features exhibit strong conditional dependencies, these noised samples can degrade such dependencies compared with the original ones, making both training and generation more challenging.
% As a result, even if joint scores are well-trained in expectation, typical diffusion models tend to capture conditional dependencies only weakly, leading to limited performance in fine-grained and physical-laws generation.

In this work, we initiate a study and ask: would simple modifications to the model structure help diffusion models capture the conditional dependence and have theoretical guarantees of that?
In particular, we consider the recently proposed auto-regressive (AR) diffusion models~\citep{li2024autoregressive, zhang2024var}.
Although such an approach is oftentimes applied for the purpose of scalability~\citep{li2024autoregressive,meng2024autoregressive,liu2024mardini}, the AR structure is intuitively fitting to reflect the conditional dependencies among features, especially when there is a sequential nature to them.
We therefore hypothesize that AR diffusion can provably capture the conditional dependence in the data and at the same time, enjoy appealing training and inference performances.

To verify the hypothesis, we present the first theoretical analysis of the generation quality and efficiency of AR diffusion models.  
We begin by rigorously formulating the training process of AR diffusion.
In particular, we express the global training objective that is in accordance with the existing work~\cite{li2024autoregressive}.
The training error in the global objective upper bounds the expected score estimation error over the conditional distributions.
We then consider the convergence of the reverse diffusion process in the conditional distribution associated with each patch of the data.
We establish that AR diffusion drives each conditional distribution to converge towards the ground-truth one in terms of the Kullback–Leibler (KL) divergence.
We also establish that vanilla diffusion models, on the contrary, can experience a blow up in the KL divergence between the approximate and the ground-truth conditional distributions, even if the joint distribution converges up to arbitrarily high accuracy.
We then study the efficiency of the training and inference of the AR diffusion.
Our theory matches and guides the experiments in terms of the generation quality and training efficiency.
The experiments corroborate that when there is a clear conditional dependence structure in the data, the AR diffusion models capture such structure, whereas vanilla DDPM fails to do so. 
On the other hand, when there is no obvious conditional dependence across patches of the data, AR diffusion does not outperform DDPM.
We summarize the technical contributions of this paper below.

% We begin by formulating the global objective in alignment with the settings presented in~\citet{li2024autoregressive}.
% Assuming the global objective is minimized, its relationship to the expected score estimation error implies a small upper bound for the latter.
% Then, we propose a stage-wise OU process to parallel the noise-addition procedure in AR diffusions, showing that different stages in this OU process effectively transform various data conditional distributions into a standard Gaussian.
% Leveraging this insight, we establish KL convergence for AR diffusions by deriving an adaptive upper bound that controls the gap in conditional dependence between generated and ground-truth samples.
% In contrast to this controllable bound, we also show that typical diffusion models can experience a blow-up in conditional dependence under certain conditions.
% We then conduct experiments on synthetic data, demonstrating that our theoretical convergence framework reflects the realistic relationship between training error and generation quality.
% Finally, the main contributions are summarized as follows. 
% % \dz{each contribution seems to be a bit long, we may consider shorten them a bit.}
% % \textcolor{blue}{Maybe let's call $K$ the number of segments or something, instead of the number of tokens?}
\begin{itemize}[leftmargin=*]
    \item This paper is the first to provide rigorous theoretical analysis for AR diffusion models.
    To achieve KL convergence of the generated samples to the data distribution, we consider the inference process as the reverse of a stage-wise Ornstein–Uhlenbeck (OU) process.
    We show that AR diffusion requires a gradient complexity of $\tilde{O}(KL^2d\epsilon^{-2})$, which introduces only an additional factor $K$, representing patches of data, relative to typical diffusion models.
    Furthermore, under an $O(\epsilon/\sqrt{K})$ level score estimation error, we argue that the assumptions necessary for AR diffusion models to attain this convergence are nearly as mild as those required by typical diffusions~\citep{benton2024nearly}.
    \item We argue that, compared with typical diffusion models, AR diffusions capture data conditional dependencies more effectively, reflected by a smaller KL divergence between the generated token distribution and the ground-truth ones when the same conditioning tokens are given.
    Under certain settings, we can further establish an adaptive upper bound for this conditional KL divergence in AR diffusions; in contrast, this divergence may even blow up for typical diffusion models.
    
    \item We conduct experiments on two distinct synthetic datasets to illustrate that the score estimation error, as characterized in our theoretical findings, can guide the comparison of inference performance between typical and AR diffusion models.
    Practically, AR diffusion often exhibits a smaller training error, yet its inference performance may not necessarily exceed that of typical diffusion.
    To explain this mismatch, our theory indicates that to achieve the same convergence, the ratio of score estimation errors between AR and typical diffusion should be on the order of $O(\sqrt{K})$, a relationship that our experiments empirically validate.
\end{itemize}

% \dz{the logic now looks better, but do not need to include the entire roadmap, just pick some of them, and put some reference together if they have something similar. This paragraph should be around 12-13 lines.}
\section{Preliminaries}
\label{sec:pre}
In this section, we will first explain notations used in subsequent sections and then revisit the framework of \citet{li2024autoregressive} briefly.

\paragraph{Conditional Distribution Decomposition.}
Suppose we divide the vector $\vx\in\R^d$ into $K$ patches $(\vx_1, \vx_2,\ldots, \vx_K)$ following some rule; thus, a general distribution on $\R^d$ can be viewed as a joint distribution $p(\vx_1,\vx_2,\ldots,\vx_K)$ where $\vx_k\in\R^{d_k}$ and $\sum_{k=1}^K d_k = d$. 
Given an index set $S$, it can deduce a vector $\vx_{S} \coloneqq (\vx_{i})_{i\in S}$ from the joint one $\vx$. 
Similarly, the index set can also deduce a joint distribution:
\begin{equation*}
    \begin{aligned}
        &p_S(\vx_S)\coloneqq  \int_{(\vx_i)_{i\not\in S}} p(\vx_1,\vx_2,\ldots, \vx_K)\;\der ((\vx_i)_{i\not\in S}).
    \end{aligned}
\end{equation*}
In what follows, we focus on sets with consecutive indexes. 
Therefore we denote $[l:r] = \{l, l+1,\ldots, r\}$, which has
\begin{equation}
    \label{eq:marginal_def_case}
    \begin{aligned}
        &p_{[l:r]}(\vx_{[l:r]}) \;=\; p_{[l:r]}(\vx_l,\vx_2,\ldots,\vx_r)\\
        &= \int_{\vx_{[1:l-1]},\,\vx_{[r+1, K]}} p(\vx_{[1:K]})\;\der (\vx_{[1:l-1]},\,\vx_{[r+1, K]}).
    \end{aligned}
\end{equation}
If $k\not\in [l:r]$ and $(\vx_l,\ldots,\vx_r)$ is given, the conditional probability on $\rvx_k$ is defined as
\begin{equation}
    \label{eq:conditional_def}
    p_{k|[l:r]}(\vx_k|\vx_l,\ldots,\vx_r) \;=\; \frac{p_{[l:r]\cup \{k\}}(\vx_{[l:r]},\vx_k)}{p_{[l:r]}(\vx_{[l:r]})}.
\end{equation}
In addition, we denote the density function of the Gaussian-type distribution $\mathcal{N}(\vzero,\sigma^2\mI)$ as $\varphi_{\sigma^2}$.

After the general definition, we consider some notations related to the diffusion models.
Specifically, the data density function is denoted as
\begin{equation*}
    p_* \propto \exp(-f_*) \colon \R^{d_1+ d_2+ \ldots + d_K}\rightarrow \R,
\end{equation*}
so the marginal and conditional distributions derived from any index set are $p_{*,S}$ and $p_{*,k|[l:r]}$.

\paragraph{AR Diffusion Models.} Following \citet{li2024autoregressive}, we briefly revisit AR diffusion models, which usually divide the generated data into several patches, e.g., $\hat{\rvx} = [\hat{\rvx}_1,\hat{\rvx}_2,\ldots,\hat{\rvx}_K]$ and then predict the next patch, e.g., $\hat{\rvx}_{k+1}$ in a sequence based on the previous ones e.g., $\hat{\vx}_{1:k}$ or a compressed representation, e.g., $\vz\coloneqq \vg_{\vtheta_{\text{ar}}}(\hat{\vx}_{1:k})$.
The prediction is usually to draw $\hat{\rvx}_{k+1}$ from a specific distribution $q_{0}(\cdot|\vx_{[1:k]})$ inspired by typical diffusion models.
Specifically, considering a OU process initialized by $q_{0}(\cdot|\vx_{[1:k]})$, i.e.,
\begin{equation}
    \small
    \label{sde:ideal_condi_forward}
    \begin{aligned}
        \der \rvy_t = -\rvy_t \der t + \sqrt{2}\der \mB_t,\; \rvy_t\sim q_t(\cdot|\vx_{[1:k]}),\; t\in(0,T],
    \end{aligned}
\end{equation}
drawing sample from $q_{0}(\cdot|\vx_{[1:k]})$ is equivalent to obtain $\rvy_0$ by reversing the OU process SDE.~\ref{sde:ideal_condi_forward} and run
\begin{equation}
    \label{sde:ideal_condi_reverse}
    \small
    \begin{aligned}
        \der \rvy^\gets_t = \left(\rvy^\gets_t + 2\grad\ln q_{T-t}(\rvy^\gets_t|\vx_{[1:k]})\right)\der t + \sqrt{2}\der\mB_t,
    \end{aligned}
\end{equation}
where the $\rvy^\gets_t$ follows the distribution $q^\gets_t = q_{T-t}$.
Previous work usually approximately solves the above SDE with
\begin{equation}
    \label{eq:inner_step_main}
    \small
    \begin{aligned}
        \der \hat{\rvy}_t = \Bigl(\hat{\rvy}_t + \vs_{\vtheta_{\text{dm},k+1}}\bigl(\hat{\rvy}_{t_r}\!\mid T - t_r,\vz \bigr)\Bigr)\,\der t 
                        \;+\; \sqrt{2}\,\der\mB_t.
    \end{aligned}
\end{equation}
% \dz{better to just using $d$ and $a$ rather than $df$ and $ar$ as the subscripts? }
In this formulation, the score estimator $\vs_{\vtheta_{\text{dm},k+1}}(\cdot|T - t_r,\vz)$ depending on $\vz$ is parameterized by $\vtheta_{\text{dm},k+1}$ and used to approximate $\grad\ln q_{T-t_r}(\cdot|\vx_{[1:k]})$. 
Besides, $t_r$ denotes the timestamps belonging to the set $\{t_r\}_{r=0}^R$, which partitions the mixing time $T$ of the forward process $\{\rvy_t\}_{t=0}^T$ into $R$ segments of lengths $\{\eta_r\}_{r=0}^{R-1}$.
Given these definitions, if we set 
\begin{equation*}
    \xi_{\vtheta}(\vy|t,\vz)\coloneqq - (1-e^{-2t})^{-1/2} \cdot \vs_{\vtheta}(\vy|t, \vz),
\end{equation*}
and define, at $T-t$, the loss
\begin{equation*}
    \begin{aligned}
        L_{k+1,t}(\vtheta_{\text{dm},k+1}, \vtheta_{\text{ar}}|\vx_{[1:k]})
        = \E_{\rvy_0\sim q_{0}(\cdot|\vx_{[1:k]}),\,\xi\sim \mathcal{N}(\vzero,\mI)} \\
        \Bigl[\bigl\|\xi \;-\; \xi_{\vtheta_{\text{df+1},k}}\bigl(\rvy_{T-t}\!\mid T-t,\; g_{\vtheta_{\text{ar}}}\bigl(\vx_{[1:k]}\bigr)\bigr)\bigr\|^2\Bigr],
    \end{aligned}
\end{equation*}
then \citet{li2024autoregressive} proposes the following objective for training $\vtheta_{\text{dm},k+1}$ and $\vtheta_{\text{ar}}$:
\begin{equation}
    \label{eq:obj_ard_p}
    \begin{aligned}
        L(\vtheta_{\text{dm},k+1}, \vtheta_{\text{ar}} |\vx_{[1:k]})
        = \E_{t}\Bigl[L_{k+1,t}\bigl(\vtheta_{\text{dm},k+1}, \vtheta_{\text{ar}}\mid \vx_{[1:k]}\bigr)\Bigr].
    \end{aligned}
\end{equation}
In the subsequent discussion, for simplicity, we do not explicitly distinguish different learnable parameters (e.g., $\vtheta_{\text{dm},1},\ldots, \vtheta_{\text{dm},K}, \vtheta_{\text{ar}}$) and use $\vtheta$ to represent all relevant parameters instead.  
Moreover, in handling SDEs within Alg.~\ref{alg:ard_inner} for some fixed $k$ and $\vz = g_{\vtheta_{\text{ar}}}(\vx_{[1:k]})$, we abbreviate the underlying distribution $q_t(\cdot|\vx_{[1:k]})$ by $q_t(\cdot)$.  

\paragraph{General Assumptions.} To study convergence and the gradient complexity required for achieving small total variation (TV) distance or Kullback–Leibler (KL) divergence, we assume $p_*$ satisfies:
\begin{enumerate}[label=\textbf{[A{\arabic*}]}]
    \item \label{a2} The second moment of $p_*$ is bounded, i.e.,
    \begin{equation*}
        \E_{\rvx\sim p_*}[\|\rvx\|^2] 
        \;=\;\int p_*(\vx)\,\|\vx\|^2 \,\der \vx 
        \;\le\; m_0.
    \end{equation*}

    \item \label{a1} The energy function of $p_*$ has a bounded Hessian and bounded gradient, namely
    \begin{equation*}
        \|\grad^2 \ln p_*\|\le L
        \quad\text{and}\quad 
        \|\grad \ln p_*\|\le \sqrt{L}.
    \end{equation*}
\end{enumerate}
Assumption~\ref{a2} is prevalent across most works on sampling and diffusion analysis. 
In Assumption~\ref{a1}, we only impose the Hessian upper bound on the data distribution's energy function, often referred to as a minimal smoothness requirement~\citet{chen2023improved}, which is weaker than the smoothness assumption along the entire SDE trajectory in \citet{chen2022sampling}.  
Here, we provide the upper bound of $\grad\ln p_*$ with a formulation related to $L$ only to simplify the notation.
Although, compared with previous works, an additional gradient norm upper bound is only required in our paper, it does not have any constraint on the isoperimetric property, which means the data distribution is still allowed to be general non-log-concave.

\section{Inference and Training of AR Diffusion}
\label{sec:me_ARD}
In this section, we will formalize the algorithm for AR diffusion inference from a theoretical perspective, which is simplified from \citet{li2024autoregressive}.
Then, to implement Eq.~\ref{eq:inner_step_main} in the inference, we formally provide a global objective function to train all parameters, i.e., $\vtheta$s deduced from Eq.~\ref{eq:obj_ard_p}.
In the following analysis, we suppose the global objective is well optimized, which implies the neural score estimator is accurate, and the score estimation error is small.
With all of the above knowledge, we bridge the gap between AR and typical diffusion models by formulating the inference of AR diffusion as the reverse of a stage-wise OU process.

\begin{algorithm}[t]
    \caption{\sc Simplified Autoregressive Diffusion Generation}
    \label{alg:ard_outer}
    \begin{algorithmic}[1]
            \STATE {\bfseries Input:} Number of patches $K$, mixing time $T$ for each patch, number of iterations $R$ per patch, score estimator $\vs_\vtheta$, condition generator $\vg_{\vtheta_{\text{ar}}}$
            \STATE For the mixing time $T$, define two sequences:
                \begin{equation}
                    \label{def:time_eta_seq}    
                    \begin{aligned}
                        & \{t_r\}_{r=0}^R,\quad t_0 = 0,\quad  t_R = T,\quad t_i\le t_j\ \forall\ i\le j,\\
                        & \{\eta_r\}_{r=0}^{R-1},\quad \eta_{r} = t_{r+1}-t_r. 
                    \end{aligned}
                \end{equation}
            \STATE Generate $\hat{\vx}_1$ by calling Alg.~\ref{alg:ard_inner}($0$, $\vzero$, $R$, $\{t_r\}_{r=0}^R$, $\vs_\theta$).
            \FOR{$k=1$ to $K-1$} \label{step:token_loop_starts}
                \STATE Acquire the condition for the next patch:
                \begin{equation}
                    \label{eq:conditioning_gene}
                    \vz\coloneqq g_{\vtheta_{\text{ar}}}(\hat{\vx}_{[1:k]}) \;=\; g_{\vtheta_{\text{ar}}}\bigl(\hat{\vx}_1, \hat{\vx}_2,\ldots, \hat{\vx}_k\bigr).
                \end{equation}
                \STATE \label{step: inner_calls_step}Generate the next patch $\hat{\vx}_{k+1}$ conditioned on $\vz$ by calling Alg.~\ref{alg:ard_inner}($k$, $\vz$, $R$, $\{t_r\}_{r=0}^R$, $\vs_\theta$).
            \ENDFOR \label{step:token_loop_end}
            \STATE {\bfseries return} The concatenated patches $[\hat{\vx}_1, \hat{\vx}_2,\ldots,\hat{\vx}_K]$.
    \end{algorithmic}
\end{algorithm}

\paragraph{Implementation of the inference.} We summarize the inference implementation of AR diffusion in Alg.~\ref{alg:ard_outer} where
Step~\ref{step:token_loop_starts}-\ref{step:token_loop_end} show the most essential characteristics of AR diffusion.
These steps iteratively generates each patch, i.e., $\hat{\vx}_{k+1}$, conditioning on the previous, i.e., $\hat{\vx}_{[1:k]}$ or their compressions, i.e., $\vz= g_{\vtheta_{\text{ar}}}(\hat{\vx}_{[1:k]})$.
The specific process for generating $\hat{\vx}_{k+1}$ is shown in Alg.~\ref{alg:ard_inner} which is similar to typical diffusion inference for recovering $q_{0}(\cdot|\hat{\vx}_{[1:k]})$ from standard Gaussian.
The main difference is Alg.~\ref{alg:ard_inner} introduces a non-uniform division $ \{t_r\}_{r=0}^R$ in Eq.~\ref{def:time_eta_seq} for the reverse process, which is an important trick to remove the score smoothness of the SDE trajectories in \citet{chen2022sampling}.

\begin{algorithm}[t]
    \caption{\sc Patch Inference Process under Given Conditions}
    \label{alg:ard_inner}
    \begin{algorithmic}[1]
            \STATE {\bfseries Input:} Patch index $k$, latent vector $\vz$ (conditions), number of iterations $R$, time sequence $\{t_r\}_{r=0}^R$, score estimator $\vs_\vtheta$
            \STATE Draw an initial sample $\hat{\rvy}_0 \sim \mathcal{N}(\vzero, \mI_{d_{k+1}})$.
            \FOR{$r=R-1$ {\bfseries downto} $0$}
                \STATE \label{step:inner_sampler} Use an exponential integrator to simulate the SDE:
                \begin{equation}
                    \small
                    \label{sde:prac_condi_reverse}
                    \begin{aligned}
                        & \hat{\rvy}_{t_{r+1}} = e^{\eta_r}\hat{\rvy}_{t_r} + (e^{\eta_r}-1)\cdot 2\vs_{\vtheta_{\text{df},k+1}}(\hat{\rvy}_{t_r}\!\mid t_R - t_r,\vz ) \\
                        & + \sqrt{e^{2\eta_r}-1}\cdot \xi \quad \text{where}\quad  \xi\sim\mathcal{N}(\vzero,\mI)
                    \end{aligned}
                \end{equation}
                for $t\in(t_r,\,t_{r+1}]$.
            \ENDFOR
            \STATE {\bfseries return} $\hat{\rvy}_{t_R}$.
    \end{algorithmic}
\end{algorithm}

\textbf{Score estimation and training loss.} To implement Alg.~\ref{alg:ard_inner}, the core step Eq.~\ref{sde:prac_condi_reverse} is based on the well-trained neural score estimator, i.e., $\vs_{\vtheta_{\text{df},k+1}}(\cdot\!\mid t_R - t_r,\vz ) $ for any $k$ and $r$.
To integrate all trainable parameters, we deduce a global objective accounting for distributions over $k$, $\vz$ (or $\vx_{[1:k]}$), $t$, and $\rvy_0$ from Eq.~\ref{eq:obj_ard_p}.
% \dz{not quite clear why the previous paper cannot train all variables? If they do not train these parameters, how can they implement the entire AR diffusion model?}
\begin{itemize}[leftmargin=*,nosep]
    \item A convenient choice for the distribution of $k$ is uniform sampling from $\{1,2,\ldots,K\}$.
    \item To estimate the expectation of $ L(\vtheta_{\text{dm},k+1}, \vtheta_{\text{ar}} |\rvx_{[1:k]})$ with the random variable $\rvz$ (or $\rvx_{[1:k]}$), we can let $\rvx_{[1:k]}\sim p_{*,[1:k]}$. In practice, the underlying distribution can be approximated by 
    \begin{equation}
        \label{aprx:data_margin_training_set}
        p_{*,[1:k]}(\vx_{[1:k]}) \approx \frac{1}{U}\sum_{i=1}^U \delta_{\vu^{(i)}_{[1:k]}}(\vx_{[1:k]}),
    \end{equation}
    where $\vu^{(i)}$ is a ground-truth sample from the training set of size $U$, and $\delta$ denotes a Dirac measure.
    \item To build $L_{k+1,t}\bigl(\vtheta_{\text{dm},k+1}, \vtheta_{\text{ar}}\mid \vx_{[1:k]}\bigr)$ in Eq.~\ref{eq:obj_ard_p}, we need samples $\rvy_{t}$ for each $t\in[0,T]$. Since $\{\rvy_t\}_{t=0}^T$ follows an OU process, it suffices to draw $\rvy_0$ when we set $q_0(\cdot|\vx_{[1:k]}) = p_{*, k+1|[1:k]}(\cdot|\vx_{[1:k]})$ and approximate RHS of the equation with
    \begin{equation*}
        \small
        \begin{aligned}
            p_{*,k+1|[1:k]}(\vx_{k+1}|\vx_{[1:k]}) 
        \approx \frac{\sum_{i=1}^U \delta_{\vu^{(i)}_{[1:k+1]}}(\vx_{[1:k]},\vx_{k+1})}{\sum_{i=1}^U \delta_{\vu^{(i)}_{[1:k]}}(\vx_{[1:k]})}.
        \end{aligned}
    \end{equation*}
    \item We set the distribution of $t$ to be uniform over the set $\{T-t_0, T-t_1, \ldots, T-t_{R-1}\}$ for ease of implementation.
\end{itemize}
Under these settings, let the conditional denoising score-matching loss be written as
\begin{equation}
    \small
    \label{eq:ard_global_loss_each}
    \begin{aligned}
        &L^{\mathrm{DSM}}_{k+1,r}(\vtheta|\vx_{[1:k]})
        \coloneqq  \E_{\rvy_0\sim p_{*,k+1|[1:k]}(\cdot|\vx_{[1:k]}),\,\xi\sim \mathcal{N}(\vzero,\mI)} \\
        &\qquad \Bigl[\bigl\|\xi - \xi_{\vtheta}\bigl(\rvy_{T-t_r}\!\mid T-t_r,\; g_{\vtheta}\bigr(\vx_{[1:k]}\bigr)\bigr)\bigr\|^2\Bigr],
    \end{aligned}
\end{equation}
then the global objective becomes
\begin{equation}
    \small
    \label{eq:ard_global_loss}
    \begin{aligned}
        L^{\mathrm{DSM}}(\vtheta) \coloneqq \frac{1}{KR}\sum_{k=1}^K \sum_{r=0}^{R-1}\E_{\rvx_{[1:K]}\sim p_*}\Bigl[L^{\mathrm{DSM}}_{k, t_r}(\vtheta |\rvx_{[1:k-1]})\Bigr].
    \end{aligned}
\end{equation}
Here, we slightly abuse notation because $\vx_{[1:0]}$ is undefined. In fact, Alg.~\ref{alg:ard_outer} shows that generating $\rvx_1$ is unconditional, so we do not need $p_{[1:0]}$, and $p_{*,1|[1:0]}$ only needs to match $p_{*,1}$.
Actually, Eq.~\ref{eq:ard_global_loss_each} and Eq.~\ref{eq:ard_global_loss} are formulated to implement the training by fitting the noise.
While in analysis, the conditional score-matching loss formulated as follows 
\begin{equation*}
    \small
    \begin{aligned}
        & L^{\mathrm{SM}}_{k+1,r}(\vtheta|\vx_{[1:k]})
        = \E_{\rvy_0\sim p_{*,k+1|[1:k]}(\cdot|\vx_{[1:k]}),\,\xi\sim \mathcal{N}(\vzero,\mI)}\\
        &\qquad \Bigl[\bigl\|\vs_{\vtheta}\bigl(\rvy^\prime\!\mid T-t_r,g_{\vtheta}(\vx_{[1:k]})\bigr) - \grad\ln q_{T-t_r}\bigl(\rvy^\prime\!\mid\vx_{[1:k]}\bigr)\bigr\|^2\Bigr],
    \end{aligned}
\end{equation*}
will be more concerned about where $\rvy^\prime$ satisfies
\begin{equation*}
    \rvy^\prime = e^{-(T-t_r)}\cdot \rvy_0 + \sqrt{1-e^{-2(T-t_r)}}\cdot \xi.
\end{equation*}
Corresponding to Eq.~\ref{eq:ard_global_loss}, we consider a global score-matching loss formulated as
\begin{equation}
    \small
    \label{eq:ard_global_matching}
    \begin{aligned}
        L^{\mathrm{SM}}(\vtheta) &= 
    \frac{1}{KR}\sum_{k=1}^K \sum_{r=0}^{R-1}\E_{\rvx_{[1:k-1]}\sim p_{*,[1:k-1]}} \\
            &\quad\cdot\bigl[ (1-e^{-2(T-t_r)})\cdot {L}^{\mathrm{SM}}_{k,r}(\vtheta|\rvx_{[1:k-1]})\bigr].
    \end{aligned}
\end{equation}
Compared with with Eq.~\ref{eq:ard_global_loss}, we may note the weight of $L^{\mathrm{SM}}_{k,r}(\vtheta|\rvx_{[1:k-1]})$ is not uniformed.
While the additional factor $(1-e^{-2(T-t_r)})$ will be canceled by a different choice of $t$'s distribution.
For example, we can sample from $\{\hat{t}_r\}$ defined as
\begin{equation*}
    \small
    \begin{aligned}
        \{\hat{t}_r\}_{r=0}^R\coloneqq \{T-t_r\}_{r=0}^R\quad\text{with}\quad \mathrm{Pr}(r) = \frac{1-e^{-2\hat{t}_r}}{\sum_{\tau=0}^R (1-e^{-2\hat{t}_\tau})}.
    \end{aligned}
\end{equation*}
Under these settings, AR diffusions have the following lemma whose proof is deferred to Appendix~\ref{sec:app_probsetting}.
\begin{lemma}
    \label{lem:training_loss_equ_main}
    Following from the notations of Section~\ref{sec:me_ARD}, for any $\vtheta\in \mathrm{dom}(L^{\mathrm{DSM}})$, it holds that
    \begin{equation*}
        \begin{aligned}
            \grad_{\vtheta} L^{\mathrm{DSM}}(\vtheta) = \grad_{\vtheta} L^{\mathrm{SM}}(\vtheta).
        \end{aligned}
    \end{equation*}
\end{lemma}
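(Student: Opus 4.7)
The plan is to reduce the statement, for each fixed $(k,r)$ and each condition $\vx_{[1:k-1]}$, to the classical equivalence between denoising and explicit score matching (Vincent, 2011) applied to the conditional distribution $q_0(\cdot\mid\vx_{[1:k-1]}) = p_{*,k\mid[1:k-1]}(\cdot\mid\vx_{[1:k-1]})$, and then to reassemble the global losses. Write $s \coloneqq T-t_r$ and $a \coloneqq \sqrt{1-e^{-2s}}$, so that the forward OU coupling $\rvy_s = e^{-s}\rvy_0 + a\xi$ gives $\grad_{\rvy}\ln q_s(\rvy_s\mid\rvy_0) = -\xi/a$ along the coupling. Substituting the linear $\xi_\vtheta \leftrightarrow \vs_\vtheta$ parameterization into $\|\xi - \xi_\vtheta\|^2$ and collecting the resulting prefactor rewrites the DSM integrand as $a^2$ times the squared difference between $\vs_\vtheta$ and the conditional score $\grad\ln q_s(\rvy_s\mid\rvy_0)$, so that
\begin{equation*}
L^{\mathrm{DSM}}_{k,r}(\vtheta\mid\vx_{[1:k-1]}) = (1-e^{-2s})\,\E_{\rvy_0\sim q_0,\,\xi}\bigl[\bigl\|\vs_\vtheta(\rvy_s\mid s, g_\vtheta(\vx_{[1:k-1]})) - \grad\ln q_s(\rvy_s\mid\rvy_0)\bigr\|^2\bigr].
\end{equation*}

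Next I would invoke the Vincent identity: for any (possibly $\vtheta$-dependent) test field $h_\vtheta$,
\begin{equation*}
\E_{\rvy_0\sim q_0,\,\xi}\bigl[\|h_\vtheta(\rvy_s) - \grad\ln q_s(\rvy_s\mid\rvy_0)\|^2\bigr] = \E_{\rvy_s\sim q_s}\bigl[\|h_\vtheta(\rvy_s) - \grad\ln q_s(\rvy_s)\|^2\bigr] + C,
\end{equation*}
where $C = \E_{\rvy_0,\xi}[\|\grad\ln q_s(\rvy_s\mid\rvy_0)\|^2] - \E_{\rvy_s}[\|\grad\ln q_s(\rvy_s)\|^2]$ depends only on $q_0$ and $s$, hence is independent of $\vtheta$. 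The identity follows from the marginalization relation $\int q_0(\rvy_0)\,q_s(\rvy_s\mid\rvy_0)\,\grad\ln q_s(\rvy_s\mid\rvy_0)\,\der\rvy_0 = q_s(\rvy_s)\,\grad\ln q_s(\rvy_s)$ after expanding the squares. Applying it with $h_\vtheta(\cdot) = \vs_\vtheta(\cdot\mid s, g_\vtheta(\vx_{[1:k-1]}))$ yields
\begin{equation*}
L^{\mathrm{DSM}}_{k,r}(\vtheta\mid\vx_{[1:k-1]}) = (1-e^{-2s})\,L^{\mathrm{SM}}_{k,r}(\vtheta\mid\vx_{[1:k-1]}) + (1-e^{-2s})\,C(\vx_{[1:k-1]},k,r),
\end{equation*}
with the trailing term $\vtheta$-free. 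Differentiating in $\vtheta$ kills it. Because $L^{\mathrm{DSM}}_{k,r}(\vtheta\mid\rvx_{[1:k-1]})$ depends on $\rvx$ only through $\rvx_{[1:k-1]}$, the outer expectation $\E_{\rvx\sim p_*}$ reduces to $\E_{\rvx_{[1:k-1]}\sim p_{*,[1:k-1]}}$; averaging over $k$ and $r$ then exactly matches the definitions of $L^{\mathrm{DSM}}(\vtheta)$ in Eq.~\ref{eq:ard_global_loss} and $L^{\mathrm{SM}}(\vtheta)$ in Eq.~\ref{eq:ard_global_matching}, yielding $\grad_\vtheta L^{\mathrm{DSM}}(\vtheta) = \grad_\vtheta L^{\mathrm{SM}}(\vtheta)$.

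The one subtlety I expect is that the conditioning argument of $\vs_\vtheta$ passes through the autoregressive encoder $g_{\vtheta_{\mathrm{ar}}}$, so $\vs_\vtheta$ depends on $\vtheta$ both through the score parameters $\vtheta_{\mathrm{dm},k}$ and through $\vz = g_\vtheta(\vx_{[1:k-1]})$. This is not an obstacle for the Vincent identity, which holds pointwise in $\vtheta$ for any test field regardless of how that field depends on $\vtheta$, but it does require swapping $\grad_\vtheta$ past the expectations over $(\rvy_0,\xi)$ and over $\rvx_{[1:k-1]}$; dominated convergence under mild smoothness and integrability of $\vs_\vtheta$ and $g_\vtheta$ suffices for this exchange. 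The boundary case $k=1$, where $\vx_{[1:0]}$ is empty and $p_{*,1\mid[1:0]} \equiv p_{*,1}$, is handled by the same argument with the conditioning erased.
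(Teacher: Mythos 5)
Your proof is correct and takes essentially the same route as the paper's: both establish that $L^{\mathrm{DSM}}(\vtheta) = L^{\mathrm{SM}}(\vtheta) + C$ for a $\vtheta$-independent constant $C$ and then differentiate, with the $\xi_\vtheta \leftrightarrow \vs_\vtheta$ reparameterization producing the weight $(1-e^{-2(T-t_r)})$ and the integration-by-parts/marginalization step eliminating the $\vtheta$-dependence of the cross term. The only difference is cosmetic: you package that step as Vincent's identity, whereas the paper carries out the integration by parts on $\E[\langle \vs_\vtheta, \xi\rangle]$ explicitly; your explicit remark about $\vs_\vtheta$ depending on $\vtheta$ also through $g_\vtheta$ is a nice clarification that the paper leaves implicit.
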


% \dz{May I confirm whether the above lemma is saying that denoising score matching can be related to score matching?}
\begin{remark}
    This lemma explicitly shows that minimizing the global objective and the global score matching in AR diffusion with a gradient-based optimizer is equivalent.
    When
    the objective, i.e., Eq.~\ref{eq:ard_global_loss}, is well optimized, we can expect to have a highly accurate score estimation.
    Then, it is reasonable for us to propose the following assumption.
\end{remark}

% Consequently, if Eq.~\ref{eq:ard_global_loss} is optimized to an $O(\epsilon^2_{\text{score}})$ level, we adopt the following assumption:
\begin{enumerate}[label=\textbf{[A{\arabic*}]}]
    \setcounter{enumi}{2}
    \item \label{a3} The score training error satisfies
    \begin{equation*}
        \begin{aligned}
             \frac{1}{KR}\sum_{k=1}^K\sum_{r=0}^{R-1}  \E_{p_{*,[1:k-1]}} 
            \bigl[L^{\mathrm{SM}}_{k,r}(\vtheta|\rvx_{[1:k-1]})\bigr]\;\le\;\epsilon^2_{\text{score}}.
        \end{aligned}
    \end{equation*}
\end{enumerate}
Early studies often require $L^\infty$-accurate score estimation \citep{de2021diffusion, de2022convergence}, but yield exponentially dependent complexity bounds.
Although \citet{chen2022sampling} weakened the score estimation requirement to $L_2$-accuracy and achieved a polynomial complexity for convergence, they imposed a score smoothness assumption over the entire forward OU process.
Subsequently, \citet{chen2023improved,benton2024nearly} retained the minimal score estimation assumptions and replaced the forward process smoothness with the data distribution smoothness without any gradient complexity degrades.
After that, most theoretical works to accelerate the complexity, e.g.,~\citep{li2024d,li2024provable}, are based on these minimal assumptions.
However, unlike typical diffusion models, the forward and reverse processes of AR diffusions cannot be unified into a single SDE.
What score estimate error is sufficient to provide provable convergence remains unknown.

We answer this question with \ref{a3}, which is analogous or even milder than $L_2$ score estimation error required in non-AR diffusion analyses.
Specifically, each term in the summation over $k$ can be viewed as a $L^2$-accurate score estimation.
Rather than demanding that each component individually satisfy an $O(\epsilon_{\mathrm{score}})$ upper bound, AR diffusion only requires that the average of these components be small.
From this viewpoint, we regard \ref{a3} as one of the mildest requirements for score estimation, especially compared to the $L^2$-accurate score estimation typically assumed in non-AR analyses.

% \dz{add some discussion on the assumption and mention some comparison to other assumptions. Also we can also mention that detailed discussion will be left in further lemmas. Reviewers would like to see some direct discussion right after assumption.}

\paragraph{Forward and reverse processes.}
In typical diffusion models, the inference process is implemented by simulating a reverse OU SDE with an exponential integrator.
By contrast, AR diffusion (Alg.~\ref{alg:ard_outer}) introduces an inference paradigm that generates each patch separately and concatenates them to reconstruct the entire sample.
We now explain that this algorithm corresponds to reversing a stage-wise forward OU process.

Specifically, consider a forward process with $K$ (the number of patches) stages, each stage proceeds as follows:
\begin{enumerate}[leftmargin=*]
    \item For Stage $k$, we consider a random process initialized by the distribution
    \begin{equation*}
        p_{*,[1:K-k+1]} = p_{*,[1:K-k]}\cdot p_{*, K-k+1|[1:K-k]}.
    \end{equation*}
    \item Given the random variable $\rvy_t$ implemented as in Eq.~\ref{sde:ideal_condi_forward}, we have
    \begin{equation*}
        \{[\rvx_{[1:K-k]}, \rvy_t]\} \sim  p_{*,[1:K-k]}\cdot q_t,
    \end{equation*}
    where $q_0 = p_{*, K-k+1|[1:K-k]}$.
    \item Since $q_T \rightarrow \mathcal{N}(\vzero,\mI)$ as $T \rightarrow \infty$, we can approximate the underlying distribution of $\{[\rvx_{[1:K-k]}, \rvy_T]\}$ by
    \begin{equation*}
          p_{*,[1:K-k]}\cdot q_T \approx   p_{*,[1:K-k]}\cdot \varphi_1.
    \end{equation*}
\end{enumerate}
After $K$ recursions, the entire forward process converges to a product of standard Gaussians with different dimensions.

Owing to the stage-wise nature of this forward OU process, one cannot apply Doob's $h$-transform to derive a continuous reverse process. 
Nonetheless, Alg.~\ref{alg:ard_outer} effectively implements a stage-wise reverse and recovers the data distribution, whose underlying mechanism proceeds as follows.
\begin{enumerate}[leftmargin=*]
    \item For Stage $k$, suppose we have sample $\hat{\rvx}_{1:k-1}$ whose underlying distribution satisfies $\hat{p}_{[1:K-1]} \approx p_{*,[1:K-1]}$.
    \item Alg.~\ref{alg:ard_inner} can approximately reverse Eq.~\ref{sde:ideal_condi_forward} and obtain a random variable $\hat{\rvy}$ satisfying 
    \begin{equation*}
        \hat{\rvy}\sim \hat{p}_{*, K|K-1}(\cdot|\hat{\rvx}_{1:k-1})\approx p_{*,k|[1:k-1]}(\cdot|\hat{\vx}_{[1:k-1]})
    \end{equation*}
    \item Concat all generated variables, i.e., $\hat{\rvx}_{[1:k]} = [\hat{\rvx}_{1:k-1}, \hat{\rvy}]$ and use it as the conditioning of the next stage.
\end{enumerate}

Consider that AR diffusion models use \( p_{*,k+1|[1:k]}(\cdot|\vx_{[1:k]}) \) as the initial distribution at each stage of the forward process, we expect \( p_{*,k+1|[1:k]}(\cdot|\vx_{[1:k]}) \) to exhibit the same theoretical properties typically assumed for data distributions in standard diffusion models.
\begin{lemma}
    \label{lem:init_second_moment_bounded_main}
    Suppose Assumption~\ref{a2} holds. Then, for any $K>k^\prime \ge 1$, we have
    \begin{equation*}
        \small
        \begin{aligned}
            &\sum_{k=0}^{k^\prime} \E_{\rvx_{[1:k]}\sim p_{*,[1:k]}}\!\Bigl[\E_{\rvy \sim p_{*,k+1|[1:k]}(\cdot|\rvx_{[1:k]})}\!\bigl[\|\rvy\|^2\bigr]\Bigr]\;\le\; m_0.\\
            &\quad \text{and}\quad \E_{\rvx_{[1:k^\prime]}\sim p_{*,[1:k^\prime]}}\![\|\rvx_{1:k^\prime}\|^2]\;\le\; m_0.
        \end{aligned}
    \end{equation*}
\end{lemma}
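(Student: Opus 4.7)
The plan is to reduce both inequalities to the bounded-second-moment assumption \ref{a2} by exploiting (i) the chain-rule factorization $p_{*,[1:k+1]} = p_{*,[1:k]} \cdot p_{*,k+1|[1:k]}$ from Eq.~\ref{eq:conditional_def}, and (ii) the fact that the patches $(\vx_1,\ldots,\vx_K)$ partition the coordinates of $\vx$, so $\|\vx\|^2 = \sum_{j=1}^K \|\vx_j\|^2$.

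First I would handle a single term in the sum. By the tower property of conditional expectations together with the factorization above,
\begin{equation*}
\E_{\rvx_{[1:k]}\sim p_{*,[1:k]}}\!\Bigl[\E_{\rvy \sim p_{*,k+1|[1:k]}(\cdot|\rvx_{[1:k]})}\!\bigl[\|\rvy\|^2\bigr]\Bigr] \;=\; \E_{\rvx \sim p_*}\bigl[\|\rvx_{k+1}\|^2\bigr],
\end{equation*}
since the inner integral against the conditional and the outer integral against the marginal combine into an integral against $p_*$, where $\|\rvx_{k+1}\|^2$ depends only on patch $k+1$. The $k=0$ term is well defined via the convention (stated right after Eq.~\ref{eq:ard_global_loss}) that $p_{*,1|[1:0]}$ coincides with $p_{*,1}$, giving $\E_{\rvy \sim p_{*,1}}[\|\rvy\|^2] = \E_{\rvx\sim p_*}[\|\rvx_1\|^2]$.

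Next I would sum $k$ from $0$ to $k'$ and invoke the orthogonal decomposition of $\|\rvx\|^2$ across patches:
\begin{equation*}
\sum_{k=0}^{k'} \E_{\rvx \sim p_*}\bigl[\|\rvx_{k+1}\|^2\bigr] \;=\; \E_{\rvx \sim p_*}\Bigl[\sum_{j=1}^{k'+1}\|\rvx_j\|^2\Bigr] \;\le\; \E_{\rvx\sim p_*}\bigl[\|\rvx\|^2\bigr] \;\le\; m_0,
\end{equation*}
where the last inequality uses Assumption~\ref{a2}. This establishes the first claim. For the second claim, marginalization yields $\E_{\rvx_{[1:k']}\sim p_{*,[1:k']}}[\|\rvx_{[1:k']}\|^2] = \E_{\rvx\sim p_*}[\sum_{j=1}^{k'}\|\rvx_j\|^2]$, and again this is upper bounded by $\E_{\rvx\sim p_*}[\|\rvx\|^2]\le m_0$.

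There is no real obstacle here: both statements are direct consequences of the tower property and the orthogonal patch decomposition, with the only care needed being the book-keeping of the empty index set at $k=0$ and the convention on $p_{*,1|[1:0]}$. I expect the proof in the appendix to be only a few lines.
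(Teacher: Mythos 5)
Your proposal is correct and follows essentially the same route as the paper's Appendix proof (Lemma~\ref{lem:init_second_moment_bounded}): both collapse the nested conditional/marginal expectation into a single integral against $p_*$ and then use the coordinate decomposition $\|\vx\|^2=\sum_j\|\vx_j\|^2$ together with Assumption~\ref{a2}. The only cosmetic difference is that the paper writes out the integrals explicitly and proves the first claim for the full sum up to $K-1$ (yielding equality with $m_0$), whereas you phrase the reduction via the tower property and bound the partial sum directly; both are fine.
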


\begin{lemma}
    \label{lem:init_smoothness_bound_main}
    Suppose Assumption~\ref{a1} holds. For any $k>1$, any $\vx,\vx^\prime\in \R^{d_k}$, and any $\vy\in \R^{d_1+d_2+\ldots+d_{k-1}}$, we have
    \begin{equation*}
        \Bigl\|\grad\ln \frac{p_{*, k|[1:k-1]}(\vx|\vy)}{p_{*, [k|[1:k-1]]}(\vx^\prime|\vy)} \Bigr\|
        \;\le\; 2L\,\|\vx - \vx^\prime\|.
    \end{equation*}
    Moreover, we have $\|\grad^2 \ln p_{*,[1:1]}(\cdot)\|\le 2L$.
\end{lemma}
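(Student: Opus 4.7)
Since the denominator $p_{*,[1:k-1]}(\vy)$ does not depend on $\vx$, the bound
\[
\Bigl\|\grad_\vx \ln p_{*,k|[1:k-1]}(\vx|\vy)-\grad_\vx \ln p_{*,k|[1:k-1]}(\vx'|\vy)\Bigr\|\;\le\;2L\,\|\vx-\vx'\|
\]
is equivalent to showing $\|\grad_\vx^2 \ln p_{*,[1:k]}(\vy,\vx)\|\le 2L$ uniformly in $(\vy,\vx)$ and then integrating via the mean value theorem. Hence the plan is to bound the Hessian of the log-marginal of $p_*$ after integrating out the tail block $\vz\coloneqq \vx_{[k+1:K]}$, using only the two hypotheses of \ref{a1}: a spectral bound of $L$ on $\grad^2\ln p_*$ and a norm bound of $\sqrt{L}$ on $\grad\ln p_*$.

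\textbf{Key identity.} First I would write the marginal as the integral $p_{*,[1:k]}(\vy,\vx)=\int p_*(\vy,\vx,\vz)\,\der\vz$ and apply the standard score identity
\[
\grad_\vx \ln p_{*,[1:k]}(\vy,\vx)\;=\;\E_{\rvz\sim q(\cdot|\vy,\vx)}\bigl[\grad_\vx \ln p_*(\vy,\vx,\rvz)\bigr],
\]
where $q(\vz|\vy,\vx)\propto p_*(\vy,\vx,\vz)$ is the conditional of the tail. Differentiating once more in $\vx$, the chain rule (using $\grad_\vx\ln q(\vz|\vy,\vx)=\grad_\vx\ln p_*(\vy,\vx,\vz)-\grad_\vx\ln p_{*,[1:k]}(\vy,\vx)$) yields the clean decomposition
\[
\grad_\vx^2 \ln p_{*,[1:k]}(\vy,\vx)\;=\;\E_{q}\bigl[\grad_\vx^2 \ln p_*(\vy,\vx,\rvz)\bigr]\;+\;\Cov_{q}\bigl[\grad_\vx \ln p_*(\vy,\vx,\rvz)\bigr].
\]

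\textbf{Applying \ref{a1} and wrap up.} The first term is bounded in spectral norm by $L$ since $\|\grad^2\ln p_*\|\le L$ implies $\|\grad_\vx^2\ln p_*\|\le L$. For the second, I use $\|\Cov_q[\rvw]\|\le \E_q[\|\rvw\|^2]$, and then $\|\grad_\vx\ln p_*\|\le\|\grad \ln p_*\|\le\sqrt{L}$ gives the second $L$. Adding the two gives $\|\grad_\vx^2\ln p_{*,[1:k]}(\vy,\vx)\|\le 2L$, and the claimed Lipschitz gradient bound follows along any line segment connecting $\vx$ and $\vx'$. The ``moreover'' statement is the special case $k=1$ with $\vy$ empty: the same Hessian decomposition gives $\|\grad^2\ln p_{*,[1:1]}\|\le 2L$ after integrating out $\vx_{[2:K]}$.

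\textbf{Expected obstacle.} The main subtlety is justifying the interchange of differentiation and integration needed to move $\grad_\vx$ inside the marginalization integral twice, and to differentiate the expectation under the changing measure $q(\cdot|\vy,\vx)$; this requires integrability controls that follow from \ref{a1} combined with \ref{a2} (the gradient is uniformly bounded, and the density $p_*$ decays since $\|\grad\ln p_*\|\le\sqrt L$ forces only mild tail behavior, while the second-moment bound in \ref{a2} and the Gaussian-tail implications of a bounded Hessian provide dominated convergence). Once these regularity conditions are dispatched via standard dominated-convergence arguments, the rest of the proof is the algebraic manipulation above, and the constant $2L$ is tight in the sense that it transparently splits into one $L$ from the pointwise Hessian bound and one $L$ from the covariance of the conditional score.
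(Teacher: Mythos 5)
Your proposal is correct and matches the paper's own proof in all essentials: both reduce the conditional score to the marginal score (since the normalizer is independent of $\vx$), decompose $\grad_\vx^2 \ln p_{*,[1:k]}$ into the expected Hessian plus the covariance of the conditional score, and bound each piece by $L$ using the two halves of Assumption~[A2]. The only cosmetic difference is that the paper bounds the full Hessian of the marginal and then projects to the $\vx_k$ block, whereas you bound the $\vx_k$-block Hessian directly; you are also more careful than the paper in flagging the differentiation-under-the-integral step.
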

\begin{remark}
    Compared with the second-moment bound assumed in typical diffusion analyses (e.g., \citet{chen2022sampling, chen2023improved}), there is no uniform second-moment bound on the initial distributions for all stages in the AR diffusion setting.
    Hence, we require adaptive convergence for different \( p_{*,k+1|[1:k]} \), then removing particle dependence by taking the expectation.
    Moreover, the score smoothness condition (Lemma~\ref{lem:init_smoothness_bound_main}) is satisfied by the initial distributions for all stages, which aligns with the score smoothness requirements on data distributions in \citet{chen2023improved}.
\end{remark}

\section{Theoretical Guarantees for AR Diffusion}

% In this section, we will first provide the main theorem on the convergence of AR diffusion inference and compare it with that in typical diffusion.
% Then, we explain that the AR diffusion model may have better performance because the gap between the conditional distributions of generated and ground truth data will blow up in typical DDPM when some tokens are drawn from a low-density area. \dz{this is confusing, tokens drawn from a low-density area? this looks like you are doing finite-sample training ..}
% After that, we will provide the proof sketch and core lemmas.

In this section we provide the theoretical analyses of the AR diffusion models.
We first analyze the efficacy of AR diffusion in capturing the conditional dependence structures in the data, and compare against the vanilla diffusion models.
We then analyze in general the inference and training performance of AR diffusion.
We demonstrate that, compared with typical DDPM, its gradient complexity increases by a factor of $K$ (the number of data patches) during the inference time, but it is practical for large-scale applications.

\subsection{AR diffusion captures conditional dependence}
% Besides, the gap between the conditional distributions of generated and ground truth data is given as follows.
We first provide a lemma on the convergence of the AR diffusion-generated conditional distributions towards the ground truth data distribution.
We then provide another lemma to contrast it against the distributions generated by vanilla diffusion models.
\begin{lemma}
    \label{lem:condi_rtk_error_eachk_main}
    For any $k\ge 1$, for any $k$-tuples $\vx_{[1:k]}\in\R^{d_1+d_2+\ldots+d_k}$, we consider the SDE.~\ref{sde:prac_condi_reverse} to simulate the reverse process of SDE.~\ref{sde:ideal_condi_forward}, with a proper design of the time sequence $\{t_r\}_{r=0}^R$,
    we have
    \begin{equation*}
        \begin{aligned}
            & \KL{p_{*,k+1|[1:k]}(\cdot|\vx_{[1:k]})}{\hat{p}_{*,k+1|[1:k]}(\cdot|\vx_{[1:k]})} \lesssim e^{-2T}\cdot \left(2Ld_{k+1} + \E_{p_{*,k+1|[1:k]}(\cdot|\vx_{[1:k]})}\left[\|\rvy\|^2\right] \right)\\
            & \quad + \eta\cdot \sum_{r=0}^{R-1} \tilde{L}_{k+1,r}(\vtheta|\vx_{[1:k]}) +  d_{k+1}L^2R\eta^2 + d_{k+1}T\eta+ \eta \E_{p_{*,k+1|[1:k]}(\cdot|\vx_{[1:k]})}\left[\|\rvy\|^2\right].
        \end{aligned}
    \end{equation*}
\end{lemma}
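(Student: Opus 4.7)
The plan is to invoke Girsanov's theorem on the path measures of the true conditional reverse OU SDE (\ref{sde:ideal_condi_reverse}) and the practical exponential-integrator simulation (\ref{sde:prac_condi_reverse}), and then apply the data processing inequality. Since the two processes share the same Brownian motion and differ only through the drift and initialization, the terminal KL $\KL{p_{*,k+1|[1:k]}(\cdot|\vx_{[1:k]})}{\hat{p}_{*,k+1|[1:k]}(\cdot|\vx_{[1:k]})}$ splits cleanly into (i) an initialization error $\KL{q_T(\cdot|\vx_{[1:k]})}{\varphi_1}$, (ii) an accumulated score-estimation error along the $R$ discrete times, and (iii) a time-discretization error from freezing the score at $t_r$ across the step $[t_r,t_{r+1}]$. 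Each term will be matched to one of the three summands on the right-hand side.

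For the initialization term, I would use the fact that the forward dynamics SDE~\ref{sde:ideal_condi_forward} is an OU process with stationary distribution $\varphi_1$. By Lemma~\ref{lem:init_smoothness_bound_main}, the initial distribution $p_{*,k+1|[1:k]}(\cdot|\vx_{[1:k]})$ is $O(L)$-smooth in the $\vx_{k+1}$ coordinate for any fixed $\vx_{[1:k]}$, and by Lemma~\ref{lem:init_second_moment_bounded_main} its second moment is bounded in expectation over the conditioning. Combining the exponential decay of the OU semigroup along the Fisher information (or equivalently the standard log-Sobolev-based interpolation argument used in Benton et al.~2024) with these two ingredients yields the contribution $e^{-2T}\bigl(2Ld_{k+1}+\E_{p_{*,k+1|[1:k]}(\cdot|\vx_{[1:k]})}[\|\rvy\|^2]\bigr)$.

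For the Girsanov part, I would decompose the drift gap at time $t\in[t_r,t_{r+1}]$ as
\[
\grad\ln q_{T-t}\bigl(\rvy_t^{\gets}\big|\vx_{[1:k]}\bigr)-\vs_{\vtheta}\bigl(\hat{\rvy}_{t_r}\big|T-t_r,\vz\bigr)=\underbrace{\grad\ln q_{T-t_r}(\hat{\rvy}_{t_r})-\vs_{\vtheta}(\hat{\rvy}_{t_r}|T-t_r,\vz)}_{\text{score error}}+\underbrace{\grad\ln q_{T-t}(\rvy_t^{\gets})-\grad\ln q_{T-t_r}(\hat{\rvy}_{t_r})}_{\text{discretization}}.
\]
Squaring, taking expectations, and integrating, the score-error piece contributes $\eta_r\tilde{L}_{k+1,r}(\vtheta|\vx_{[1:k]})$ per step, which sums to the $\eta\sum_{r=0}^{R-1}\tilde{L}_{k+1,r}(\vtheta|\vx_{[1:k]})$ term. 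For the discretization piece I would apply Itô's formula to $\grad\ln q_{T-s}$ along the true reverse SDE: the spatial variation is controlled via the Hessian bound $\|\grad^2\ln q_{T-s}\|\lesssim L$ from Lemma~\ref{lem:init_smoothness_bound_main} together with the standard interpolation-gain factor for the OU forward process, producing $d_{k+1}L^2\eta_r^2$ per step and thus $d_{k+1}L^2R\eta^2$ after summation; the Brownian quadratic variation on the step contributes $d_{k+1}\eta_r$, summing to $d_{k+1}T\eta$.

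The main obstacle will be the behaviour of $\grad\ln q_{T-t}$ near $t=T$ (i.e.\ reverse time approaching $0$), where the score scales like $(1-e^{-2s})^{-1}\rvy_s$ and so blows up; this is exactly the singularity that forces the non-uniform grid $\{t_r\}$ defined in Eq.~\ref{def:time_eta_seq}. I would handle it by splitting the horizon into an early dense segment (small $s=T-t$, with $\eta_r$ chosen proportional to $s$, in the spirit of Benton et al.~2024) and a late segment with step size $\eta$. On the early segment, a change of variables $\rvy_s=e^{-s}\rvy_0+\sqrt{1-e^{-2s}}\,\xi$ allows the score's singular factor to be absorbed by $s$-weighted integration and a conditioning on $\rvy_0\sim p_{*,k+1|[1:k]}(\cdot|\vx_{[1:k]})$; what survives is the residual $\eta\,\E_{p_{*,k+1|[1:k]}(\cdot|\vx_{[1:k]})}[\|\rvy\|^2]$ in the statement. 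Once this boundary analysis is in place, concatenating it with the bulk bound above assembles the four summands of the lemma.
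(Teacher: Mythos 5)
Your plan follows the paper's proof essentially step for step: the chain rule of KL (or equivalently Girsanov) splits the conditional KL into an initialization error, a score-estimation error summing $\eta_r\,\tilde{L}_{k+1,r}$, and a discretization error; the initialization term is controlled by OU exponential decay combined with the log-Sobolev inequality of the standard Gaussian, Lemma~\ref{lem:init_smoothness_bound_main}, and the conditional second moment; and the discretization term is handled by the Benton-style It\^o analysis of the reverse score process together with a non-uniform time grid. The only minor misattribution is the residual $\eta\,\E_{p_{*,k+1|[1:k]}(\cdot|\vx_{[1:k]})}[\|\rvy\|^2]$: in the paper's argument it arises from the segment $[0,T-1]$ via the posterior covariance trace bound $\E[\Tr(\mathbf{\Sigma}_T)]\le\E_{q_0}[\|\rvy\|^2]$, not from the dense near-$T$ segment, and the paper in fact splits the horizon into three pieces $[0,T-1]\cup(T-1,T-\delta]\cup(T-\delta,T]$ (the last one invoking the data-smoothness-based Lipschitz bound of Lemma~\ref{lemma:c.9}) rather than two; these are details that fall into place when the argument is instantiated.
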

The above means that given the data patch $\vx_{[1:k]}$ that is being conditioned upon, we can always choose a small enough step size $\eta$ and a large enough convergence time $T$, so that the conditional distribution converges to any desired accuracy $\epsilon$: $\KL{p_{*,k+1|[1:k]}(\cdot|\vx_{[1:k]})}{\hat{p}_{*,k+1|[1:k]}(\cdot|\vx_{[1:k]})} \leq \epsilon$.

% \textcolor{red}{Put Lemma B.2 in the other file here to be the main result.
% Assume that $p(\mathbf{x}_{[1:k]})$ is $\sigma$-subGaussian. 
% Then we should get that $\E_{p_{*,k+1|[1:k]}(\cdot|\vx_{[1:k]})}\left[\|\rvy\|^2\right]$ in that upper bound scales as $\sigma^2$.
% On the other hand, the error in the conditional KL, achieved by the vanilla diffusion model would scales as $e^{\sigma^2} \sigma^d$.}

% \dz{we need a paragraph or subsection to give a complete and precise comparison with the diffusion model, i.e., the case of K=1}.

On the other hand, if one only performs score matching over the joint distributions $q_{T-t}(\vx_{[1:K]})$, the convergence of the diffusion model is in terms of $\KL{p_*(\vx_{[1:K]})}{\hat{p}_*(\vx_{[1:K]})}$.
In the Lemma below, we demonstrate that even for the Gaussian target, and even if the KL divergence between the joint distributions are arbitrarily small, the KL divergence between the conditional distributions can remain arbitrarily large.
\begin{lemma}
\label{lem:lower_bound}
Consider random vectors $\rvy\in\mathbb{R}^{d_{k+1}}$ and $\vx\in\mathbb{R}^{d_1+d_2+\dots+d_k}$.
For any error threshold $\varepsilon\in(0,1/2]$ and for any $M\in\mathbb{R}$, there exists a pair of Gaussian probability densities $\left( p_*(\rvy,\vx), \hat{p}_*(\rvy,\vx) \right)$, such that 
$\KL{p_*(\rvy,\vx)}{\hat{p}_*(\rvy,\vx)} \leq \varepsilon$, while $\KL{p_*(\rvy|\vx)}{\hat{p}_*(\rvy|\vx)} > M^2 \cdot \|\vx_{(1:d_{k+1})}\|^2$.
\end{lemma}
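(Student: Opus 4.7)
The plan is to construct the pair of joint Gaussians so that they share the same marginal in $\vx$, concentrated near the origin, and differ \emph{only} in the conditional of $\rvy$ given $\vx$. Decoupling the discrepancy in this way lets the joint KL be shrunk freely by contracting the $\vx$-marginal, while the conditional KL remains a fixed quadratic in $\vx$ whose slope we can inflate to dominate any prescribed multiple of $\|\vx_{(1:d_{k+1})}\|^2$.

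Concretely, I would take both $p_*(\vx)$ and $\hat{p}_*(\vx)$ to be $\mathcal{N}(\vzero,\delta^2 \mI_{d_x})$, with $d_x = d_1+\cdots+d_k$, and set
\begin{equation*}
    p_*(\rvy\mid\vx)=\mathcal{N}(\vzero,\mI_{d_{k+1}}),\qquad \hat{p}_*(\rvy\mid\vx)=\mathcal{N}(\mA\vx,\mI_{d_{k+1}}),
\end{equation*}
where $\mA\in\R^{d_{k+1}\times d_x}$ is the linear map that picks the first $d_{k+1}$ coordinates of $\vx$ and rescales by a constant $c>0$, i.e.\ $\mA\vx=c\,\vx_{(1:d_{k+1})}$ (padding with zeros in the unlikely case $d_{k+1}>d_x$).

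For Gaussians with equal covariance the standard KL formula gives $\KL{p_*(\rvy\mid\vx)}{\hat{p}_*(\rvy\mid\vx)} = \tfrac{1}{2}\|\mA\vx\|^2 = \tfrac{c^2}{2}\|\vx_{(1:d_{k+1})}\|^2$. Because the $\vx$-marginals coincide, the chain rule collapses the joint KL to the expectation of the conditional KL under $p_*(\vx)$, giving $\KL{p_*}{\hat{p}_*} = \tfrac{c^2}{2}\,\E_{\vx\sim\mathcal{N}(\vzero,\delta^2\mI)}\!\bigl[\|\vx_{(1:d_{k+1})}\|^2\bigr] = \tfrac{c^2 d_{k+1}\delta^2}{2}$. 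Choosing $c^2 = 4M^2$ enforces $\KL{p_*(\rvy\mid\vx)}{\hat{p}_*(\rvy\mid\vx)} = 2M^2\|\vx_{(1:d_{k+1})}\|^2 > M^2\|\vx_{(1:d_{k+1})}\|^2$ on every $\vx$ with $\vx_{(1:d_{k+1})}\neq\vzero$, and then setting $\delta^2 = \varepsilon/(2M^2 d_{k+1})$ forces $\KL{p_*}{\hat{p}_*}\le\varepsilon$.

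There is essentially no hard step: the only subtlety is that the strict inequality degenerates at $\vx_{(1:d_{k+1})}=\vzero$, where both sides equal $0$. This is harmless under either sensible reading of the statement — we can either restrict to the generic $\vx$ with $\vx_{(1:d_{k+1})}\neq\vzero$, or perturb $\hat{p}_*(\rvy\mid\vx)$ by a tiny constant mean offset $\vb$, which adds a uniform positive term $\tfrac{1}{2}\|\vb\|^2$ to the conditional KL while inflating the joint KL only by $\tfrac{1}{2}\|\vb\|^2$ (still absorbable into $\varepsilon$ by shrinking $\delta$). The whole construction exploits the key mismatch between expected conditional KL and pointwise conditional KL: a marginal concentrated near the origin makes the expectation vanish even when the quadratic is large at non-typical points.
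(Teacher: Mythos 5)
Your construction is correct in its main thrust and takes a genuinely different route from the paper's. You fix both $\vx$-marginals to the same concentrated Gaussian $\mathcal{N}(\vzero,\delta^2\mI)$ and put the discrepancy entirely in the conditional (making $\rvy\perp\vx$ under $p_*$ but linearly coupled with slope $c$ under $\hat{p}_*$), so that by the chain rule the joint KL collapses to $\E_{p_*(\vx)}\bigl[\KL{p_*(\rvy|\vx)}{\hat{p}_*(\rvy|\vx)}\bigr]=\tfrac{c^2}{2}\min(d_{k+1},d_x)\,\delta^2$, while the pointwise conditional KL is $\tfrac{c^2}{2}\|\vx_{(1:d_{k+1})}\|^2$. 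Tuning $c$ up and $\delta$ down independently is exactly the decoupling the lemma needs. The paper's construction is less modular: it varies the $\vx$-marginal variance between $p_*$ and $\hat{p}_*$ while keeping the $\rvy$-marginal and cross-covariance fixed, and reads off the conditional KL from the resulting (mean \emph{and} covariance) mismatch. Your version makes the joint-KL computation trivially exact, which is a genuine simplification; the price is that at $\vx_{(1:d_{k+1})}=\vzero$ both sides of the desired strict inequality vanish, whereas the paper's covariance mismatch in the conditionals gives a positive constant floor that makes the inequality strict for all $\vx$.

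The one genuine error is in your proposed patch for that degenerate point. You claim adding a constant mean offset $\vb$ to $\hat{p}_*(\rvy|\vx)$ ``adds a uniform positive term $\tfrac12\|\vb\|^2$ to the conditional KL.'' It does not: $\KL{\mathcal{N}(\vzero,\mI)}{\mathcal{N}(\mA\vx+\vb,\mI)}=\tfrac12\|\mA\vx+\vb\|^2$, and the cross term $\langle\mA\vx,\vb\rangle$ can be as negative as you like. In particular, at the $\vx$ with $\mA\vx=-\vb$ this conditional KL is $0$ while $M^2\|\vx_{(1:d_{k+1})}\|^2>0$, so your patch actually moves the failure from a harmless set (where both sides are $0$) to a point where the claimed inequality is genuinely violated. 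The cross term \emph{does} average to $0$ under $p_*(\vx)$ (since $\E\vx=\vzero$), so your joint-KL claim about the patch is fine --- the error is only in the pointwise conditional-KL claim. The correct fix is a covariance perturbation rather than a mean shift: e.g., take $p_*(\rvy|\vx)=\mathcal{N}(\vzero,(1+\tau)\mI)$ with tiny $\tau>0$ and keep $\hat{p}_*(\rvy|\vx)=\mathcal{N}(\mA\vx,\mI)$. Then $\KL{p_*(\rvy|\vx)}{\hat{p}_*(\rvy|\vx)}=\tfrac{d_{k+1}}{2}\bigl(\tau-\ln(1+\tau)\bigr)+\tfrac12\|\mA\vx\|^2$, a strictly positive constant plus your original quadratic, which gives the strict inequality uniformly in $\vx$ while adding only a $\tau$-controlled constant (independent of $\delta$) to the joint KL. With that repair and the small bookkeeping correction that $\E\|\vx_{(1:d_{k+1})}\|^2=\min(d_{k+1},d_x)\,\delta^2$ rather than $d_{k+1}\delta^2$ when $d_{k+1}>d_x$, your argument is a clean and somewhat more transparent alternative to the paper's.
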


\subsection{Inference performances of AR diffusion}
Building upon Lemma~\ref{lem:condi_rtk_error_eachk_main}, we deliver the theoretical result on the sampling error of AR diffusion model on the joint distributions over all the random variables $\vx_{[1:K]}$.
\begin{theorem}
    \label{thm:main_}
    Suppose Assumption~\ref{a2}-\ref{a3} hold, 
    \begin{equation*}
        \delta\le \Big(0, \ln \sqrt{(4L)^{-2}+1}+(4L)^{-1}\Big]
    \end{equation*}
    if Alg.~\ref{alg:ard_outer} chooses the time sequence $\{\eta_r\}_{r=0}^{R-1}$ as
    \begin{equation*}
        \eta_r = \left\{
            \begin{aligned}
                & \eta && \text{when}\quad 0\le r<M\\
                & \eta/(1+\eta)^{r-M+1} && \text{when}\quad M\le r<N\\
                & \eta && \text{when}\quad N\le r\le R
            \end{aligned}
        \right.
    \end{equation*}
    where 
    \begin{equation*}
        M=\frac{T-1}{\eta},\quad N=M+\frac{2\ln(1/\delta)}{\eta},\quad \text{and}\quad R=N+\frac{\delta}{\eta},
    \end{equation*}
    then, the generated samples
    $[\hat{\rvx}_1, \hat{\rvx}_2, \ldots, \hat{\rvx}_K]$ follows the distribution $\hat{p}_*$, which satisfies
    \begin{equation*}
         \begin{aligned}
             &\KL{p_*}{\hat{p}_*}\lesssim  2e^{-2T}L\cdot \left(m_0+ d\right) + (L^2R\eta^2+T\eta)\cdot d  + \eta m_0 + \eta KR\cdot \epsilon_{\mathrm{score}}^2.
         \end{aligned}
    \end{equation*}
\end{theorem}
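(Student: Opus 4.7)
The plan is to reduce the joint KL bound to $K$ per-patch conditional KL bounds via the chain rule for KL divergence, and then invoke Lemma~\ref{lem:condi_rtk_error_eachk_main} patch by patch and aggregate. Because Algorithm~\ref{alg:ard_outer} produces $\hat{\rvx}_{k+1}$ by calling Algorithm~\ref{alg:ard_inner} with conditioning $\vz=g_{\vtheta_{\text{ar}}}(\hat{\vx}_{[1:k]})$, the output density $\hat p_*$ factorizes autoregressively, and so does the target $p_*$. The KL chain rule then yields
\begin{equation*}
\KL{p_*}{\hat p_*}\;=\;\sum_{k=0}^{K-1}\E_{\rvx_{[1:k]}\sim p_{*,[1:k]}}\Bigl[\KL{p_{*,k+1|[1:k]}(\cdot|\rvx_{[1:k]})}{\hat p_{*,k+1|[1:k]}(\cdot|\rvx_{[1:k]})}\Bigr].
\end{equation*}

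For each term in this sum I would apply Lemma~\ref{lem:condi_rtk_error_eachk_main} with the three-phase schedule $\{\eta_r\}$ prescribed in the theorem, and then take the expectation over the conditioning vector $\rvx_{[1:k]}$. Summing the five resulting contributions over $k=0,\ldots,K-1$ gives:
(i) an $e^{-2T}$-weighted dimensional piece $e^{-2T}\sum_k 2Ld_{k+1}=2Le^{-2T}d$;
(ii) an $e^{-2T}$-weighted second-moment piece $e^{-2T}\sum_k\E_{\rvx_{[1:k]}}\E_{\rvy}[\|\rvy\|^2]\le e^{-2T}m_0$ by Lemma~\ref{lem:init_second_moment_bounded_main};
(iii) a score-error aggregate $\eta\sum_{k,r}\E[\tilde L_{k+1,r}(\vtheta|\rvx_{[1:k]})]\le \eta KR\,\epsilon_{\mathrm{score}}^2$ by Assumption~\ref{a3} (after the $k\mapsto k+1$ re-indexing between Lemma~\ref{lem:condi_rtk_error_eachk_main} and the statement of \ref{a3});
(iv) discretization pieces $L^2R\eta^2\sum_k d_{k+1}+T\eta\sum_k d_{k+1}=(L^2R\eta^2+T\eta)d$;
(v) a final $\eta\sum_k\E[\|\rvy\|^2]\le\eta m_0$ piece, again via Lemma~\ref{lem:init_second_moment_bounded_main}.

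Collecting these contributions produces $e^{-2T}(2Ld+m_0)+(L^2R\eta^2+T\eta)d+\eta m_0+\eta KR\,\epsilon_{\mathrm{score}}^2$, matching the theorem after bounding the bare $m_0$ inside the exponential factor by $2L\,m_0$ (standard under Assumption~\ref{a1}) so that the two exponential pieces merge into $2e^{-2T}L(m_0+d)$.

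The main obstacle lies not in the aggregation step but in Lemma~\ref{lem:condi_rtk_error_eachk_main} itself, whose proof encodes the carefully engineered three-phase step-size choice (constant $\eta$ on $[0,T-1]$, geometric shrinkage down to width $\delta$, then constant $\eta$ near termination) to simultaneously control the singular score at $t\to 0$ and the OU contraction at large $t$; the integers $M$, $N$, $R$ in the theorem are inherited directly from that lemma's hypotheses. The subtle bookkeeping step is to recognize that the averaged form of Assumption~\ref{a3} aligns precisely with the double sum produced by the chain rule and Lemma~\ref{lem:condi_rtk_error_eachk_main}: rather than paying a worst-case $K$-times per-patch bound, the averaging in \ref{a3} keeps the total score-error overhead at $\eta KR\,\epsilon_{\mathrm{score}}^2$. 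The same averaging-over-conditioning principle governs the second-moment handling via Lemma~\ref{lem:init_second_moment_bounded_main}, which is why Lemma~\ref{lem:condi_rtk_error_eachk_main}'s bound is formulated adaptively in $\E_{\rvy|\rvx_{[1:k]}}[\|\rvy\|^2]$ rather than through a uniform upper bound.
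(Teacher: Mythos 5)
Your proposal follows the same route as the paper: decompose $\KL{p_*}{\hat p_*}$ via the KL chain rule into $K$ per-patch conditional KL terms, apply Lemma~\ref{lem:condi_rtk_error_eachk_main} (together with Remark~\ref{rmk:init_rev_error} for the unconditional $k=0$ stage) to each, then average over the conditioning prefix using Lemma~\ref{lem:init_second_moment_bounded_main} and Assumption~\ref{a3} to collect the five contributions into the stated bound. The only imprecision is attributing the absorption $m_0\le 2L\,m_0$ to Assumption~\ref{a1}; the paper instead simply takes $L\ge 1$ without loss of generality before collecting terms, which is the cleaner way to merge the two $e^{-2T}$-weighted pieces into $2e^{-2T}L(m_0+d)$.
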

\begin{remark}
% \dz{we may mention why to consider such a special time sequences? any intuition?}
    To achieve the KL convergence, e.g., $\KL{p_*}{\hat{p}_*}\le \epsilon^2$ for the generated data, we only require the hyper-parameters to satisfy $T = \tilde{\Theta}(1)$,
    \begin{equation*}
        \eta = \tilde{\Theta}(L^{-2}d^{-1}\epsilon^{-2})\quad \text{and}\quad \epsilon_{\mathrm{score}}=\tilde{O}(K^{-1/2}\epsilon).
    \end{equation*}
    Under these conditions, the total gradient complexity of the inference process will be at an $\tilde{O}(KL^2d\epsilon^{-2})$ level.
    Compared with typical DDPM (corresponding to the special case $K=1$ in our setting), this complexity will have an additional factor of $K$, which means AR diffusion usually requires more inference steps to achieve the same generation quality and matches people's general perception in empirical studies.
    From the training perspective, the average dimension of the estimated score in AR diffusion is only $1/K$ of that in typical diffusion.
\end{remark}

\paragraph{Proof sketch of Theorem~\ref{thm:main_}.}
To obtain Theorem~\ref{thm:main_}, we start by checking the gap between the distribution of generated $\hat{p}_*$ and ground-truth data distribution $p_*$, which is written as
\begin{equation}
    \small
    \label{ineq:kl_gap_accu}
    \begin{aligned}
        \KL{p_*}{\hat{p}_*}\le \KL{p_{*,1}}{\hat{p}_{*,1}} + \sum_{k=1}^{K-1} \E_{\rvx_{[1:k]}\sim p_{*,[1:k]}}\left[\KL{p_{*,k+1|[1:k]}(\cdot|\rvx_{[1:k]})}{\hat{p}_{*,k+1|[1:k]}(\cdot|\rvx_{[1:k]})}\right]
    \end{aligned}
\end{equation}
following from the chain rule of $f$ divergence.
In Alg.~\ref{alg:ard_inner}, for any given $\vx_{[1:k]}$, $\hat{p}_{*,k+1|[1:k]}(\cdot|\vx_{[1:k]})$ and $p_{*,k+1|[1:k]}(\cdot|\vx_{[1:k]})$ are the target and output distributions, which is presented as $q^\gets_T$ and $\hat{q}_T$, respectively. 
Moreover, their KL divergence satisfies
\begin{equation*}
    \begin{aligned}
        &\KL{q^\gets_T}{\hat{q}_T}\le \KL{{q}^\gets_0}{\hat{q}_0} + \underbrace{ \sum_{r=0}^{R-1} \E_{{\rvy}^\gets\sim {q}^\gets_{t_r}}\left[\KL{{q}^\gets_{t_{r+1}|t_r}(\cdot|{\rvy}^\gets)}{\hat{q}_{t_{r+1}|t_r}(\cdot|{\rvy}^\gets)}\right]}_{\text{reverse transition error}}.
    \end{aligned}
\end{equation*}
Considering that both $\{{q}^\gets_t\}_{t=0}^T$ and $\{\hat{q}_t\}_{t=0}^T$ are simulating reverse OU processes, the initialization error, i.e., $\KL{{q}^\gets_0}{\hat{q}_0}$, can be adaptively controlled by
\begin{equation}
    \label{ineq:init_error_main}
    \begin{aligned}
        &\KL{{q}^\gets_0(\cdot|\vx_{[1:k]})}{\hat{q}_0(\cdot|\vx_{[1:k]})}\le e^{-2T}\cdot \left(2Ld_{k+1} + \E_{p_{*,k+1|[1:k]}(\cdot|\vx_{[1:k]})}\left[\|\rvy\|^2\right] \right)
    \end{aligned}
\end{equation}
following the convergence of the standard OU process.
Although the upper bound of initialization error in our settings contains the unbounded variables, i.e., $\vx_{[1:k]}$ rather than a uniform bound in typical DDPM analyses.
For the reverse transition error, the Girsanove Theorem allows us to decompose it to the sum of score estimation error, i.e.,
\begin{equation*}
    \small
    \begin{aligned}
        \sum_{r=0}^{R-1} \eta_r\cdot \E_{\tilde{\rvy}_{t_r}}\left[\left\|\vs_{\vtheta}(\tilde{\rvy}_{t_r}|T - t_r,\vz )-  \grad\ln q_{T-t_r}(\tilde{\rvy}_{t_r})\right\|^2\right]
    \end{aligned}
\end{equation*}
and the discretization error, i.e.,
\begin{equation*}
    \small
    \begin{aligned}
        \sum_{r=0}^{R-1} \E_{\tilde{\rvy}_{[0:T]}}\left[\int_{t_r}^{t_{r+1}} \left\|\grad\ln q_{T-t_r}(\tilde{\rvy}_{t_r})-  \grad\ln q_{T-t}(\tilde{\rvy}_t) \right\|^2 \der t \right].
    \end{aligned}
\end{equation*}
Suppose there is an upper bound $\eta$ for all of the dynamics step size $\eta_r$, then the score estimation error will be coincidentally controlled by a partial sum of the score training error provided in Assumption~\ref{a3}, i.e.,
\begin{equation*}
     \eta\cdot  \sum_{r=0}^{R-1} \tilde{L}_{k+1,r}(\vtheta|\vx_{[1:k]}).
\end{equation*}
For the discretization error, we divide the entire reverse process into two segments, i.e., $[0, T-\delta]$ and $(T-\delta, T]$. 
For the first segment, we check the dynamics of 
\begin{equation*}
    E_{s,t}:=\E_{\tilde{\rvy}_{[0:T]}\sim \tilde{Q}_{[0:T]}}\left[\left\|\grad\ln \tilde{q}_{t}(\tilde{\rvy}_{t})-  \grad\ln \tilde{q}_{s}(\tilde{\rvy}_s) \right\|^2 \right]
\end{equation*}
with It\^o calculus and bound the differential inequality coefficients with a similar approach to~\cite{benton2024nearly}. 
For the second segment, we choose a sufficient small  $\delta$ to satisfy the condition of Lemma \ref{lemma:c.9}, then the Lipschitz property of the score function $\grad \ln q_t$ ($t\in[0,\delta]$) can be guaranteed, thereby allowing the discretization error to be bounded by the difference between particles.
Under these conditions, Lemma~\ref{lem:condi_rtk_error_eachk_main} can summarize an adaptive convergence.
Although the adaptive convergence of Alg.~\ref{alg:ard_inner} contains the unbounded $\vx_{[1:k]}$, taking the expectation on $\vx_{[1:k]}$ will make the conditional second moment and the partial sum of the score training error have uniform upper bounds following from Lemma~\ref{lem:init_second_moment_bounded_main} and Assumption~\ref{a3}, respectively.
Under these conditions, the KL divergence accumulates linearly across the patches following Eq.~\ref{ineq:kl_gap_accu}
Theorem~\ref{thm:main_} is thus proved.

\section{Experiment}
\label{sec:exp}
In this section, we demonstrate the advantages of AR Diffusion over DDPM and the tightness of the sampling performance bound of AR Diffusion provided in Theorem~\ref{thm:main_} in two synthetic experiments, where the image data contains feature dependencies.
\begin{figure}[]
\centering
    \hfill
    \subfigure[Task 1]{\label{fig:task1}\includegraphics[width=0.2\textwidth]{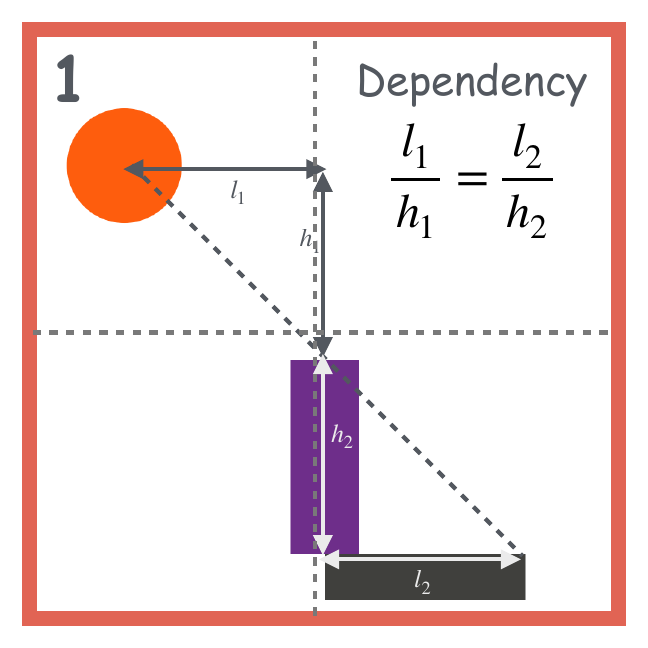}}
    \hfill
    \subfigure[Task 2]{\label{fig:task2}\includegraphics[width=0.2\textwidth]{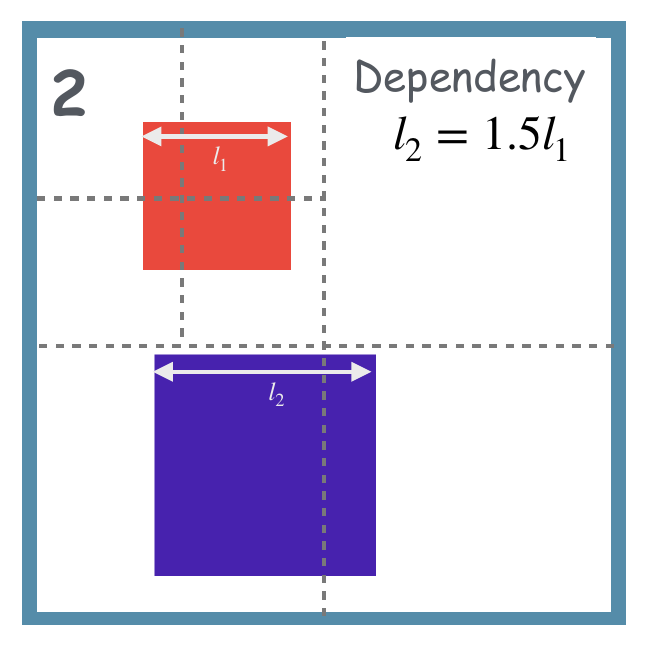}}
    \hfill
\vspace{-0.1in}
\caption{Visualization of Task 1 and Task 2. In Task 1, a patch size of 16 ensures that correlated features (sun and shadow) are located in different patches. In contrast, Task 2 uses a patch size of 8, which disrupts the dependency between the square's lengths by segmenting them into different patches.}
\vspace{-0.15in}
\label{fig:task_vis}
\end{figure}

\paragraph{Synthetic Tasks.}Specifically, following the setting \cite{han2025can}, we consider the following synthetic tasks in \cref{fig:task_vis}: (1) Task 1: Contains three elements representing the sun, the flagpole, and the shadow, where geometric features such as the height of the sun and the length of the shadow are correlated and satisfy the constraint \( \frac{l_1}{h_1} = \frac{l_2}{h_2} \); (2) Task 2: Contains two squares, where the side length \( l_1 \) of the square in the upper part and the side length \( l_2 \) of the square in the lower part satisfy the constraint \( l_2 = 1.5l_1 \). Task 1 is an abstraction of common light and shadow phenomena in the real world. In this setting, the length of an object's shadow is determined by multiple factors, such as the height of the sun and the height of the object. Task 2, on the other hand, attempts to minic certain physical rules, such as perspective effects, where objects appear larger when they are closer to the lower observation point.

\begin{figure*}[]
\centering
    \hfill
    \subfigure[Training Data]{\label{fig:task1_tr}\includegraphics[width=0.24\textwidth]{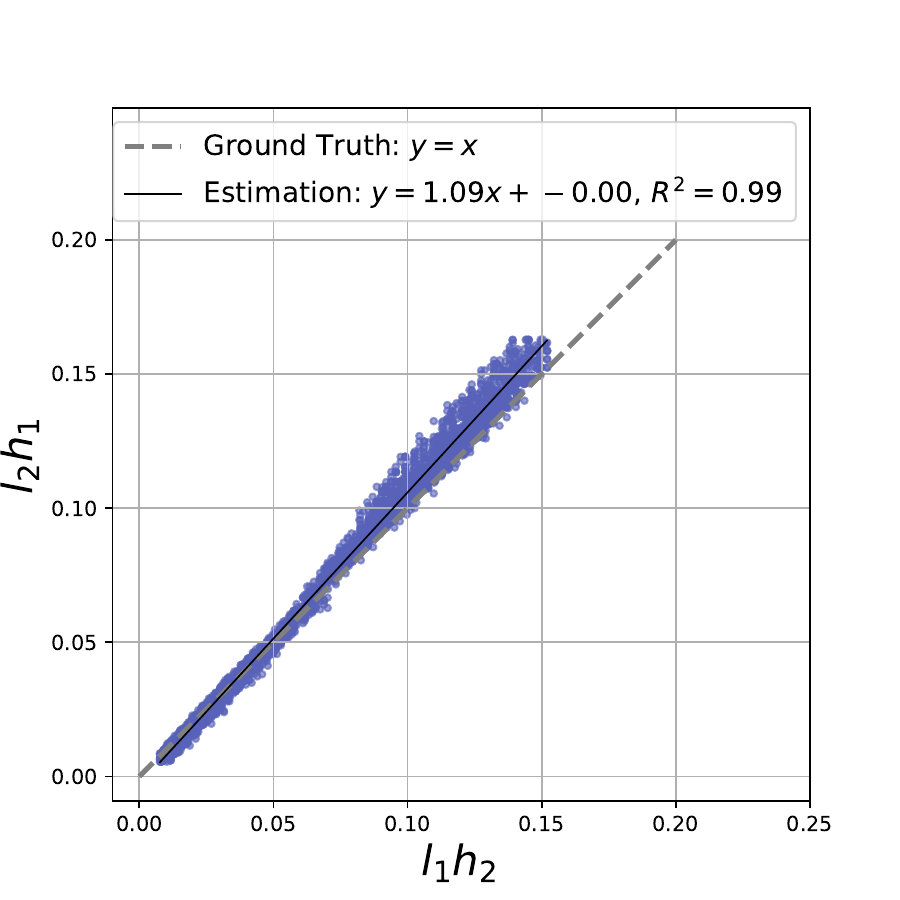}}
    \hfill
    \subfigure[Inference: AR Diffusion]{\label{fig:task1_ar}\includegraphics[width=0.24\textwidth]{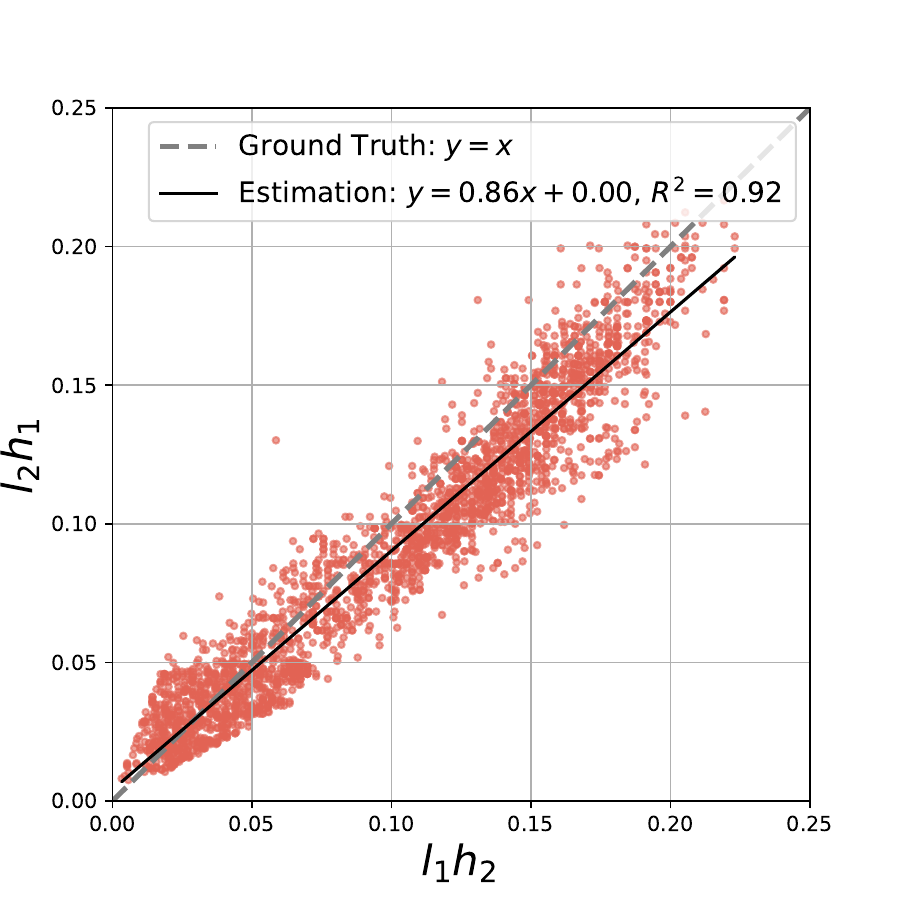}}
    \hfill
    \subfigure[Inference: DDPM]{\label{fig:task1_ddpm}\includegraphics[width=0.24\textwidth]{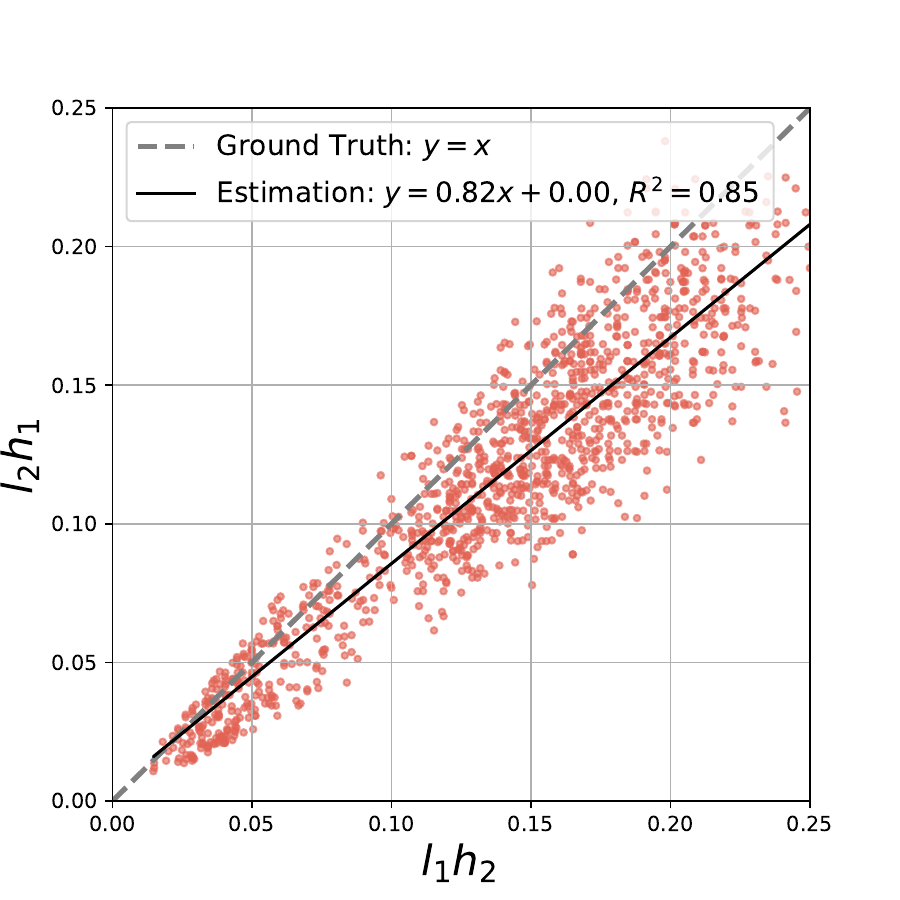}}
    \hfill
    \subfigure[Training: Diffusion Loss]{\label{fig:task1_loss}\includegraphics[width=0.22\textwidth]{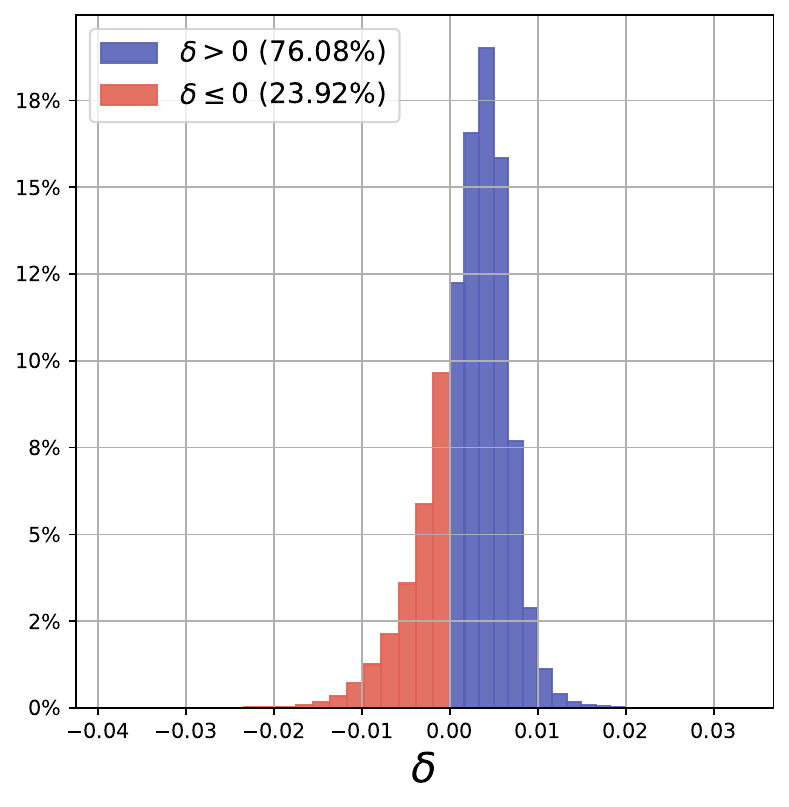}}
    \hfill
\vspace{-0.15in}
\caption{\textbf{Comparison of AR Diffusion and DDPM Performance on Task 1.} \Cref{fig:task1_tr} demonstrates the validity of the evaluation method using the training data as a baseline.  \Cref{fig:task1_ar} and \Cref{fig:task1_ddpm} illustrate the performance of AR Diffusion and DDPM during the inference phase, showing that AR Diffusion better captures inter-feature dependencies with a higher \( R^2 \). \Cref{fig:task1_loss} presents the difference in training loss between DDPM and AR Diffusion, denoted as \( \delta \). For most training steps, AR Diffusion's training loss is lower than that of DDPM, with \( \delta > 0 \).}
\vspace{-0.15in}
\label{fig:task_1_dis}
\end{figure*}

\paragraph{Setup and Evaluation.} 
Based on the designed Task 1 and Task 2, we train AR Diffusion and DDPM from scratch while ensuring that these two models have comparable parameters. Moreover, due to the synthetic data containing only sufficiently simple elements, we can directly extract relevant geometric features from the generated images, as the evaluation method proposed in previous work \cite{han2025can}. After extracting these features, we can further examine whether the generated images satisfy the predefined rules to evaluate the model's performance during inference.

Specifically, for AR Diffusion, we carefully control the patch size to facilitate or hinder the model’s ability to learn feature relationships in a raster scan order. In Task 1, given a $32\times32$ input, we set the patch size to 16, ensuring the model learns sun-related features before shadow features, aligning with underlying dependencies. In contrast, for Task 2, we use a patch size of 8, causing different parts of the same square to appear in separate tokens, disrupting feature learning. By adjusting the patch size, we make Task 1 easier and Task 2 more challenging for AR Diffusion, allowing a direct comparison of its training and inference performance against DDPM under varying difficulty levels. Further details on synthetic data construction and model training are provided in Appendix~\ref{sec:app_more_training_details}.

\begin{figure*}[]
\centering
    \hfill
    \subfigure[Training Data]{\label{fig:task2_tr}\includegraphics[width=0.24\textwidth]{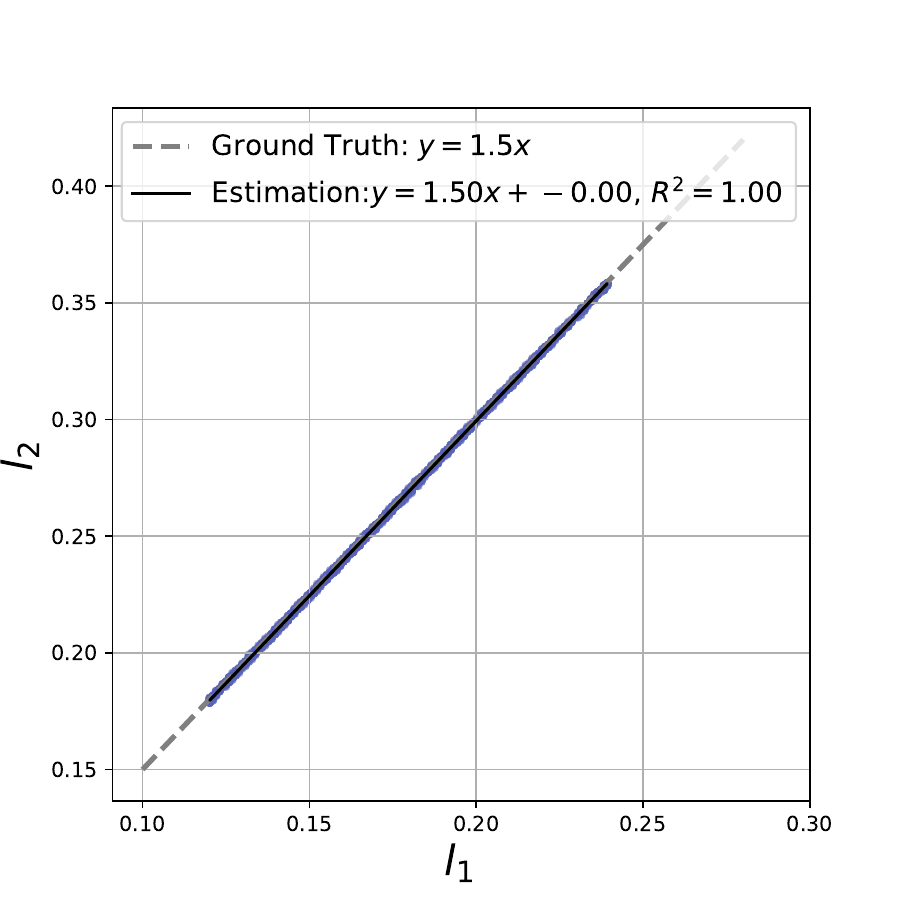}}
    \hfill
    \subfigure[Inference: AR Diffusion]{\label{fig:task2_ar}\includegraphics[width=0.24\textwidth]{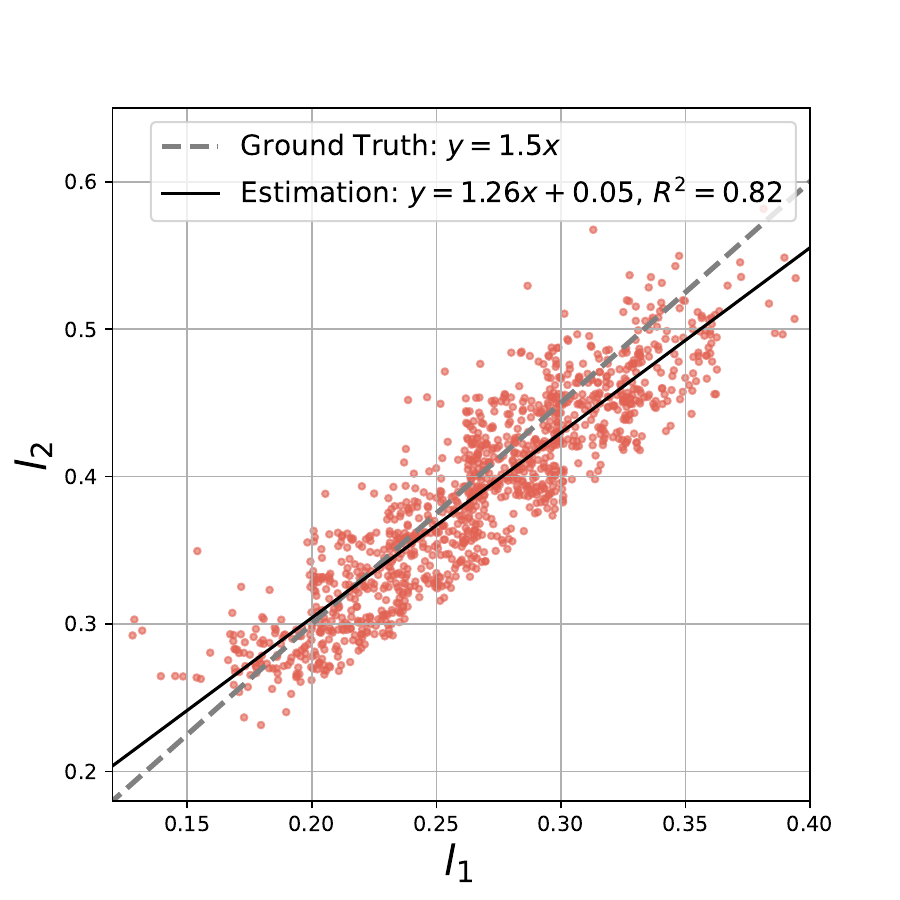}}
    \hfill
    \subfigure[Inference: DDPM]{\label{fig:task2_ddpm}\includegraphics[width=0.24\textwidth]{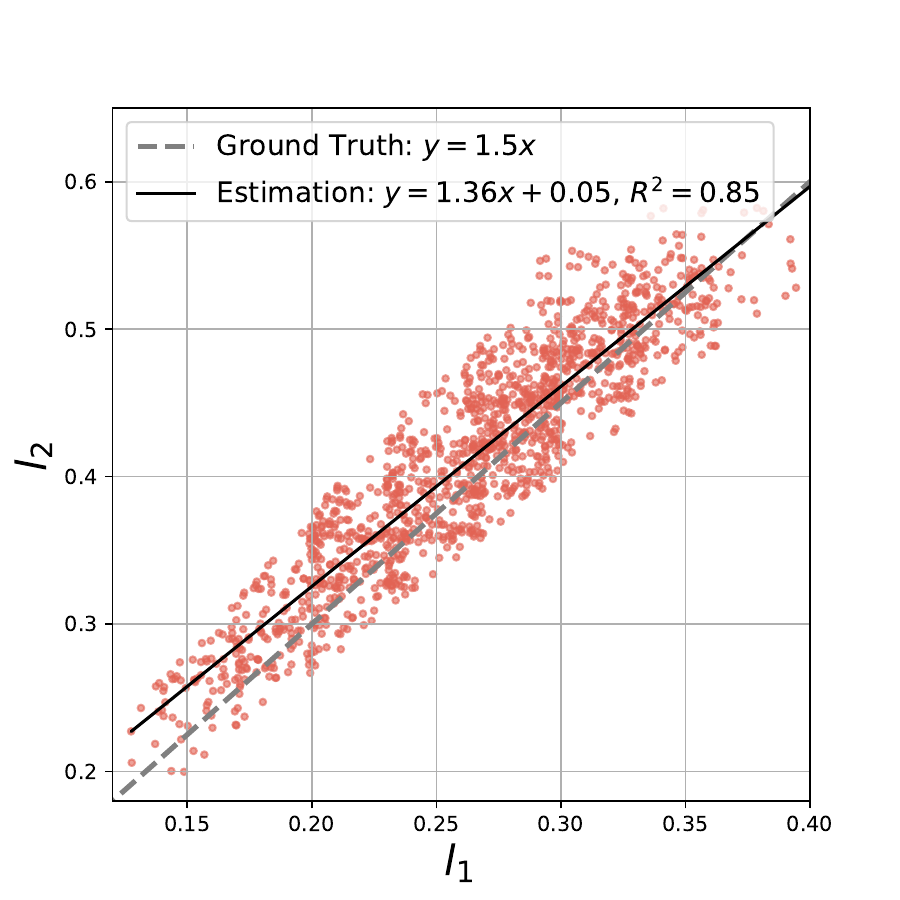}}
    \hfill
    \subfigure[Training: Diffusion Loss]{\label{fig:task2_loss}\includegraphics[width=0.22\textwidth]{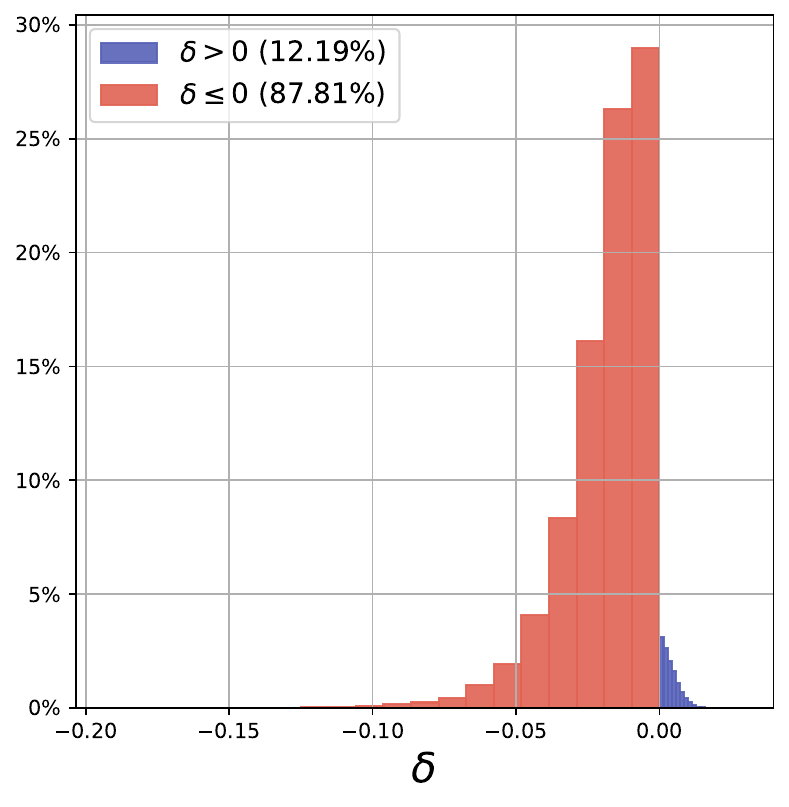}}
    \hfill
\vspace{-0.15in}
\caption{\textbf{Comparison of AR Diffusion and DDPM Performance on Task 2.} \Cref{fig:task2_tr} demonstrates the validity of the evaluation method using the training data as a baseline.  
\Cref{fig:task2_ar} and \Cref{fig:task2_ddpm} illustrate the performance of AR Diffusion and DDPM during the inference phase, showing that AR Diffusion captures inter-feature dependencies less effectively, with a lower \( R^2 \).  
\Cref{fig:task2_loss} presents the difference in training loss between DDPM and AR Diffusion, denoted as \( \delta \). For most training steps, AR Diffusion's training loss is higher than that of DDPM, with \( \delta \leq 0 \).
.}
\vspace{-0.15in}
\label{fig:task_2_dis}
\end{figure*}

\paragraph{Results.}\Cref{fig:task_1_dis} and \cref{fig:task_2_dis} illustrate the performance of AR Diffusion and DDPM on the synthetic tasks, Task 1 and Task 2. Specifically, we evaluate both models in terms of their performance during inference and training.

\textit{Inference Phase.} Following \citet{han2025can}, we first extract geometric features from the training dataset and observe that for both Task 1 and Task 2 in \cref{fig:task1_tr} and \cref{fig:task2_tr}, the extracted geometric features (almost) perfectly satisfy the predefined feature dependencies, with an \( R^2 \) value (almost) equal to 1. The (almost) perfect estimation line further validates the effectiveness of the feature extraction method. Next, \cref{fig:task1_ar}, \cref{fig:task1_ddpm}, \cref{fig:task2_ar}, and \cref{fig:task2_ddpm} illustrate the generation quality of AR Diffusion and DDPM in both tasks, specifically whether the generated samples satisfy the predefined feature dependencies. We observe that for Task 1, AR Diffusion's generations better satisfy the predefined dependencies compared to DDPM, as indicated by larger \( R^2 \), whereas the opposite holds for Task 2. This demonstrates that when feature dependencies exist, given an appropriate patch partitioning, AR Diffusion’s learning paradigm can better capture these dependencies than DDPM.

Moreover, to further confirm that the key factor behind AR Diffusion's superior performance over DDPM in Task 1 stems from the advantage of AR Diffusion's next-token prediction paradigm in emphasizing inter-feature dependencies, \Cref{sec:Ablation: Parallel order} provides additional ablation experiments. Specifically, while keeping other factors unchanged, we modify only the learning order of AR Diffusion. Instead of using a raster scan order that aligns with the feature relationships—learning the sun region first and then the shadow region—we change it to a parallel order, where the sun and shadow regions are learned simultaneously, preventing AR Diffusion from correctly capturing inter-feature dependencies. We observe that the parallel order severely degrades AR Diffusion's generation quality, with \( R^2 \) dropping to $0.68$.

\textit{Training Phase.} \Cref{fig:task1_loss} and \Cref{fig:task2_loss} illustrate the loss differences between AR Diffusion and DDPM during training, where \( \delta := \mathcal{L}_{DDPM} - \mathcal{L}_{AR} \). Notably, the AR Diffusion loss incorporates the correction factor \( K \) derived in Theorem~\ref{thm:main_}. A sufficiently accurate correction factor can precisely characterize the training loss of all tokens in AR Diffusion, and a reasonable training loss should provide a sufficiently tight upper bound in Theorem~\ref{thm:main_} to faithfully reflect the model's generation quality during inference.  

We observe that for Task 1, where AR Diffusion outperforms DDPM in the inference phase, AR Diffusion loss is also lower than DDPM loss in approximately $76\%$ of training steps. Conversely, for Task 2, where AR Diffusion underperforms compared to DDPM in inference, its training loss is only lower than DDPM in only about $12\%$ of training steps. The consistency between AR Diffusion’s performance in the training and inference phases highlights the tightness of the upper bound in Theorem~\ref{thm:main_}. This observation also suggests a potential method for comparing the performance of AR Diffusion and DDPM—by comparing the corrected AR Diffusion training loss with DDPM training loss, we may be able to predict the model's performance during the inference phase.

\paragraph{Additional Experiments on Real-world Data.}
In addition, \cref{app:More Architectures and More Data} considers more real-world data. Specifically, we construct $2\times2$ composite images by concatenating MNIST digits, where the four digits satisfy predefined feature dependencies, such as forming an arithmetic sequence (e.g., $1, 2, 3, 4$) or all being even or odd numbers (e.g., $2, 2, 4, 4$). These additional experiments based on real data further support our analysis that AR learns the dependencies between non-independent features better than DDPM. Furthermore, the impacts of model architecture is also considered in \cref{app:More Architectures and More Data}, where we compare AR and DDPM using different backbones such as MLP and U-Net, and observe consistent conclusions with the experiments mentioned above.

\bibliographystyle{apalike}
\bibliography{0_contents/ref}  %%% Uncomment this line and comment out the ``thebibliography'' section below to use the external .bib file (using bibtex) .

%%% Uncomment this section and comment out the \bibliography{references} line above to use inline references.
% \begin{thebibliography}{1}

% 	\bibitem{kour2014real}
% 	George Kour and Raid Saabne.
% 	\newblock Real-time segmentation of on-line handwritten arabic script.
% 	\newblock In {\em Frontiers in Handwriting Recognition (ICFHR), 2014 14th
% 			International Conference on}, pages 417--422. IEEE, 2014.

% 	\bibitem{kour2014fast}
% 	George Kour and Raid Saabne.
% 	\newblock Fast classification of handwritten on-line arabic characters.
% 	\newblock In {\em Soft Computing and Pattern Recognition (SoCPaR), 2014 6th
% 			International Conference of}, pages 312--318. IEEE, 2014.

% 	\bibitem{hadash2018estimate}
% 	Guy Hadash, Einat Kermany, Boaz Carmeli, Ofer Lavi, George Kour, and Alon
% 	Jacovi.
% 	\newblock Estimate and replace: A novel approach to integrating deep neural
% 	networks with existing applications.
% 	\newblock {\em arXiv preprint arXiv:1804.09028}, 2018.

% \end{thebibliography}
\newpage
\appendix
\onecolumn

\section{Notations in Appendix}
\label{sec:app_probsetting}

\begin{remark}
    
    With the OU (Eq.~\ref{sde:ideal_condi_forward}) and reverse OU process (Eq.~\ref{sde:ideal_condi_reverse}), standard Gaussian and $p_{*,k+1|[1:k]}(\cdot|\vx_{[1:k]})$ can be transformed into each other.
    According to the closed form of Eq.~\ref{sde:ideal_condi_forward}, i.e.,
    \begin{equation*}
        \rvy_t = e^{-t}\cdot \rvy_0 + \sqrt{1-e^{-2t}}\xi\quad \text{where}\quad \xi\sim \gN(\vzero,\mI),
    \end{equation*}
\end{remark}

\paragraph{Training loss and conditional score estimation.}
Here, we note that the trainable parameters in the autoregressive model include
\begin{equation*}
    \vtheta = \left[\vtheta_{\text{ar}}, \vtheta_{\text{df},1},\ldots, \vtheta_{\text{df},K}\right].
\end{equation*}
To simplify the notations, we will not distinguish them strictly and only use $\vtheta$ to present the trainable parameters we are concerned with.
For simplicity, we take the training loss of the $k$-th random variable as an example.
We first denote that 
\begin{equation*}
    \vs_{\vtheta}(\vy|t, \vz)\coloneqq -\sqrt{1-e^{-2t}}\epsilon_{\vtheta}(\vy|t,\vz) 
\end{equation*}
well-trained in the learning stage, where $\epsilon_{\vtheta}$ is directly used to present the training loss provided in~\cite{li2024autoregressive}, i.e.,
\begin{equation*}
    \small
    \begin{aligned}
        \arg\min_{\vtheta}\ L(\vtheta) \coloneqq \frac{1}{K}\sum_{k=1}^K\left[\frac{1}{R}\sum_{r=0}^{R-1} \E_{\rvx_{[1:K]}\sim p_*, \xi\sim \mathcal{N}(\vzero, \mI_{d_k})}\left[\left\|\epsilon_{\vtheta}\left(e^{-(T-t_r)} \rvx_{k} - \sqrt{1-e^{-2(T-t_r)}}\xi \Big|T-t_r, g_{\vtheta}(\rvx_{[1:k-1]})\right) - \xi\right\|^2\right]\right]
    \end{aligned}
\end{equation*}
where we consider the undefined $g_{\vtheta}(\rvx_{[1:0]}) = \mathrm{None}$ to simplify the formulation.
Suppose the training loss is sufficiently small for specific $\vtheta_*$, i.e., $L(\vtheta)\le \epsilon_{\text{score}}$, we have the following lemma
\begin{lemma}
    \label{lem:training_loss_equ}
    Under the previous notation, we have
    \begin{equation*}
        \arg\min_{\vtheta}\ L(\vtheta) = \arg\min_{\vtheta}\ \frac{1}{K}\sum_{k=1}^K \E_{\rvx_{[1:k-1]}\sim p_{*,[1:k-1]}}\left[\frac{1}{R}\sum_{r=0}^{R-1} (1-e^{-2(T-t_r)})\cdot \tilde{L}_{k,r}(\vtheta|\rvx_{[1:k-1]})\right] 
    \end{equation*}
    where
    \begin{equation*}
        \tilde{L}_{k+1,r}(\vtheta|\vx_{[1:k]}) = \E_{\rvy_0\sim p_{*,k+1|[1:k]}(\cdot|\vx_{[1:k]}),\,\xi\sim \mathcal{N}(\vzero,\mI)}\left[\left\|\vs_{\vtheta}(\rvy^\prime|(T-t_r),g_{\vtheta}(\vx_{[1:k]})) - \grad\ln q_{T-t_r}(\rvy^\prime|\vx_{[1:k]})\right\|^2\right]
    \end{equation*}
    and $\rvy^\prime = e^{-(T-t_r)}\cdot \rvy_0 + \sqrt{1-e^{-2(T-t_t)}}\cdot \xi$.
\end{lemma}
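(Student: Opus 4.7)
The aim is to establish that $L(\vtheta)$ and the weighted score-matching loss on the right-hand side differ by a $\vtheta$-independent additive constant, from which the equality of argmins follows. Three steps drive the reduction: (i) conditioning the outer expectation over $\rvx_{[1:K]}\sim p_*$ on the prefix $\rvx_{[1:k-1]}$, (ii) converting the noise-prediction loss into a conditional-score-prediction loss using the noise/score reparametrization $\vs_{\vtheta}(\cdot|t,\vz)\propto \epsilon_{\vtheta}(\cdot|t,\vz)$ given at the start of the appendix, and (iii) applying Vincent's denoising-score-matching identity to swap the conditional target score $\grad\ln q_{t}(\cdot|\rvy_0)$ for the marginal conditional score $\grad\ln q_{t}(\cdot|\vx_{[1:k-1]})$.

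For step (i), the integrand in the $k$-th summand of $L(\vtheta)$ depends only on $\rvx_{[1:k]}$, so I would rewrite
\[
\E_{\rvx_{[1:K]}\sim p_*}[\cdot]\;=\;\E_{\rvx_{[1:k-1]}\sim p_{*,[1:k-1]}}\,\E_{\rvx_k\sim p_{*,k|[1:k-1]}}[\cdot],
\]
noting that the inner law of $\rvx_k$ is exactly the law of $\rvy_0$ appearing inside $\tilde{L}_{k,r}$. For step (ii), writing $\rvy^\prime = e^{-t}\rvy_0 + \sqrt{1-e^{-2t}}\,\xi$ with $t=T-t_r$ and using the Gaussian identity $\grad_{\rvy^\prime}\ln q_{t}(\rvy^\prime|\rvy_0) = -\xi/\sqrt{1-e^{-2t}}$, the reparametrization yields
\[
\bigl\|\epsilon_{\vtheta}(\rvy^\prime|t,\vz) - \xi\bigr\|^2 \;=\; (1-e^{-2t})\,\bigl\|\vs_{\vtheta}(\rvy^\prime|t,\vz) - \grad\ln q_{t}(\rvy^\prime|\rvy_0)\bigr\|^2,
\]
which is precisely what manufactures the factor $(1-e^{-2(T-t_r)})$ weighting $\tilde{L}_{k,r}$ on the right-hand side.

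For step (iii), I would invoke the classical identity in the form: for any function $\vs$ of $\rvy^\prime$ (whose remaining dependence on $\vx_{[1:k-1]}$ through $\vz = g_{\vtheta}(\vx_{[1:k-1]})$ is held fixed),
\[
\E_{\rvy_0,\xi}\bigl[\|\vs - \grad\ln q_{t}(\cdot|\rvy_0)\|^2\bigr] \;=\; \E_{\rvy^\prime\sim q_t(\cdot|\vx_{[1:k-1]})}\bigl[\|\vs - \grad\ln q_{t}(\cdot|\vx_{[1:k-1]})\|^2\bigr] + C(\vx_{[1:k-1]}, t),
\]
with $C = \E\|\grad\ln q_{t}(\cdot|\rvy_0)\|^2 - \E\|\grad\ln q_{t}(\cdot|\vx_{[1:k-1]})\|^2$. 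The derivation is a direct expand-and-integrate computation using Fubini through $\int q_t(\rvy^\prime|\rvy_0)\,p_{*,k|[1:k-1]}(\rvy_0|\vx_{[1:k-1]})\,\der\rvy_0 = q_t(\rvy^\prime|\vx_{[1:k-1]})$ and its gradient in $\rvy^\prime$. Assembling the three steps, summing over $k,r$, and averaging over $\rvx_{[1:k-1]}\sim p_{*,[1:k-1]}$ gives the identity of argmins.

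The one point deserving care, rather than mechanical calculation, is verifying that the additive constant $C$ really is independent of $\vtheta$ despite the silent dependence of $\vs_{\vtheta}$ on the autoregressive encoder $g_{\vtheta_{\text{ar}}}$: in the decomposition above, $C$ is expressed purely in terms of the true scores of $q_t$ and of $q_t(\cdot|\vx_{[1:k-1]})$, neither of which ever references $\vtheta$, so the separation is clean. Once this is checked, the argmin identity is immediate, and the same computation (taking $\grad_{\vtheta}$ under the expectation) also yields the gradient equality $\grad_{\vtheta} L^{\mathrm{DSM}} = \grad_{\vtheta} L^{\mathrm{SM}}$ stated as Lemma~\ref{lem:training_loss_equ_main} in the main text.
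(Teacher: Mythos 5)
Your proposal is correct and takes essentially the same route as the paper: condition on the prefix, convert the noise-prediction loss to a conditional-score-matching loss via the $\epsilon_\vtheta$/$\vs_\vtheta$ reparametrization (which produces the $(1-e^{-2(T-t_r)})$ weight), and then swap the conditional score $\grad\ln q_t(\cdot|\rvy_0)$ for the marginal score $\grad\ln q_t(\cdot|\vx_{[1:k-1]})$ at the cost of a $\vtheta$-independent constant. The only cosmetic difference is that you invoke Vincent's denoising score-matching identity as a black box, whereas the paper derives the same fact inline by expanding the square and handling the cross term $\E\langle\vs_\vtheta,\xi\rangle$ via Gaussian integration by parts followed by a change of variables and Fubini — but that derivation \emph{is} the proof of Vincent's identity, so the two are the same calculation.
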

\begin{proof}
    For the training loss $L(\vtheta)$, we consider the summation component for each pair $(k,r)$, i.e.,
    \begin{equation*}
        \begin{aligned}
            &\E_{\rvx_{[1:K]}\sim p_*, \xi\sim \mathcal{N}(\vzero, \mI_{d_k})}\left[\left\|\xi_{\vtheta}\left(e^{-(T-t_r)} \rvx_{k} - \sqrt{1-e^{-2(T-t_r)}}\xi \Big|T-t_r, f_{\vtheta}(\rvx_{[1:k-1]})\right) - \xi\right\|^2\right]\\
            & = \E_{\rvx_{[1:k-1]}\sim p_{*,[1:k-1]}}\left[\E_{\rvx_k\sim p_{*, k|[1:k-1]}, \xi\sim \mathcal{N}(\vzero, \mI_{d_k})}\left[\left\|\xi_{\vtheta}\left(e^{-(T-t_r)} \rvx_{k} - \sqrt{1-e^{-2(T-t_r)}}\xi \Big|r\eta, f_{\vtheta}(\rvx_{[1:k-1]})\right) - \xi\right\|^2\right]\right].
        \end{aligned}
    \end{equation*}
    For any given $\rvx_{[1:k-1]} = \vx_{[1:k-1]}$, we consider the SDE.~\ref{sde:ideal_condi_forward}, then we have
    \begin{align}
        \label{eq:training_loss_decompose}
        \begin{aligned}
        & \E_{\rvy_0\sim q_0, \xi\sim \mathcal{N}(\vzero,\mI_{d_k})}\left[\left\|\xi_{\vtheta}(e^{-(T-t_r)}\rvy_0+\sqrt{1-e^{-2(T-t_r)}}\xi|r\eta,\vz)-\xi\right\|^2\right]\\
        & =(1-e^{-2(T-t_r)})\cdot \E_{\rvy_0, \xi}\left[\left\|\vs_{\vtheta}(e^{-(T-t_r)}\rvy_0+\sqrt{1-e^{-2(T-t_r)}}\xi|T-t_r,\vz)+\frac{\xi}{\sqrt{1-e^{-2(T-t_r)}}}\right\|^2\right]\\
        & =(1-e^{-2(T-t_r)})\cdot \E_{\rvy_0, \xi} \left[\left\|\vs_{\vtheta}(e^{-(T-t_r)}\rvy_0+\sqrt{1-e^{-2(T-t_r)}}\xi|T-t_r,\vz)\right\|^2 \right.\\
        & \qquad \left. + \underbrace{\frac{2}{\sqrt{1-e^{-2(T-t_r)}}}\left<\vs_{\vtheta}(e^{-(T-t_r)}\rvy_0+\sqrt{1-e^{-2(T-t_r)}}\xi|T-t_r,\vz), \xi\right>}_{\mathrm{Term 1}}\right]+C.
        \end{aligned}
    \end{align}
    Suppose $\varphi_{\sigma}$ to be the density function of $\mathcal{N}(\vzero,\sigma^2\mI)$, considering Term 1 in Eq.~\ref{eq:training_loss_decompose}, we have
    \begin{equation*}
        \begin{aligned}
            &\mathrm{Term 1} =\frac{2}{\sqrt{1-e^{-2(T-t_r)}}} \cdot  \int \xi\cdot \left(\int \vs_{\vtheta}(e^{-(T-t_r)}\vy_0+\sqrt{1-e^{-2(T-t_r)}}\xi|(T-t_r),\vz)\cdot q_0(\vy_0)\der \vy_0\right) \cdot \varphi_1(\xi) \der \xi\\
            & = \frac{2}{\sqrt{1-e^{-2(T-t_r)}}} \cdot \int q_0(\vy_0)\cdot \left( \int \grad_{\xi}\cdot \vs_{\vtheta}(e^{-(T-t_r)}\vy_0+\sqrt{1-e^{-2(T-t_r)}}\xi|T-t_r,\vz) \cdot  \varphi_1(\xi) \der \xi \right) \der \vy_0\\
            & = 2\cdot \int q_0(\vy_0)\cdot \left(\int \grad_{\vy^\prime}\cdot \vs_{\vtheta}(\vy^\prime|T-t_r,\vz)\cdot \varphi_1\left(\frac{\vy^\prime - e^{-(T-t_r)}\vy_0 }{\sqrt{1-e^{-2(T-t_r)}}}\right) \der \vy^\prime \right) \der \vy_0\\
            & =2\cdot \int \grad_{\vy^\prime}\cdot \vs_{\vtheta}(\vy^\prime|T-t_r,\vz) \cdot q_{T-t_r}(\vy^\prime) \der \vy^\prime = -2 \cdot \int \vs_{\vtheta}(\vy^\prime|T-t_r, \vz)\cdot \grad\ln q_{T-t_r}(\vy^\prime) \cdot q_{T-t_r}(\vy^\prime) \der \vy^\prime 
        \end{aligned}
    \end{equation*}
    where the first equation follows from 
    \begin{equation*}
        \int \left<\xi, \vv(\xi)\right> \varphi(\xi)\der \xi = \int \left<-\grad\ln \varphi_1(\xi), \vv(\xi) \right>\varphi_1(\xi) \der\xi = - \int \left<\grad \varphi_1(\xi), \vv(\xi)\right> \der \xi = \int \grad_{\xi}\cdot \vv(\xi) \cdot \varphi_1(\xi)\der \xi,
    \end{equation*}
    the second equation follows from introducing $\vy^\prime = e^{-(T-t_r)}\vy_0 + \sqrt{1-e^{-2(T-t_r)}}\xi$, and the last equation follows from integral by part. 
    Plugging the above equation into Eq.~\ref{eq:training_loss_decompose}, we have
    \begin{equation}
        \label{eq:training_loss_equ}
        \begin{aligned}
            & \E_{\rvy_0\sim q_0, \xi\sim \mathcal{N}(\vzero,\mI_{d_k})}\left[\left\|\epsilon_{\vtheta}(e^{-(T-t_r)}\rvy_0+\sqrt{1-e^{-2(T-t_r)}}\xi|T-t_r,\vz)-\xi\right\|^2\right]\\
            & =(1-e^{-2(T-t_r)})\cdot \left( \E_{\rvy_0, \xi}\left[\left\|\vs_{\vtheta}(e^{-(T-t_r)}\rvy_0+\sqrt{1-e^{-2(T-t_r)}}\xi|(T-t_r,\vz)\right\|^2\right] \right.\\
            &\qquad\left. - 2\E_{\rvy^\prime\sim q_{(T-t_r)}}\left[\left<\vs_{\vtheta}(\rvy^\prime|T-t_r, \vz)\cdot,\grad\ln q_{T-t_r)}(\rvy^\prime)\right>\right] + C\right)\\
            & =(1-e^{-2(T-t_r)})\cdot \left( \E_{\rvy^\prime\sim q_{T-t_r}}\left[\left\|\vs_{\vtheta}(\rvy^\prime|(T-t_r),\vz) - \grad\ln q_{T-t_r}(\rvy^\prime)\right\|^2\right] + C^\prime \right).
        \end{aligned}
    \end{equation}
    Here, we do not care about the explicit form of $C^\prime$ since it is independent with $\vtheta$ and will finally vanish with $\arg\min$ functions. 
    According to Eq.~\ref{sde:ideal_condi_forward}, we know $\{\rvy\}_{t=0}^T$ is an OU process.
    Suppose 
    \begin{equation}
        \label{def:inner_prac_loss}
        \tilde{L}_{k+1,r}(\vtheta|\vx_{[1:k]}) = \E_{\rvy_0\sim p_{*,k+1|[1:k]}(\cdot|\vx_{[1:k]}),\,\xi\sim \mathcal{N}(\vzero,\mI)}\left[\left\|\vs_{\vtheta}(\rvy^\prime|(T-t_r),g_{\vtheta}(\vx_{[1:k]})) - \grad\ln q_{T-t_r}(\rvy^\prime|\vx_{[1:k]})\right\|^2\right] 
    \end{equation}
    and $\rvy^\prime = e^{-(T-t_r)}\cdot \rvy_0 + \sqrt{1-e^{-2(T-t_t)}}\cdot \xi$,
    then we have
    \begin{equation*}
        \begin{aligned}
            \tilde{L}(\vtheta) =& \frac{1}{K}\sum_{k=1}^K \E_{\rvx_{[1:k-1]}\sim p_{*,[1:k-1]}}\left[\frac{1}{R}\sum_{r=0}^{R-1} (1-e^{-2(T-t_r)})\cdot \tilde{L}_{k,r}(\vtheta|\rvx_{[1:k-1]})\right]
        \end{aligned}
    \end{equation*}
    Since there is $L(\vtheta) = \tilde{L}(\vtheta)+C$, hence the proof is completed.
\end{proof}

\begin{lemma}
    \label{lem:init_smoothness_bound}
    Suppose that we have~\ref{a1}, then for any $k>1$, $\vx,\vx^\prime\in \R^{d_k}$ and $\vy\in \R^{d_1+d_2+\ldots+d_{k-1} }$, we have
    \begin{equation*}
        \left\|\grad\ln p_{*, k|[1:k-1]}(\vx|\vy) -  \grad\ln p_{*, [k|[1:k-1]]}(\vx^\prime|\vy)\right\|\le 2L\cdot \left\|\vx - \vx^\prime\right\|. 
    \end{equation*}
    Besides, we have $\left\|\grad^2 \ln p_{*,[1:1]}(\cdot)\right\|\le 2L$
\end{lemma}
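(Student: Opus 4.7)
The plan is to reduce the claim to a bound on the Hessian block of the log-marginal $\ln p_{*,[1:k]}$ in the last coordinate. Since $\vx$ appears only in the numerator of the conditional density $p_{*,k|[1:k-1]}(\vx|\vy) = p_{*,[1:k]}(\vy,\vx)/p_{*,[1:k-1]}(\vy)$, one has $\grad_{\vx}\ln p_{*,k|[1:k-1]}(\vx|\vy) = \grad_{\vx}\ln p_{*,[1:k]}(\vy,\vx)$, so the desired Lipschitz estimate will follow by integrating $\grad^2_{\vx\vx}\ln p_{*,[1:k]}$ along the segment from $\vx'$ to $\vx$, once I show $\|\grad^2_{\vx\vx}\ln p_{*,[1:k]}(\vy,\vx)\|\le 2L$ uniformly.

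For that Hessian bound, I would use the standard marginalization identity. Writing $\vz$ for the marginalized-out coordinates $\vx_{[k+1:K]}$ and $\pi(\vz|\vy,\vx)$ for the conditional density of $\vz$ given $(\vy,\vx)$, differentiating $\ln\int p_*(\vy,\vx,\vz)\,\der\vz$ once gives $\grad_{\vx}\ln p_{*,[1:k]}(\vy,\vx) = \E_{\vz\sim\pi(\cdot|\vy,\vx)}[\grad_{\vx}\ln p_*(\vy,\vx,\vz)]$. Differentiating a second time in $\vx$ under the integral and using $\grad_{\vx}\ln\pi(\vz|\vy,\vx) = \grad_{\vx}\ln p_* - \grad_{\vx}\ln p_{*,[1:k]}$ produces the classical decomposition
\[
\grad^2_{\vx\vx}\ln p_{*,[1:k]}(\vy,\vx) \;=\; \E_{\vz|\vy,\vx}\!\bigl[\grad^2_{\vx\vx}\ln p_*(\vy,\vx,\vz)\bigr] \;+\; \Cov_{\vz|\vy,\vx}\!\bigl[\grad_{\vx}\ln p_*(\vy,\vx,\vz)\bigr].
\]

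Now Assumption~\ref{a1} controls each piece. The first term is a conditional expectation of the $(\vx,\vx)$-block of $\grad^2\ln p_*$, which has operator norm at most $\|\grad^2\ln p_*\|\le L$ since any diagonal block of a symmetric matrix inherits the operator-norm bound. The second term is the covariance of a random vector whose norm is uniformly bounded by $\|\grad\ln p_*\|\le\sqrt{L}$; the operator norm of a covariance is dominated by the second moment of the underlying vector, yielding a bound of $L$. Adding the two gives the required $2L$ bound. The statement for $\grad^2\ln p_{*,[1:1]}$ follows by the identical argument, with $\vy$ taken to be an empty condition and $\vz$ the remaining $K-1$ blocks.

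The main delicate step is justifying the interchange of differentiation and integration in deriving the Hessian decomposition; this is routine under the uniform bounds of Assumption~\ref{a1} via dominated convergence, and no isoperimetric assumption is invoked. Everything else reduces to elementary operator-norm and second-moment estimates.
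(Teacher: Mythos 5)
Your proposal is correct and follows essentially the same route as the paper: both arguments reduce the conditional Lipschitz bound to a bound on the relevant Hessian block of $\ln p_{*,[1:k]}$, and both obtain that bound from the standard marginalization identity expressing the Hessian of the log-marginal as (conditional expectation of the Hessian block of $\ln p_*$) plus (conditional covariance of the gradient block), each controlled by Assumption~\ref{a1} to give $L+L=2L$. The paper bounds the full Hessian $\grad^2_{\vx_{[1:k]}}\ln p_{*,[1:k]}$ and then projects onto the $k$-th block, whereas you differentiate in $\vx_k$ only and bound the $(\vx_k,\vx_k)$-block directly; these are the same estimate seen from two sides, and no substantive difference in content or generality results.
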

\begin{proof}
    For any $K\ge k\ge 1$, suppose the random variable
    \begin{equation*}
        \left[\rvx_1, \rvx_2, \ldots, \rvx_k, \rvx_{k+1},\ldots, \rvx_K\right] \sim p_*,\quad \rvx\coloneqq \left[\rvx_1, \rvx_2, \ldots, \rvx_k\right],\quad \mathrm{and}\quad \rvy = \left[\rvx_{k+1},\ldots, \rvx_K\right].
    \end{equation*}
    According to the definition Eq.~\ref{eq:marginal_def_case}, the marginal distribution of $\rvx$ is
    \begin{equation*}
        p_{*,[1:k]}(\vx) = \int p_*(\vx,\vy) \der \vy,
    \end{equation*}
    which implies
    \begin{equation*}
        \begin{aligned}
            &\grad_{\vx}\ln p_{*,[1:k]}(\vx) = \frac{\grad_{\vx} p_{*,[1:k]}(\vx)}{p_{*,[1:k]}(\vx)} = \frac{ \partial_{\vx}\int \exp\left(-f_*(\vx,\vy)\right)\der\vy }{\int \exp\left(-f_*(\vx,\vy)\right)\der\vy} = \frac{\int -\partial_{\vx} f_*(\vx,\vy)\cdot \exp\left(-f_*(\vx,\vy)\right) \der \vy}{\int \exp\left(-f_*(\vx,\vy)\right)\der\vy}.
        \end{aligned}
    \end{equation*}
    Furthermore, if we check its Hessian matrix, it has
    \begin{equation*}
        \begin{aligned}
            & \grad^2_{\vx}\ln p_{*,[1:k]}(\vx) = \partial_{\vx}\left[ \frac{\int -\partial_{\vx} f_*(\vx,\vy)\cdot \exp\left(-f_*(\vx,\vy)\right) \der \vy}{\int \exp\left(-f_*(\vx,\vy)\right)\der\vy}\right]\\
            & = \frac{\int \left[-\partial^2_{\vx}f_*(\vx,\vy) + \partial_{\vx} f_*(\vx,\vy)\partial^\top_{\vx} f_*(\vx,\vy)\right] \cdot \exp\left(-f_*(\vx,\vy)\right) \der \vy }{\int \exp\left(-f_*(\vx,\vy)\right)\der\vy}\\
            &\quad - \left[ \frac{\int \partial_{\vx} f_*(\vx,\vy)\cdot \exp\left(-f_*(\vx,\vy)\right) \der \vy}{\int \exp\left(-f_*(\vx,\vy)\right)\der\vy}\right]\cdot \left[ \frac{\int \partial_{\vx} f_*(\vx,\vy)\cdot \exp\left(-f_*(\vx,\vy)\right) \der \vy}{\int \exp\left(-f_*(\vx,\vy)\right)\der\vy}\right]^\top \\
            & = \E_{\rvy\sim p_{*,[k+1:K]|[1:k]}(\cdot|\vx)}\left[-\partial^2_{\vx}f_*(\vx,\rvy)\right] + \Var_{\rvy\sim p_{*,[k+1:K]|[1:k]}(\cdot|\vx)}\left[\partial_{\vx} f_*(\vx,\rvy)\right].
        \end{aligned}
    \end{equation*}
    Then, such a matrix can be relaxed to
    \begin{equation}
        \label{ineq:marginal_hessian_bound}
        \begin{aligned}
            &\left\|\grad^2_{\vx}\ln p_{*,[1:k]}(\vx)\right\|\le \left\|\E_{\rvy\sim p_{*,[k+1:K]|[1:k]}(\cdot|\vx)}\left[\partial^2_{\vx}f_*(\vx,\rvy)\right]\right\| + \left\|\Var_{\rvy\sim p_{*,[k+1:K]|[1:k]}(\cdot|\vx)}\left[\partial_{\vx} f_*(\vx,\rvy)\right]\right\|\\
            & \le \left\|\E_{\rvy\sim p_{*,[k+1:K]|[1:k]}(\cdot|\vx)}\left[\left[\mI, \vzero\right]\cdot\grad^2 f_*(\vx,\rvy)\cdot \left[
            \begin{matrix}
                \mI\\
                \vzero
            \end{matrix}
            \right]\right]\right\| + \left\|\E_{\rvy\sim p_{*,[k+1:K]|[1:k]}(\cdot|\vx)}\left[\partial_{\vx} f_*(\vx,\rvy)\partial^\top_{\vx} f_*(\vx,\rvy)\right]\right\|\\
            &\le \left\|\E_{\rvy\sim p_{*,[k+1:K]|[1:k]}(\cdot|\vx)}\left[\grad^2 f_*(\vx,\rvy)\right]\right\| + \left\|\E_{\rvy\sim p_{*,[k+1:K]|[1:k]}(\cdot|\vx)}\left[\left\|\partial_{\vx} f_*(\vx,\rvy)\right\|^2\cdot \mI\right]\right\| \le 2L,
        \end{aligned}
    \end{equation}
    where the last inequality follows from~\ref{a1}.
    
    Then, suppose $k\ge 1$, we have
    \begin{equation*}
        p_{*,k|[1:k-1]}(\vx_k|\vx_{[1:k-1]}) = \frac{p_{*,[1:k]}(\vx_{[1:k]})}{\int p_{*,[1:k]}(\vx_1,\vx_2,\ldots,\vx_k)\der \vx_k} = p_{*,[1:k]}(\vx_{[1:k]})\cdot Z(\vx_{[1:k-1]}).
    \end{equation*}
    For notation simplicity, we define $\vy\coloneqq (\vx_1,\vx_2,\ldots,\vx_{k-1})$, then it has
    \begin{equation*}
        \begin{aligned}
            &\left\|\grad\ln p_{*,k|[1:k-1]}(\vx|\vy) - \grad \ln p_{*,k|[1:k-1]}(\vx^\prime|\vy)\right\| = \left\|\partial_{\vx} \ln p_{*,k|[1:k-1]}(\vy,\vx) - \partial_{\vx^\prime} \ln p_{*,k|[1:k-1]}(\vy,\vx^\prime)\right\|\\
            & = \left\|\left[\vzero,\mI\right]\cdot \left[ \grad\ln p_{*,k|[1:k-1]}(\vy,\vx) - \grad \ln p_{*,k|[1:k-1]}(\vy,\vx^\prime)\right]\right\|\le 2L\cdot \left\|\vx - \vx^\prime\right\|
        \end{aligned} 
    \end{equation*}
    where the last inequality follows from the Hessian upper bound of marginal $p^{(k)}$, i.e., Eq.~\ref{ineq:marginal_hessian_bound}.
    Hence, the proof is completed.
\end{proof}

\begin{lemma}[Bounded Second Moments]
    \label{lem:init_second_moment_bounded}
    Suppose we have \ref{a2}, then for any $K>k^\prime \ge 1$, we have
    \begin{equation*}
        \E_{\rvx_{[1:k^\prime]}\sim p_{*,[1:k^\prime]}}[\|\rvx_{1:k^\prime}\|^2]\le m_0
    \end{equation*}
    and
    \begin{equation*}
        \sum_{k=0}^{k^\prime} \E_{\rvx_{[1:k]}\sim p_{*,[1:k]}}\left[\E_{\rvy \sim p_{*,k+1|[1:k]}(\cdot|\rvx_{[1:k]})}\left[\|\rvy\|^2\right]\right]
    \end{equation*}
\end{lemma}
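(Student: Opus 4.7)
The plan is to prove both bounds directly from Assumption~\ref{a2}, using nothing more than the tower property of conditional expectation and the monotonicity of the squared Euclidean norm when one enlarges the coordinate set. No SDE or score-matching machinery is needed, and the proof is really just a bookkeeping exercise in the partition notation introduced in Section~\ref{sec:pre}.

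For the first inequality I would note that, since the $K$ patches partition the coordinates of $\rvx = \rvx_{[1:K]}$ into disjoint blocks, the pointwise bound $\|\rvx_{[1:k']}\|^2 = \sum_{j=1}^{k'}\|\rvx_j\|^2 \le \sum_{j=1}^{K}\|\rvx_j\|^2 = \|\rvx\|^2$ holds. Taking expectation under $p_*$ and invoking the marginalization identity (Eq.~\ref{eq:marginal_def_case}) to rewrite $\E_{p_*}[\|\rvx_{[1:k']}\|^2] = \E_{p_{*,[1:k']}}[\|\rvx_{[1:k']}\|^2]$ reduces the claim immediately to Assumption~\ref{a2}.

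For the second inequality I would apply the tower property patch by patch. By the definition of the conditional density in Eq.~\ref{eq:conditional_def}, for each fixed $k$
\begin{equation*}
\E_{\rvx_{[1:k]}\sim p_{*,[1:k]}}\!\left[\E_{\rvy\sim p_{*,k+1|[1:k]}(\cdot|\rvx_{[1:k]})}[\|\rvy\|^2]\right] = \E_{p_{*,[1:k+1]}}[\|\rvx_{k+1}\|^2],
\end{equation*}
where the $k=0$ term is to be read as $\E_{p_{*,1}}[\|\rvx_1\|^2]$, in line with the convention $p_{*,1|[1:0]} = p_{*,1}$ already adopted in Section~\ref{sec:me_ARD}. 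Interchanging the sum and the expectation then gives
\begin{equation*}
\sum_{k=0}^{k'} \E_{p_{*,[1:k+1]}}[\|\rvx_{k+1}\|^2] \;=\; \E_{p_{*,[1:k'+1]}}\!\left[\sum_{j=1}^{k'+1}\|\rvx_j\|^2\right] \;=\; \E_{p_{*,[1:k'+1]}}[\|\rvx_{[1:k'+1]}\|^2],
\end{equation*}
which is at most $m_0$ by the first part applied with $k'+1$ in place of $k'$ (or, when $k'+1 = K$, directly by Assumption~\ref{a2}).

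The main obstacle is really only a bookkeeping check: one must verify that the inductive use of the first part is legal, namely that $k'+1 \le K$. This is precisely ensured by the hypothesis $K > k'$, so both parts collapse into a single invocation of the total second-moment bound in Assumption~\ref{a2}. I would not expect any subtlety to arise beyond making sure that the edge case $k'+1 = K$ (where the first inequality becomes the defining bound of Assumption~\ref{a2} itself) is handled explicitly.
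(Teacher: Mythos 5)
Your proof is correct and follows essentially the same route as the paper's: both parts reduce to marginalizing under $p_*$ and invoking Assumption~\ref{a2}. The only cosmetic difference is in the second part: you stop the sum at $k'$, combine the summands into a single expectation under $p_{*,[1:k'+1]}$ of $\|\rvx_{[1:k'+1]}\|^2$, and then call the first part (with the $k'+1 = K$ edge case handled by Assumption~\ref{a2} directly), whereas the paper marginalizes every term back to the full joint $p_*$, sums over $k = 0$ to $K-1$ so that $\sum_k \|\vx_{k+1}\|^2 = \|\vx\|^2$, and then truncates the sum by non-negativity of each term; both variants are equally valid and yield the identical bound.
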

\begin{proof}
    For the second moment of $p_*$ and $K > k\ge 1$, we have
    \begin{equation*}
        \begin{aligned}
        &\E_{\rvx_{[1:k]}\sim p_{*,[1:k]}}[\|\rvx_{1:k}\|^2] = \int \|\vx_{[1:k]}\|^2\cdot p_{*,[1:k]}(\vx_{[1:k]})\der \vx_{[1:k]}\\
        & = \int \|\vx_{[1:k]}\|^2 \cdot \int p_*(\vx_{[1:k]},\vx_{[k+1:K]}) \der \vx_{[1:k]} \der \vx_{[k+1:K]}\\
        & \le \int \|(\vx_1,\vx_{[2:K-1]})\|^2 p_*(\vx_{1:k},\vx_{[k+1:K]}) \der \vx_{1:k} \der \vx_{[k+1:K]} = m_0.
        \end{aligned}
    \end{equation*}
    Besides, we have
    \begin{equation*}
        \begin{aligned}
            &\sum_{k=0}^{K-1} \E_{\rvx_{[1:k]}\sim p_{*,[1:k]}}\left[\E_{\rvy \sim p_{*,k+1|[1:k]}(\cdot|\rvx_{[1:k]})}\left[\|\rvy\|^2\right]\right]\\
            &=\sum_{k=0}^{K-1} \int \|\vx_{k+1}\|^2 \cdot p_{*,[1:k+1]}(\vx_{[1:k]},\vx_{k+1})\der (\vx_{1:k},\vx_{k+1})\\
            &= \sum_{k=0}^{K-1} \int \|\vx_{k+1}\|^2 p_{*}(\vx_{[1:K]}) \der \vx_{[1:K]} = \int \sum_{k=0}^{K-1}\|\vx_{k+1}\|^2 p_{*}(\vx_{[1:K]}) \der \vx_{[1:K]} = m_0 .
        \end{aligned}
    \end{equation*}
    Hence, the proof is completed.
\end{proof}

\section{Inference Complexity}
In this section, the notation of the inference process follows from those defined in Section~\ref{sec:app_probsetting}.

\begin{theorem}
    Suppose Assumption~\ref{a2}-\ref{a3} hold, 
    \begin{equation*}
        \delta\le \Big(0, \ln \sqrt{(4L)^{-2}+1}+(4L)^{-1}\Big]
    \end{equation*}
    if Alg.~\ref{alg:ard_outer} chooses the time sequence $\{\eta_r\}_{r=0}^{R-1}$ as
    \begin{equation*}
        \eta_r = \left\{
            \begin{aligned}
                & \eta && \text{when}\quad 0\le r<M\\
                & \eta/(1+\eta)^{r-M+1} && \text{when}\quad M\le r<N\\
                & \eta && \text{when}\quad N\le r\le R
            \end{aligned}
        \right.
    \end{equation*}
    where 
    \begin{equation*}
        M=\frac{T-1}{\eta},\quad N=M+\frac{2\ln(1/\delta)}{\eta},\quad \text{and}\quad R=N+\frac{\delta}{\eta},
    \end{equation*}
    then, the generated samples
    $[\hat{\rvx}_1, \hat{\rvx}_2, \ldots, \hat{\rvx}_K]$ follows the distribution $\hat{p}_*$, which satisfies
    \begin{equation*}
         \begin{aligned}
             &\KL{p_*}{\hat{p}_*}\lesssim  2e^{-2T}L\cdot \left(m_0+ d\right)\\
            &\quad+ (L^2R\eta^2+T\eta)\cdot d  + \eta m_0 + \eta KR\cdot \epsilon_{\mathrm{score}}^2.
         \end{aligned}
    \end{equation*}
\end{theorem}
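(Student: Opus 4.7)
The plan is to reduce the joint-distribution KL error to a sum of per-patch conditional KL errors, and then apply Lemma~\ref{lem:condi_rtk_error_eachk_main} patch-by-patch. Concretely, the first step is the chain rule of $f$-divergence (Eq.~\ref{ineq:kl_gap_accu}), giving
\begin{equation*}
\KL{p_*}{\hat{p}_*}\le \KL{p_{*,1}}{\hat{p}_{*,1}} + \sum_{k=1}^{K-1} \E_{\rvx_{[1:k]}\sim p_{*,[1:k]}}\bigl[\KL{p_{*,k+1|[1:k]}(\cdot|\rvx_{[1:k]})}{\hat{p}_{*,k+1|[1:k]}(\cdot|\rvx_{[1:k]})}\bigr].
\end{equation*}
The first term on the right is a special instance of the conditional error (with an empty conditioning), so both terms will be controlled uniformly by invoking Lemma~\ref{lem:condi_rtk_error_eachk_main} for each $k$.

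Next, for a fixed conditioning $\vx_{[1:k]}$, I would treat Alg.~\ref{alg:ard_inner} as the exponential-integrator approximation to the reverse of the OU process SDE.~\ref{sde:ideal_condi_forward}. Splitting by data-processing and Girsanov, the per-patch conditional KL decomposes into (i) an initialization gap $\KL{q^\gets_0}{\hat q_0}$, bounded adaptively by $e^{-2T}\cdot\bigl(2Ld_{k+1}+\E_{p_{*,k+1|[1:k]}}[\|\rvy\|^2]\bigr)$ from the OU convergence lemma, (ii) a score-estimation error $\eta\sum_{r=0}^{R-1}\tilde L_{k+1,r}(\vtheta|\vx_{[1:k]})$ that uses the common step size upper bound $\eta$, and (iii) a discretization error $\sum_r \int_{t_r}^{t_{r+1}} \E\|\grad\ln q_{T-t_r}(\tilde\rvy_{t_r})-\grad\ln q_{T-t}(\tilde\rvy_t)\|^2 \der t$. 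This is exactly the content of Lemma~\ref{lem:condi_rtk_error_eachk_main}.

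The hard part is controlling the discretization error uniformly in $\vx_{[1:k]}$ while respecting the fact that the initial distribution $p_{*,k+1|[1:k]}(\cdot|\vx_{[1:k]})$ has no uniform moment bound. This is where the three-stage step-size schedule comes in: on the interval $[0,T-\delta]$ (indices $0\le r<N$) I would follow the Benton-style argument, writing $E_{s,t}=\E\|\grad\ln \tilde q_t(\tilde\rvy_t)-\grad\ln \tilde q_s(\tilde\rvy_s)\|^2$, applying It\^o's formula to obtain a differential inequality, and integrating to get a bound of the form $(L^2R\eta^2+T\eta)\cdot d_{k+1} + \eta\cdot\E_{p_{*,k+1|[1:k]}}[\|\rvy\|^2]$; the constant-$\eta$ stage bridges early times, the geometrically decaying $\eta_r=\eta/(1+\eta)^{r-M+1}$ stage matches the blow-up of $\|\grad^2\ln q_t\|$ near $t=0$, and the final short constant stage on $(T-\delta,T]$ uses the smoothness of $\grad\ln q_t$ from Lemma~\ref{lemma:c.9} (applicable since the choice of $\delta$ is in the admissible range), so the discretization error there is bounded via Lipschitz-type arguments on the differences of particles.

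Having established the adaptive per-patch bound, the final step is to take $\E_{\rvx_{[1:k]}\sim p_{*,[1:k]}}$ and sum over $k=0,\ldots,K-1$. The conditional second-moment terms sum (via Lemma~\ref{lem:init_second_moment_bounded_main}) to at most $m_0$, giving the $e^{-2T}L(m_0+d)+(L^2R\eta^2+T\eta)d+\eta m_0$ contributions after combining patch-dimensions $\sum_k d_{k+1}=d$. The score-matching pieces telescope into $(1/(KR))\sum_{k,r}\E_{p_{*,[1:k-1]}}[\tilde L_{k,r}(\vtheta|\rvx_{[1:k-1]})]$, which by Lemma~\ref{lem:training_loss_equ_main} matches Assumption~\ref{a3} up to the weight $(1-e^{-2(T-t_r)})\le 1$; multiplying back by $\eta KR$ yields the $\eta KR\,\epsilon_{\mathrm{score}}^2$ term and completes the stated bound. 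The main obstacle, as noted, is threading the non-uniform step sizes through the Girsanov/It\^o computation cleanly while keeping the dependence on $\vx_{[1:k]}$ explicit so that the outer expectation can be taken at the very end.
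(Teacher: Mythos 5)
Your proposal is correct and follows essentially the same route as the paper's proof: chain rule of $f$-divergence to reduce to per-patch conditional KLs, application of Lemma~\ref{lem:condi_rtk_error_eachk_main} (initialization $+$ Girsanov score-estimation $+$ three-stage discretization error), then taking the outer expectation and summing over $k$ using Lemma~\ref{lem:init_second_moment_bounded_main} and Assumption~\ref{a3}. One small misattribution: Assumption~\ref{a3} is stated directly in terms of the score-matching loss $L^{\mathrm{SM}}_{k,r}$, so it is applied as-is in the final step; Lemma~\ref{lem:training_loss_equ_main} and the weight $(1-e^{-2(T-t_r)})$ play no role inside the proof of the theorem itself, only in justifying that minimizing the implemented DSM objective is equivalent to minimizing the SM objective.
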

\begin{proof}
    Suppose the inference process generate the sample $\hat{\rvx} = (\hat{\rvx}_1^, \ldots, \hat{\rvx}_K)$ with density function $\hat{p}_*$ which satisfies
    \begin{equation}
        \hat{p}_*(\hat{\vx}_{[1:K]}) = \hat{p}_{*, 1}(\hat{\vx}_1)\cdot \prod_{k=2}^K \hat{p}_{*,k|[1:k-1]}(\hat{\vx}_k|\hat{\vx}_{1:k-1}).
    \end{equation}
    We expect to have $\TVD{p_*}{\hat{p}_*}\le \epsilon$ or $\KL{p_*}{\hat{p}_*}\le 2\epsilon^2$, which can be relaxed to
    \begin{equation*}
        \begin{aligned}
            \KL{p_*}{\hat{p}_*}\le \KL{p_{*,[1:K-1]}}{\hat{p}_{*,[1:K-1]}} + \E_{\rvx_{[1:K-1]}\sim p_{*,[1:K-1]}}\left[\KL{p_{*,K|[1:K-1]}(\cdot|\rvx_{[1:K-1]})}{\hat{p}_{*,K|[1:K-1]}(\cdot|\rvx_{[1:K-1]})}\right]
        \end{aligned}
    \end{equation*}
    following from the chain rule of TV distance, i.e., Lemma~\ref{lem:tv_chain_rule}.
    Within a recursive manner, we have 
    \begin{equation}
        \label{ineq:infer_error_0}
        \begin{aligned}
            \KL{p_*}{\hat{p}_*}\le \KL{p_{*,1}}{\hat{p}_{*,1}} + \sum_{k=1}^{K-1} \E_{\rvx_{[1:k]}\sim p_{*,[1:k]}}\left[\KL{p_{*,k+1|[1:k]}(\cdot|\rvx_{[1:k]})}{\hat{p}_{*,k+1|[1:k]}(\cdot|\rvx_{[1:k]})}\right].
        \end{aligned}
    \end{equation}
     According to Lemma~\ref{lem:condi_rtk_error_eachk} and Remark~\ref{rmk:init_rev_error}, Eq.~\ref{ineq:infer_error_0} can further be relaxed to 
     \begin{equation}   
        \label{ineq:final_kl_mid_0}
        \begin{aligned}
            &\KL{p_*}{\hat{p}_*}\lesssim e^{-2T}\cdot \left(2Ld_{1} + \E_{p_{*,1}}\left[\|\rvy\|^2\right]\right) + \eta\cdot \sum_{r=0}^{R-1} \tilde{L}_{1,r}(\vtheta) +  d_{1}L^2R\eta^2 + d_{1}T\eta+ \eta \E_{p_{*,1}}\left[\|\rvy\|^2\right]\\
            &\quad  + \underbrace{\sum_{k=1}^{K-1}\E_{\rvx_{[1:k]}\sim p_{*,[1:k]}}\left[e^{-2T}\cdot \left(2Ld_{k+1} + \E_{p_{*,k+1|[1:k]}(\cdot|\vx_{[1:k]})}\left[\|\rvy\|^2\right] \right)\right]}_{\text{Term 1}} \\
            & \underbrace{+ \sum_{k=1}^{K-1}\E_{\rvx_{[1:k]}\sim p_{*,[1:k]}}\left[\eta\cdot \sum_{r=0}^{R-1} \tilde{L}_{k+1,r}(\vtheta|\vx_{[1:k]}) +  d_{k+1}L^2R\eta^2 + d_{k+1}T\eta+ \eta \E_{p_{*,k+1|[1:k]}(\cdot|\vx_{[1:k]})}\left[\|\rvy\|^2\right]\right]}_{\text{Term 2}}.
        \end{aligned}
    \end{equation}
    For Term 1 of Eq.~\ref{ineq:final_kl_mid_0}, we have
    \begin{equation}
        \label{ineq:final_kl_mid_0_term_1}
        \begin{aligned}
            \text{Term 1} = e^{-2T}\cdot \left(2L\sum_{k=1}^{K-1}d_{k+1} + \E_{\rvx_{[1:k]}\sim p_{*,[1:k]}}\left[\E_{p_{*,k+1|[1:k]}(\cdot|\vx_{[1:k]})}\left[\|\rvy\|^2\right]\right]\right),
        \end{aligned}
    \end{equation}
    For Term 2 of Eq.~\ref{ineq:final_kl_mid_0}, with the same technique, we have
    \begin{equation}
        \label{ineq:final_kl_mid_0_term_2}
        \begin{aligned}
            \text{Term 2} =& \eta\cdot \sum_{k=1}^{K-1}\sum_{r=0}^{R-1} \E_{\rvx_{[1:k]}\sim p_{*,[1:k]}}\left[\tilde{L}_{k+1,r}(\vtheta|\vx_{[1:k]})\right] + (L^2R\eta^2+T\eta)\cdot \sum_{k=1}^{K-1}d_{k+1}\\
            &+ \eta\cdot \sum_{k=1}^K\E_{\rvx_{[1:k]}\sim p_{*,[1:k]}}\left[\E_{p_{*,k+1|[1:k]}(\cdot|\vx_{[1:k]})}\left[\|\rvy\|^2\right]\right].
        \end{aligned}
    \end{equation}
    Plugging Eq.~\ref{ineq:final_kl_mid_0_term_1} and Eq.~\ref{ineq:final_kl_mid_0_term_2} into Eq.~\ref{ineq:final_kl_mid_0}, we have
    \begin{equation*}
        \small
        \begin{aligned}
            \KL{p_*}{\hat{p}_*}\lesssim & 2e^{-2T}L\cdot \sum_{k=0}^{K-1} \left(d_{k+1} + \E_{\rvx_{[1:k]}\sim p_{*,[1:k]}}\left[\E_{\rvy \sim p_{*,k+1|[1:k]}(\cdot|\rvx_{[1:k]})}\left[\|\rvy\|^2\right]\right] \right)\\
            & + \eta\cdot \sum_{k=0}^{K-1}\sum_{r=0}^{R-1} \E_{\rvx_{[1:k]}\sim p_{*,[1:k]}}\left[\tilde{L}_{k+1,r}(\vtheta|\vx_{[1:k]})\right] + (L^2R\eta^2+T\eta)\cdot \sum_{k=0}^{K-1}d_{k+1} \\
            & + \eta\cdot \sum_{k=0}^K\E_{\rvx_{[1:k]}\sim p_{*,[1:k]}}\left[\E_{p_{*,k+1|[1:k]}(\cdot|\vx_{[1:k]})}\left[\|\rvy\|^2\right]\right]\\
            \le & 2e^{-2T}L\cdot \left(m_0+ \sum_{k=1}^{K} d_{k}\right) + \frac{\eta KR}{KR}\cdot \sum_{k=0}^{K-1}\sum_{r=0}^{R-1} \E_{\rvx_{[1:k]}\sim p_{*,[1:k]}}\left[\tilde{L}_{k+1,r}(\vtheta|\vx_{[1:k]})\right] + (L^2R\eta^2+T\eta)\cdot \sum_{k=0}^{K-1}d_{k+1}  + \eta m_0\\
            \le & 2e^{-2T}L\cdot \left(m_0+ \sum_{k=1}^{K} d_{k}\right) + (L^2R\eta^2+T\eta)\cdot \sum_{k=0}^{K-1}d_{k+1}  + \eta m_0 + \eta KR\cdot \epsilon_{\mathrm{score}}^2
        \end{aligned}
    \end{equation*}
    where the second inequality follows from Lemma~\ref{lem:init_second_moment_bounded} and the last inequality follows from a small training error, i.e., \ref{a3}.
    Without loss of generality, we suppose $L\ge 1$, 
    \begin{equation*}
        d= \sum_{k=1}^K d_k\quad \text{and}\quad c=\ln \frac{\sqrt{\frac{1}{4L^2} + 4} + \frac{1}{2L}}{2} < 1,
    \end{equation*}
    then by requiring
    \begin{equation}
        \label{def:hyper_req}
        \begin{aligned}
            &T = \Theta\left(\ln \frac{8L(m_0 + d)}{\epsilon^2}\right),\quad   R \eta = \Theta(T+\ln 1/c) =\Theta\left(\ln \frac{16L(m_0 + d)}{\epsilon^2\cdot \sqrt{\frac{1}{4L^2} + 4} + \frac{1}{2L}} \right),\\
            &\epsilon_{\mathrm{score}} = \Theta\left(\frac{\epsilon}{2\sqrt{K\cdot R\eta}}\right) = \Theta\left(\frac{\epsilon}{2\sqrt{K}}\cdot \left(\ln \frac{16L(m_0 + d)}{\epsilon^2\cdot \sqrt{\frac{1}{4L^2} + 4} + \frac{1}{2L}}\right)^{-1/2}\right),\quad \text{and}\\
            & \eta = O\left(\frac{\epsilon^2}{4L^2(R\eta)d}\right) = O\left(\frac{\epsilon^2}{4L^2 d} \cdot \left(\ln \frac{16L(m_0 + d)}{\epsilon^2\cdot \sqrt{\frac{1}{4L^2} + 4} + \frac{1}{2L}}\right)^{-1}\right),
        \end{aligned}
    \end{equation}
    we will have $\KL{p_*}{\hat{p}_*}\lesssim \epsilon^2$ and $\TVD{p_*}{\hat{p}_*}\le \epsilon$.
    To calculate the gradient complexity, which is
    \begin{equation}
        \label{ineq:grad_comp}
        \begin{aligned}
            O\left(KR\right) = O\left(\frac{KR\eta}{\eta}\right) = &  O\left(K\cdot \ln \frac{16L(m_0 + d)}{\epsilon^2\cdot \sqrt{\frac{1}{4L^2} + 4} + \frac{1}{2L}}\cdot \frac{4L^2d}{\epsilon^2}\cdot \ln \frac{16L(m_0 + d)}{\epsilon^2\cdot \sqrt{\frac{1}{4L^2} + 4} + \frac{1}{2L}}  \right) \\
            = & O\left(\frac{KL^2d}{\epsilon^2}\cdot\ln \frac{L(m_0+d)}{\epsilon}\right).
        \end{aligned}
    \end{equation}
    Hence, the proof is completed.
\end{proof}

\begin{lemma}
    \label{lem:condi_rtk_error_eachk}
    For any $k\ge 1$, for any $k$-tuples $\vx_{[1:k]}\in\R^{d_1+d_2+\ldots+d_k}$, we consider the SDE.~\ref{sde:prac_condi_reverse} to simulate the reverse process of SDE.~\ref{sde:ideal_condi_forward}, 
    then we have
    \begin{equation*}
        \begin{aligned}
            && \KL{p_{*,k+1|[1:k]}(\cdot|\rvx_{[1:k]})}{\hat{p}_{*,k+1|[1:k]}(\cdot|\rvx_{[1:k]})} \lesssim e^{-2T}\cdot \left(2Ld_{k+1} + \E_{p_{*,k+1|[1:k]}(\cdot|\vx_{[1:k]})}\left[\|\rvy\|^2\right] \right)\\
            && + \eta\cdot \sum_{r=0}^{R-1} \tilde{L}_{k+1,r}(\vtheta|\vx_{[1:k]}) +  d_{k+1}L^2R\eta^2 + d_{k+1}T\eta+ \eta \E_{p_{*,k+1|[1:k]}(\cdot|\vx_{[1:k]})}\left[\|\rvy\|^2\right].
        \end{aligned}
    \end{equation*}
\end{lemma}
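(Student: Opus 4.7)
The plan is to instantiate the sketch given after Theorem~\ref{thm:main_} at a single patch with $\vx_{[1:k]}$ held fixed. Write $q_0 := p_{*,k+1|[1:k]}(\cdot|\vx_{[1:k]})$, let $\{q_t\}_{t\in[0,T]}$ be the OU flow of SDE~\ref{sde:ideal_condi_forward}, set $q^\gets_t = q_{T-t}$, and let $\hat q_t$ denote the law of the exponential-integrator iterates of Alg.~\ref{alg:ard_inner} initialized at $\mathcal{N}(\vzero,\mI_{d_{k+1}})$. Since the target KL equals $\KL{q^\gets_T}{\hat q_T}$, the chain rule for KL along the time discretization yields
\begin{equation*}
\KL{q^\gets_T}{\hat q_T} \le \KL{q_T}{\mathcal{N}(\vzero,\mI_{d_{k+1}})} + \sum_{r=0}^{R-1}\EE_{\rvy\sim q^\gets_{t_r}}\bigl[\KL{q^\gets_{t_{r+1}|t_r}(\cdot|\rvy)}{\hat q_{t_{r+1}|t_r}(\cdot|\rvy)}\bigr],
\end{equation*}
so it suffices to bound the initialization term and the sum of transition terms.

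For the initialization, standard exponential decay of the OU process in KL, combined with the Hessian bound $\|\grad^2\ln q_0\|\le 2L$ from Lemma~\ref{lem:init_smoothness_bound}, yields $\KL{q_T}{\mathcal{N}(\vzero,\mI_{d_{k+1}})}\lesssim e^{-2T}(2L d_{k+1} + \EE_{q_0}[\|\rvy\|^2])$, which is precisely the first term of the claimed bound. For the transition sum, I would apply Girsanov's theorem to the continuous-time interpolation of SDE~\ref{sde:prac_condi_reverse} against the exact reverse SDE~\ref{sde:ideal_condi_reverse}, producing an integrand that splits by the triangle inequality into a \emph{score estimation} part $\|\vs_\vtheta(\tilde\rvy_{t_r}|T-t_r,\vz) - \grad\ln q_{T-t_r}(\tilde\rvy_{t_r})\|^2$ and a \emph{score discretization} part $\|\grad\ln q_{T-t_r}(\tilde\rvy_{t_r}) - \grad\ln q_{T-t}(\tilde\rvy_t)\|^2$. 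The first, summed against $\eta_r\le\eta$, produces $\eta\sum_r \tilde L_{k+1,r}(\vtheta|\vx_{[1:k]})$ by the definition of the conditional score-matching loss.

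The score discretization term I handle by splitting the horizon at $T-\delta$. On the interior $[0,T-\delta]$ the score need not be globally Lipschitz in $\vy$, so I adapt the Itô-calculus argument of Benton et al.\ (2024): set $E_{s,t} := \EE[\|\grad\ln q_t(\tilde\rvy_t) - \grad\ln q_s(\tilde\rvy_s)\|^2]$, differentiate in $t$ via Itô's formula combined with OU heat-semigroup identities, and integrate. The non-uniform schedule $\{\eta_r\}$ specified in Theorem~\ref{thm:main_} is calibrated so that the total contribution here is $O(d_{k+1}L^2 R\eta^2)$. On the tail $(T-\delta,T]$, the constraint $\delta\le\ln\sqrt{(4L)^{-2}+1}+(4L)^{-1}$ combined with Lemma~\ref{lem:init_smoothness_bound} places us inside the hypothesis of Lemma~\ref{lemma:c.9}, which promotes $\grad\ln q_t$ to $O(L)$-Lipschitz; the discretization error then reduces to controlling $\EE[\|\tilde\rvy_t - \tilde\rvy_{t_r}\|^2]$, which by the reverse SDE is $O(\eta(d_{k+1}+\EE_{q_0}[\|\rvy\|^2]))$ per step, yielding the remaining $d_{k+1}T\eta + \eta\EE_{q_0}[\|\rvy\|^2]$ contribution.

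The main obstacle is the interior Itô analysis on $[0,T-\delta]$: executing Benton-style drift estimates in the conditional setting while ensuring all constants remain independent of $\vx_{[1:k]}$. This adaptive shape is essential because, when Theorem~\ref{thm:main_} takes the outer expectation $\EE_{\rvx_{[1:k]}\sim p_{*,[1:k]}}$ in Eq.~\ref{ineq:infer_error_0}, the dependence on $\vx_{[1:k]}$ must enter only through $\EE_{q_0}[\|\rvy\|^2]$ and $\tilde L_{k+1,r}(\vtheta|\vx_{[1:k]})$, so that Lemma~\ref{lem:init_second_moment_bounded} and Assumption~\ref{a3} can be invoked to produce the final uniform bound. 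Keeping the moment term $\EE_{q_0}[\|\rvy\|^2]$ explicit (rather than absorbing it into a constant) is equally important, since no uniform patch-level second-moment bound is available.
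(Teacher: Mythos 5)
Your proposal follows the paper's proof almost step for step: chain-rule decomposition into initialization plus transition error, OU contraction together with Lemma~\ref{lem:init_smoothness_bound} for the initialization term, a Girsanov-type bound splitting the transition error into score estimation and discretization, $\eta\sum_r\tilde L_{k+1,r}$ for the score estimation part, and a split at $T-\delta$ with a Benton-style It\^o argument on the interior and a Lipschitz-score argument (via Lemma~\ref{lemma:c.9}) on the tail. The one slip is that you have swapped which subinterval produces which term: in the paper's Lemma~\ref{lem:discretization_error_each_k} the interior It\^o analysis is what yields $d_{k+1}N\eta^2 + d_{k+1}T\eta + \eta\,\EE_{q_0}[\|\rvy\|^2]$ (the posterior covariance there is what brings in the second moment via $\E[\operatorname{Tr}\mSigma_T]\le\E_{q_0}\|\rvy\|^2$), while the tail $(T-\delta,T]$ contributes only $d_{k+1}L^2R\eta^2$ and does not involve the second moment at all; this bookkeeping would sort itself out once the estimates are carried through, so the overall plan is sound.
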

\begin{proof}
     Similar as \cite{benton2024nearly} and \cite{chen2023improved}, we consider the step size satisfying $\eta_r \leq \eta \min(1, T-t_{r+1})$. $\eta$ is the parameter for controlling the maximum step size. We denote the conditional variance of $\rvy_t$ given $\rvy_s$ as $\sigma^2_{s,t}:=1-e^{-2(t-s)}$ and the conditional expectation as $\alpha_{s,t} := \mathrm{E}(\rvy_t|\rvy_s)=e^{-(t-s)}$ where $0 \leq s \le t \le T$. And we denote the posterior mean as $\boldsymbol{\mu}_t :=\E_{q_{0|t}}(\rvy_0)$ and posterior variance as $\mathbf{\Sigma}_t :=\mathrm{Cov}_{q_{0|t}}(\rvy_0)$. Suppose $\hat{p}_{*,k+1|[1:k]}(\cdot|\vx_{[1:k]}) = \hat{q}_T(\cdot)$, we have
    \begin{equation}
        \label{ineq:kl_error_eachk}
        \begin{aligned}
            &\KL{p_{*,k+1|[1:k]}(\cdot|\rvx_{[1:k]})}{\hat{p}_{*,k+1|[1:k]}(\cdot|\rvx_{[1:k]})}\\
            &  = \KL{q^\gets_T}{\hat{q}_T} \le \KL{q^\gets_{t_{R-1}}}{\hat{q}_{t_{R-1}}}+ \E_{\rvy^\gets\sim q^\gets_{t_{R-1}}}\left[\KL{q^\gets_{t_R|t_{R-1}}(\cdot|\rvy^\gets)}{\hat{q}_{t_R|t_{R-1}}(\cdot|{\rvy}^\gets)}\right]\\
            & \le \KL{q^\gets_0}{\hat{q}_0} + \underbrace{ \sum_{r=0}^{R-1} \E_{{\rvy}^\gets\sim {q}^\gets_{t_r}}\left[\KL{{q}^\gets_{t_{r+1}|t_r}(\cdot|\tilde{\rvy})}{\hat{q}_{t_{r+1}|t_r}(\cdot|\tilde{\rvy})}\right]}_{\text{reverse transition error}}
        \end{aligned}
    \end{equation}
    where the first inequality follows from the chain rule, i.e., Lemma~\ref{lem:tv_chain_rule}, of KL divergence, and the second one follows from the recursive manner. 
    Besides, we have $\hat{q}_0 = \varphi_1$ which denotes the density function of $\mathcal{N}(\vzero, \mI)$.
    \paragraph{Initialization Error.} We first consider to upper bound $\TVD{q_T}{\varphi_1}$. 
    Due to Lemma~\ref{lem:initialization_error}, we have
    \begin{equation}
        \label{ineq:kl_error_init}
        \begin{aligned}
            & \KL{{q}^\gets_0}{\hat{q}_0} = \KL{q_T}{\varphi_1}\le e^{-2T}\cdot \KL{p^\to_{k+1|[1:k], 0}(\cdot|\vx_{1:k})}{\varphi_1}\\
            & \le e^{-2T}\cdot \left(2Ld_{k+1} + \E_{p_{*,k+1|[1:k]}(\cdot|\vx_{[1:k]})}\left[\|\rvy\|^2\right] \right).
        \end{aligned}
    \end{equation}

    \paragraph{Reverse Transition Error.}
    According to Lemma~\ref{lem:kl_to_drift_diff_girsanov}, the reverse transition error can be relaxed as
    \begin{equation}
        \label{ineq:kl_error_beyond_init_0}
        \begin{aligned}
            &\sum_{r=0}^{R-1} \E_{\rvy^\gets\sim q^\gets_{t_r}}\left[\KL{q^\gets_{t_{r+1}|t_r}(\cdot|\rvy^\gets)}{\hat{q}_{t_{r+1}|t_r}(\cdot|\rvy^\gets)}\right]\\
            &\le  \sum_{r=0}^{R-1} \E_{\rvy^\gets_{[0:T]}\sim {Q}^\gets_{[0:T]}}\left[\int_{t_r}^{t_{r+1}} \left\|\vs_{\vtheta}(\rvy^\gets_{t_r}|T - t_r,\vz )-  \grad\ln q_{T-t}(\rvy^\gets_t) \right\|^2 \der t \right]\\
            & \le \underbrace{\sum_{r=0}^{R-1} \eta_r\cdot \E_{\rvy^\gets_{t_r}\sim q^\gets_{t_r}}\left[\left\|\vs_{\vtheta}(\rvy^\gets_{t_r}|T - t_r,\vz )-  \grad\ln q_{T-t_r}(\rvy^\gets_{t_r})\right\|^2\right]}_{\text{score estimation error}}\\
            &\quad + \underbrace{\sum_{r=0}^{R-1} \E_{\tilde{\rvy}_{[0:T]}\sim \tilde{Q}_{[0:T]}}\left[\int_{t_r}^{t_{r+1}} \left\|\grad\ln q_{T-t_r}(\tilde{\rvy}_{t_r})-  \grad\ln q_{T-t}(\tilde{\rvy}_t) \right\|^2 \der t \right]}_{\text{discretization error}}
        \end{aligned}
    \end{equation}
    Since we have $q^\gets_{T-t} = q_{t}$ with Eq.~\ref{sde:ideal_condi_reverse}, the score estimation error satisfies
    \begin{equation}
        \label{ineq:score_estimation_error_inner}
        \begin{aligned}
            &\sum_{r=0}^{R-1} \eta_r \cdot \E_{\rvy^\gets_{t_r}\sim q^\gets_{t_r}}\left[\left\|\vs_{\vtheta}(\rvy^\gets_{t_r}|T - t_r,\vz )-  \grad\ln q_{T-t_r}(\rvy^\gets_{t_r})\right\|^2\right]\\
            &\le \eta\cdot \sum_{r=0}^{R-1} \E_{\rvy^\gets_{t_r}\sim q_{T-t_r}}\left[\left\|\vs_{\vtheta}(\rvy^\gets_{t_r}|T - t_r,\vz )-  \grad\ln q_{T-t_r}(\rvy^\gets_{t_r})\right\|^2\right]\\
            & = \eta\cdot \sum_{r=0}^{R-1} \E_{\rvx^\prime\sim p^\to_{k+1|[1:k], t_r}(\cdot|\vx_{[1:k]})}\left[\left\|\vs_{\vtheta}(\rvx^\prime|T - t_r,g_{\vtheta}(\vx_{[1:k]})) - \grad\ln p^\to_{k+1|[1:k], T - t_r}(\rvx^\prime|\vx_{[1:k]})\right\|^2\right]\\
            &= \eta\cdot  \sum_{r=0}^{R-1} \tilde{L}_{k+1,r}(\vtheta|\vx_{[1:k]})
        \end{aligned}
    \end{equation}
    where the second inequality follows from the definition $q_t = q^\gets_{T-t}$ and the last equation follows from the definition Eq.~\ref{def:inner_prac_loss} in Lemma~\ref{lem:training_loss_equ}.    
    Considering the discretization error, we have
    \begin{equation}
        \label{ineq:discretization_error_inner}
        \begin{aligned}
            & \sum_{r=0}^{R-1} \E_{\rvy^\gets_{[0:T]}\sim Q^\gets_{[0:T]}}\left[\int_{t_r}^{t_{r+1}} \left\|\grad\ln q_{T-t_r}(\rvy^\gets_{t_r})-  \grad\ln q_{T-t}(\rvy^\gets_t) \right\|^2 \der t \right]\\
            & \lesssim d_{k+1}L^2R\eta^2 + d_{k+1}T\eta+ \eta \E_{p_{*,k+1|[1:k]}(\cdot|\vx_{[1:k]})}\left[\|\rvy\|^2\right]
        \end{aligned}
    \end{equation}
    due to Lemma~\ref{lem:discretization_error_each_k}.
    Therefore, plugging Eq.~\ref{ineq:score_estimation_error_inner} and Eq.~\ref{ineq:discretization_error_inner} into Eq.~\ref{ineq:kl_error_beyond_init_0}, we have
    \begin{equation}
        \label{ineq:kl_error_beyond_init_fin}
        \begin{aligned}
            & \sum_{r=0}^{R-1} \E_{\rvy^\gets\sim q^\gets_{t_r}}\left[\KL{{q}^\gets_{t_{r+1}|t_r}(\cdot|\rvy^\gets)}{\hat{q}_{t_{r+1}|t_r}(\cdot|\rvy^\gets)}\right]\\
            & \lesssim \eta\cdot \sum_{r=0}^{R-1} \tilde{L}_{k+1,r}(\vtheta|\vx_{[1:k]}) +  d_{k+1}L^2R\eta^2 + d_{k+1}T\eta+ \eta \E_{p_{*,k+1|[1:k]}(\cdot|\vx_{[1:k]})}\left[\|\rvy\|^2\right].
        \end{aligned}
    \end{equation}
    Combining with Eq.~\ref{ineq:kl_error_init}, Eq.~\ref{ineq:kl_error_eachk} can be written as
    \begin{equation*}
        \begin{aligned}
            && \KL{p_{*,k+1|[1:k]}(\cdot|\rvx_{[1:k]})}{\hat{p}_{*,k+1|[1:k]}(\cdot|\rvx_{[1:k]})} \lesssim e^{-2T}\cdot \left(2Ld_{k+1} + \E_{p_{*,k+1|[1:k]}(\cdot|\vx_{[1:k]})}\left[\|\rvy\|^2\right] \right)\\
            && + \eta\cdot \sum_{r=0}^{R-1} \tilde{L}_{k+1,r}(\vtheta|\vx_{[1:k]}) +  d_{k+1}L^2R\eta^2 + d_{k+1}T\eta+ \eta \E_{p_{*,k+1|[1:k]}(\cdot|\vx_{[1:k]})}\left[\|\rvy\|^2\right].
        \end{aligned}
    \end{equation*}
    and the proof is completed.
\end{proof}

\begin{remark}
    \label{rmk:init_rev_error}
    In Lemma~\ref{lem:condi_rtk_error_eachk}, we require $k\ge 1$ and calculate the upper bound between the conditional transition kernel when $\rvx_{[1:k]} = \vx_{[1:k]}$.
    While this lemma can easily be adapted to the case for the unconditional generation of the distribution $\hat{p}_{*,1}$.
    In the process to generate $\hat{\rvx}_1$, we should only consider Eq.~\ref{sde:ideal_condi_forward} -- Eq.~\ref{sde:prac_condi_reverse} as $q_0 = q^\gets_T = p_{*,1}$ rather than $q_0=\hat{q}_T = p_{*, k+1|[1:k]}(\cdot|\vx_{[1:k]})$. 
    Then, suppose $\hat{p}_{*,1}(\vx) = \hat{q}_T(\vx) $, we have
    \begin{equation*}
        \begin{aligned}
            &\KL{p_{*,1}}{\hat{p}_{*,1}} = \KL{q^\gets_T}{\hat{q}_T} \le \KL{q^\gets_{t_{R-1}}}{\hat{q}_{t_{R-1}}}+ \E_{\rvy^\gets\sim q^\gets_{t_{R-1}}}\left[\KL{q^\gets_{t_R|t_{R-1}}(\cdot|\rvy^\gets)}{\hat{q}_{t_R|t_{R-1}}(\cdot|\rvy^\gets)}\right]\\
            & \le \KL{q^\gets_0}{\hat{q}_0} + \underbrace{ \sum_{r=0}^{R-1} \E_{\rvy^\gets\sim q^\gets_{t_r}}\left[\KL{q^\gets_{t_{r+1}|t_r}(\cdot|\rvy^\gets)}{\hat{q}_{t_{r+1}|t_r}(\cdot|\rvy^\gets)}\right]}_{\text{reverse transition error}}
        \end{aligned}
    \end{equation*}
    due to Lemma~\ref{lem:kl_chain_rule}.
    The control of $\KL{q^\gets_0}{\hat{q}_0}$ is similar to Eq.~\ref{ineq:kl_error_init}, 
    and have
    \begin{equation*}
        \begin{aligned}
            &\KL{q^\gets_0}{\hat{q}_0} = \KL{q_T}{\varphi_1} \le e^{-2T}\cdot \KL{p^\to_{k+1|[1:k], 0}(\cdot|\vx_{1:k})}{\varphi_1} \le e^{-2T}\cdot \left(2Ld_{1} + \E_{p_{*,1}}\left[\|\rvy\|^2\right] \right)
        \end{aligned}
    \end{equation*}
    where the last inequality follows from Lemma~\ref{lem:init_second_moment_bounded}.
    Additionally, the control of the expected conditional KL divergence gap is similar to Eq.~\ref{ineq:kl_error_beyond_init_fin}, which satisfies
    \begin{equation*}
        \begin{aligned}
            & \sum_{r=0}^{R-1} \E_{\rvy^\gets\sim q^\gets_{t_r}}\left[\KL{q^\gets_{t_{r+1}|t_r}(\cdot|\rvy^\gets)}{\hat{q}_{t_{r+1}|t_r}(\cdot|\rvy^\gets)}\right] \lesssim \eta\cdot \sum_{r=0}^{R-1} \tilde{L}_{1,r}(\vtheta) +  d_{1}L^2R\eta^2 + d_{1}T\eta+ \eta \E_{p_{*,1}}\left[\|\rvy\|^2\right].
        \end{aligned}
    \end{equation*}
    Here, with a little abuse of notation, we have 
    \begin{equation*}
        \tilde{L}_{1,r}(\vtheta) = \tilde{L}_{1,r}(\vtheta|\vx_{[1:0]})\quad \text{and}\quad \vx_{[1:0]} = \mathrm{none}
    \end{equation*}
    in the training loss.
    Therefore, in summary, we have
    \begin{equation}
        \label{ineq:uncondi_ke1_error}
        \KL{p_{*,1}}{\hat{p}_{*,1}}\le e^{-2T}\cdot \left(2Ld_{1} + \E_{p_{*,1}}\left[\|\rvy\|^2\right]\right) + \eta\cdot \sum_{r=0}^{R-1} \tilde{L}_{1,r}(\vtheta) +  d_{1}L^2R\eta^2 + d_{1}T\eta+ \eta \E_{p_{*,1}}\left[\|\rvy\|^2\right].
    \end{equation}
\end{remark}

\begin{lemma}[Adapted from Theorem 4 in~\cite{vempala2019rapid}]
    \label{lem:initialization_error}
    Along the Langevin dynamics SDE.~\ref{sde:ideal_condi_forward}, for any $t\in[0,T]$ and $k\ge 1$, we have
    \begin{equation*}
        \KL{p^\to_{k+1|[1:k], t}(\cdot|\vx_{1:k})}{\varphi_1}\le e^{-2t}\cdot \left(2Ld + \E_{p_{*,k+1|[1:k]}(\cdot|\vx_{[1:k]})}\left[\|\rvy\|^2\right] \right)
    \end{equation*}
\end{lemma}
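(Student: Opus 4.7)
The plan is to identify SDE~\ref{sde:ideal_condi_forward} as the overdamped Langevin dynamics targeting the standard Gaussian $\varphi_1$: since $\grad \log \varphi_1(\vy) = -\vy$, the drift $-\vy$ and noise coefficient $\sqrt{2}$ match the Langevin scheme for $\pi = \varphi_1$ exactly. Because $\varphi_1$ is $1$-strongly log-concave, by the Bakry--\'Emery criterion it satisfies a log-Sobolev inequality (LSI) with constant $1$. Applying the exponential KL-decay result of~\cite{vempala2019rapid} to this Langevin flow initialized at $\rho_0 \coloneqq p_{*, k+1|[1:k]}(\cdot|\vx_{[1:k]})$ immediately gives
\[
\KL{p^\to_{k+1|[1:k], t}(\cdot|\vx_{[1:k]})}{\varphi_1} \;\le\; e^{-2t}\, \KL{\rho_0}{\varphi_1}.
\]
It therefore suffices to establish the static bound $\KL{\rho_0}{\varphi_1} \le 2Ld + \E_{\rho_0}[\|\rvy\|^2]$.

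For this static step, I would invoke the LSI of $\varphi_1$ a second time, now in its Fisher-information form
\[
\KL{\rho_0}{\varphi_1} \;\le\; \tfrac{1}{2} \E_{\rvy \sim \rho_0}\!\bigl[\|\grad \log \rho_0(\rvy) - \grad \log \varphi_1(\rvy)\|^2\bigr] \;=\; \tfrac{1}{2} \E_{\rvy \sim \rho_0}\!\bigl[\|\grad \log \rho_0(\rvy) + \rvy\|^2\bigr].
\]
Expanding via $\|a+b\|^2 \le 2\|a\|^2 + 2\|b\|^2$ splits the right-hand side into a score-norm term and a second-moment term. The second-moment term directly contributes $\E_{\rho_0}[\|\rvy\|^2]$. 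For the score-norm term I would use Lemma~\ref{lem:init_smoothness_bound}, which provides $\|\grad^2 \log \rho_0\| \le 2L$, together with the Stein-type identity $\E_{\rho_0}[\|\grad \log \rho_0\|^2] = -\Tr \E_{\rho_0}[\grad^2 \log \rho_0]$ obtained by integration by parts (using $\int \grad^2 \rho_0 = 0$ from sufficient decay at infinity). Combining the operator-norm bound with the trace-norm inequality $|\Tr M| \le d\,\|M\|$ yields $\E_{\rho_0}[\|\grad \log \rho_0\|^2] \le 2Ld$, and assembling gives the target bound $\KL{\rho_0}{\varphi_1} \le 2Ld + \E_{\rho_0}[\|\rvy\|^2]$.

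The main subtle point is the score-norm estimate for the initial conditional distribution, since Lemma~\ref{lem:init_smoothness_bound} supplies only a two-sided Hessian operator bound (the eigenvalues of $\grad^2 \log \rho_0$ can have either sign), so without the Stein identity one could not directly convert $\|\grad^2 \log \rho_0\| \le 2L$ into a dimension-linear $L^2$-bound on the score. This is also the unique place where Assumption~\ref{a1} enters. The remaining ingredients (the $1$-LSI of $\varphi_1$ and Vempala--Wibisono's exponential contraction) are standard and do not require any further structural hypothesis on $\rho_0$ beyond finite second moment, which is guaranteed by Lemma~\ref{lem:init_second_moment_bounded_main}.
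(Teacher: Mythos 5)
Your proposal is correct and follows essentially the same route as the paper: identify the forward OU process as Langevin dynamics targeting $\varphi_1$, use the $1$-LSI of the standard Gaussian twice (once for the $e^{-2t}$ contraction via Gr\"onwall, once for the static bound $\KL{\rho_0}{\varphi_1}\le \tfrac12\E_{\rho_0}\|\grad\log(\rho_0/\varphi_1)\|^2$), split by $\|a+b\|^2\le 2\|a\|^2+2\|b\|^2$, and bound the score term by $2Ld$ using the Hessian bound of Lemma~\ref{lem:init_smoothness_bound}. The only cosmetic difference is that you re-derive the estimate $\E_{\rho_0}[\|\grad\log\rho_0\|^2]\le 2Ld$ inline via the Stein identity and the trace-norm inequality, whereas the paper delegates exactly that step to Lemma~\ref{lem:exp_score_bound} (Lemma~11 in \citealp{vempala2019rapid}), which is proved by the very same integration-by-parts argument.
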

\begin{proof}
    The Fokker-Planck equation of SDE.~\ref{sde:ideal_condi_forward}, i.e.,
    \begin{equation*}
        \begin{aligned}
            & \frac{\partial p^\to_{k+1|[1:k], t}(\vy|\vx_{1:k})}{\partial t} = \grad\cdot \left(p^\to_{k+1|[1:k], t}(\vy|\vx_{1:k}) \cdot \vy \right) + \Delta p^\to_{k+1|[1:k], t}(\vy|\vx_{1:k})\\
            &= \grad\cdot \left(p^\to_{k+1|[1:k], t}(\vy|\vx_{1:k})\grad\ln \frac{p^\to_{k+1|[1:k], t}(\vy|\vx_{1:k})}{e^{-\|\vy\|^2/2}}\right)
        \end{aligned}
    \end{equation*}
    denotes its stationary distribution follows from the standard Gaussian $\mathcal{N}(\vzero,\mI)$ with density function $\varphi_1$.
    Due to Lemma~\ref{lem:strongly_lsi}, the $1$-strongly log-concave standard Gaussian satisfies LSI with a constant $1$, which means
    for any distribution with density function $p$, we have
    \begin{equation*}
        \KL{q}{\varphi_1}\le \FI{q}{\varphi_1} =  \frac{1}{2}\int q(\vx)\left\|\grad\ln \frac{q(\vy)}{\varphi_1(\vy)}\right\|^2\der\vx,
    \end{equation*}
    which implies
    \begin{equation*}
        \begin{aligned}
            \frac{\der \KL{p^\to_{k+1|[1:k], t}(\vy|\vx_{1:k})}{\varphi_1}}{\der t} & = -\int p^\to_{k+1|[1:k], t}(\vy|\vx_{1:k})\left\|\grad\ln \frac{p^\to_{k+1|[1:k], t}(\vy|\vx_{1:k})}{\varphi_1(\vy)}\right\|^2\der \vy\\
            & \le 2\KL{p^\to_{k+1|[1:k], t}(\vy|\vx_{1:k})}{\varphi_1}.
        \end{aligned}
    \end{equation*}
    According to Gr\"{o}nwall's inequality, it has
    \begin{equation}
        \label{ineq:ou_decrease}
        \begin{aligned}
            \KL{p^\to_{k+1|[1:k], t}(\cdot|\vx_{1:k})}{\varphi_1}\le e^{-2t}\cdot \KL{p^\to_{k+1|[1:k], 0}(\cdot|\vx_{1:k})}{\varphi_1}
        \end{aligned}
    \end{equation}
    where the RHS satisfies
    \begin{equation}
        \label{ineq:ou_init}
        \begin{aligned}
            &\KL{p^\to_{k+1|[1:k], 0}(\cdot|\vx_{1:k})}{\varphi_1} = \KL{p_{*,k+1|[1:k]}(\cdot|\vx_{[1:k]})}{\varphi_1} \le \FI{p_{*,k+1|[1:k]}(\cdot|\vx_{[1:k]})}{\varphi_1}\\
            & = \int p_{*, k+1|[1:k]}(\vy|\vx_{[1:k]})\cdot \left\|\grad \ln \frac{p_{*, k|[1:k]}(\vy|\vx_{[1:k]})}{\exp(-\|\vy\|^2/2)}\right\|\der \vy \\
            &\le 2Ld +\E_{p_{*,k+1|[1:k]}(\cdot|\vx_{[1:k]})}\left[\|\rvy\|^2\right].
        \end{aligned}
    \end{equation}
    The last inequality follows from the combination of Lemma~\ref{lem:init_smoothness_bound} and Lemma~\ref{lem:exp_score_bound}.
    Then, combining Eq.~\ref{ineq:ou_decrease} and Eq.~\ref{ineq:ou_init}, for any $t\in[0,T]$, we have
    \begin{equation*}
        \KL{p^\to_{k+1|[1:k], t}(\cdot|\vx_{1:k})}{\varphi_1}\le e^{-2t}\cdot \left(2Ld + \E_{p_{*,k+1|[1:k]}(\cdot|\vx_{[1:k]})}\left[\|\rvy\|^2\right] \right)
    \end{equation*}
    and the proof is completed.
\end{proof}
\begin{lemma}
    With the same notation in Lemma B.2, we have
    \label{lem:kl_to_drift_diff_girsanov}
    \begin{equation*}
        \mathbb{E}_{\rvy\sim \tilde{q}_{t_r}}\left[\KL{q^\gets_{t_{r+1}|t_r}(\cdot|\rvy)}{\hat{q}_{t_{r+1}|t_r}(\cdot|\rvy)}\right]\le \frac{1}{2} \int^{t_{r+1}}_{t_r} \mathbb{E}_{(\rvy,\rvy^\prime)\sim q^\gets_{t_r,t}, }\left[\left\|\grad\ln q^\gets_t(\rvy^\prime) - \vs_{\vtheta}(\rvy|T - t_r,\vz )\right\|^2\right] \der t
    \end{equation*}
\end{lemma}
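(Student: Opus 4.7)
}
The plan is to apply Girsanov's theorem to the pair of reverse SDEs on the short interval $[t_r, t_{r+1}]$ started from a common initial condition, combined with the data processing inequality to descend from the path-measure KL to the marginal transition-kernel KL. Concretely, for a fixed starting point $\rvy$, let $P_\rvy$ denote the law on $C([t_r, t_{r+1}], \R^{d_{k+1}})$ of the true reverse diffusion (SDE.~\ref{sde:ideal_condi_reverse}) started at $\rvy^\gets_{t_r}=\rvy$, and let $\hat P_\rvy$ denote the corresponding law of the exponential-integrator approximation (the continuous-time interpolation of Eq.~\ref{sde:prac_condi_reverse}), also started at $\rvy$. Both SDEs have the same diffusion coefficient $\sqrt{2}$, so they only differ in drift: the true drift is $y + 2\grad\ln q_{T-t}(y)$ evaluated at the running state $\rvy^\gets_t$, while the approximate drift is $y + 2\vs_\vtheta(\rvy\mid T-t_r, \vz)$, which freezes the score at the left endpoint.

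First I would invoke Girsanov to compute
\begin{equation*}
    \KL{P_\rvy}{\hat P_\rvy} \;=\; \frac{1}{4}\,\E_{P_\rvy}\!\Bigl[\int_{t_r}^{t_{r+1}} \bigl\|2\grad\ln q_{T-t}(\rvy^\gets_t) \;-\; 2\vs_\vtheta(\rvy\mid T-t_r,\vz)\bigr\|^2\,\der t\Bigr],
\end{equation*}
where the factor $1/4$ comes from $1/(2\sigma^2)$ with $\sigma^2=2$. Pulling the constants out rewrites this as $\int \E[\|\grad\ln q^\gets_t(\rvy^\gets_t) - \vs_\vtheta(\rvy\mid T-t_r,\vz)\|^2]\,\der t$, up to the stated prefactor. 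Next I would apply the data processing inequality: since $(\rvy^\gets_{t_{r+1}}\mid \rvy^\gets_{t_r}=\rvy)$ has law $q^\gets_{t_{r+1}\mid t_r}(\cdot\mid\rvy)$ under $P_\rvy$ and law $\hat q_{t_{r+1}\mid t_r}(\cdot\mid\rvy)$ under $\hat P_\rvy$, the endpoint-evaluation pushforward gives $\KL{q^\gets_{t_{r+1}\mid t_r}(\cdot\mid\rvy)}{\hat q_{t_{r+1}\mid t_r}(\cdot\mid\rvy)} \le \KL{P_\rvy}{\hat P_\rvy}$.

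Finally, I would average over $\rvy \sim q^\gets_{t_r} = \tilde q_{t_r}$ and commute the integrals. Under the true reverse diffusion, the joint law of $(\rvy^\gets_{t_r}, \rvy^\gets_t)$ is precisely $q^\gets_{t_r,t}$ for every $t\in[t_r,t_{r+1}]$, so using the identity $q^\gets_t = q_{T-t}$ (and the definition $\grad\ln q^\gets_t = \grad\ln q_{T-t}$) the integrand rewrites exactly as $\|\grad\ln q^\gets_t(\rvy^\prime) - \vs_\vtheta(\rvy\mid T-t_r,\vz)\|^2$ with $(\rvy,\rvy^\prime)\sim q^\gets_{t_r,t}$. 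Combining these three steps and using Fubini yields the claim.

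The main obstacle is the technical verification that the Girsanov change of measure is valid on $[t_r, t_{r+1}]$, i.e., that Novikov's condition (or one of its Kazamaki-style relaxations) holds for the drift difference $2\grad\ln q_{T-t}(\rvy^\gets_t) - 2\vs_\vtheta(\rvy^\gets_{t_r}\mid T-t_r,\vz)$. In this model the true score grows at most linearly on the interior of the diffusion (thanks to the early stopping at $t=\delta$, which keeps us away from the singular regime near $t=T$), and $\vs_\vtheta$ is fixed along the interval, so the exponential moment bound required by Novikov reduces to controlling Gaussian-type tails of $\rvy^\gets_t$ — a standard estimate obtained by Grönwall on the second moment of the linearly-drifted SDE. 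Once this integrability step is discharged, the remainder is a routine application of Itô isometry and the chain rule, and the prefactor drops out of the Girsanov identity directly.
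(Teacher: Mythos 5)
Your proposal is correct in substance but takes a genuinely different route from the paper. You go through Girsanov's theorem on path measures followed by the data processing inequality to descend from the path-space KL to the transition-kernel KL. The paper instead follows the interpolation argument from \citet{chen2023improved} (their Lemmas C.1 and C.2): it differentiates $t\mapsto \KL{q^\gets_{t|t_r}(\cdot|\vy)}{\hat q_{t|t_r}(\cdot|\vy)}$ using the Fokker--Planck equations of the two marginal flows, obtaining a negative relative Fisher information term plus a cross term $\E\langle F_1-F_2,\nabla\ln(q^\gets/\hat q)\rangle$, and then kills the cross term against the Fisher term by Young's inequality before integrating in $t$. The two approaches give the same bound up to constants, but they buy you different things: the Fokker--Planck route works entirely with densities of the time-marginals and sidesteps the Novikov/Kazamaki integrability verification that you correctly flag as the main technical burden of the Girsanov route. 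Conversely, Girsanov is more conceptually transparent and gives the path-measure KL as a free byproduct. Your sketch of how to discharge Novikov (linear-in-$\vy$ growth of the frozen and running drifts, Gaussian tail control via Grönwall) is reasonable in spirit, though to be airtight it would need to use the Lipschitz bound on $\nabla\ln q_{T-t}$ which, for $T-t$ small, relies on the paper's step-size and $\delta$-regime conditions rather than holding unconditionally.

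One quantitative discrepancy worth flagging: with the paper's normalization (drift $\rvy^\gets_t + 2\nabla\ln q_{T-t}$ versus $\hat\rvy_t + 2\vs_\vtheta$, diffusion $\sqrt{2}$), Girsanov gives $\frac{1}{2g^2}\|b_1-b_2\|^2 = \frac{1}{4}\cdot 4\|\nabla\ln q - \vs_\vtheta\|^2 = \|\nabla\ln q - \vs_\vtheta\|^2$, i.e.\ a prefactor of $1$, not the $\tfrac12$ claimed in the lemma. If you carry the factor $2$ on the drift difference through the paper's own Young's inequality step, the interpolation argument also yields prefactor $1$. So either route gives a constant twice the paper's stated one; this appears to be an arithmetic slip in the write-up (likely from specializing Lemma C.2's $g=1$, $\tfrac12 y + \nabla\ln p$ normalization without tracking the time rescaling), and it is immaterial to Theorem~\ref{thm:main_} since all downstream bounds use $\lesssim$. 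Your proposal would therefore prove a statement that matches the paper's intent, with the honest constant.
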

\begin{proof}
    Let's consider the process \ref{sde:ideal_condi_reverse} and  \ref{sde:prac_condi_reverse}, by Lemma \ref{lemma:c.1} and \ref{lemma:c.2}, for $t \in (t_r,t_{r+1}]$ and $\rvy^\gets_{t_r} = \vy$, we have
\begin{align}
\label{eq:lemma3.1_5}
    &\frac{\partial}{\partial t} \KL{q^\gets_{t|t_r}(\cdot|\vy)}{\hat{q}_{t|t_r}(\cdot|\vy)} \\
    &= -\mathbb{E}_{\rvy^\prime \sim q^\gets_{t|t_r}(\cdot|\vy)}\Big\|\nabla \ln \frac{q^\gets_t(\rvy^\prime|\vy)}{\hat{q}_t(\rvy^\prime|\vy)}\Big\|^2 + \mathbb{E}_{\rvy^\prime\sim q^\gets_{t|t_r}(\cdot|\vy)} \left[ \left\langle \grad\ln q^\gets_t(\rvy^\prime) - \vs_{\vtheta}(\vy|T - t_r,\vz ), \nabla \ln \frac{q^\gets_{t|t_r}(\rvy^\prime|\vy)}{\hat{q}_{t|t_r}(\rvy^\prime|\vy)}  \right\rangle \right] \notag\\
    & \le \frac{1}{2}\mathbb{E}_{\rvy^\prime\sim q^\gets_{t|t_r}(\cdot|\vy)} \left[\left\|\grad\ln q^\gets_t(\rvy^\prime) - \vs_{\vtheta}(\vy|T - t_r,\vz )\right\|^2\right].
\end{align}
The last inequality holds by the fact that $\langle \rvw,\rvv \rangle \leq \frac{1}{2} \|\rvw\|^2+\frac{1}{2} \|\rvv\|^2$. Integrating both sides of Eq.~\ref{eq:lemma3.1_5} and utilizing Lemma \ref{lemma:c.2}, we obtain
\begin{align*}
    \KL{q^\gets_{t_{r+1}|t_r}(\cdot|\vy)}{\hat{q}_{t_{r+1}|t_r}(\cdot|\vy)} \le \frac{1}{2} \int^{t_{r+1}}_{t_r} \mathbb{E}_{\rvy^\prime\sim q^\gets_{t|t_r}(\cdot|\vy)} \left[\left\|\grad\ln q^\gets_t(\rvy^\prime) - \vs_{\vtheta}(\vy|T - t_r,\vz )\right\|^2\right] \der t.
\end{align*}
Then, integrating on the both sides w.r.t. $\tilde{q}_{t_r}$ yields
\begin{equation}
    \label{eq:lemma3.1_6}
    \begin{aligned}
        &\mathbb{E}_{\rvy\sim q^\gets_{t_r}}\left[\KL{q^\gets_{t_{r+1}|t_r}(\cdot|\rvy)}{\hat{q}_{t_{r+1}|t_r}(\cdot|\rvy)}\right]\\
        &\le \frac{1}{2} \int^{t_{r+1}}_{t_r} \mathbb{E}_{(\rvy,\rvy^\prime)\sim q^\gets_{t_r,t}, }\left[\left\|\grad\ln q^\gets_t(\rvy^\prime) - \vs_{\vtheta}(\rvy|T - t_r,\vz )\right\|^2\right] \der t.
    \end{aligned}
\end{equation}
\end{proof}

\begin{lemma}
\label{lem:discretization_error_each_k}
Assume $L \ge 1 $, for step sizesatisfies
\begin{align*}
     \eta_r \le \min\{1,\eta,\eta(T-t_{r+1})\},
\end{align*}
the discretization error in Eq.~\ref{ineq:kl_error_beyond_init_0} can be bounded as
\begin{equation*}
    \begin{aligned}
        &\sum_{r=0}^{R-1} \E_{\rvy^\gets_{[0:T]}\sim Q^\gets_{[0:T]}}\left[\int_{t_r}^{t_{r+1}} \left\|\grad\ln q_{T-t_r}(\rvy^\gets_{t_r})-  \grad\ln q_{T-t}(\rvy^\gets_t) \right\|^2 \der t \right]\lesssim    dL^2R\eta^2 + dT\eta+ \E_{p_{*,k+1|[1:k]}(\cdot|\vx_{[1:k]})}\left[\|\rvy\|^2\right]\eta.
    \end{aligned}
\end{equation*}
% where $\cdots$
\end{lemma}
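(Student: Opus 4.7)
The plan is to split the reverse-time horizon $[0,T]$ into two regimes based on proximity to the data end, and bound the discretization contribution on each regime separately. Concretely, I would group the indices $r$ by whether $T-t_{r+1}\ge \delta$ (``bulk'' regime, far from the data) or $T-t_{r+1}<\delta$ (``terminal'' regime, close to the data). The step-size schedule $\eta_r\le \eta(T-t_{r+1})$ is tailored so that the $1/(T-t)$ singularity of score moments in the bulk regime is absorbed into $\eta_r$, while the smoothness of $\grad\ln q_u$ on short horizons handles the terminal regime.

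In the bulk regime I would realize $\grad\ln q_{T-t_r}(\rvy^\gets_{t_r})-\grad\ln q_{T-t}(\rvy^\gets_t)$ as an It\^o integral along the reverse SDE. Applying It\^o's formula to the map $(t,\vy)\mapsto \grad\ln q_{T-t}(\vy)$ and invoking the Fokker--Planck identity for the forward OU density $q_u$ gives a drift expressible in terms of $\grad^2\ln q_u$ and $\grad\ln q_u$, plus a martingale piece. Taking squared norms and expectations yields a differential inequality for
\begin{equation*}
E_r(t):=\E\bigl\|\grad\ln q_{T-t_r}(\rvy^\gets_{t_r})-\grad\ln q_{T-t}(\rvy^\gets_t)\bigr\|^2
\end{equation*}
whose coefficients are controlled by $\E\|\grad\ln q_{T-t}(\rvy^\gets_t)\|^2$. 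Using the standard forward-OU moment bound $\E\|\grad\ln q_u(\rvy_u)\|^2\lesssim d/u$, together with an additive initial second-moment term $\E_{p_{*,k+1|[1:k]}(\cdot|\vx_{[1:k]})}[\|\rvy\|^2]$ controlled via Lemma~\ref{lem:init_second_moment_bounded_main}, integration over $[t_r,t_{r+1}]$ and summation in $r$ produce a sum proportional to $\eta\sum_r \eta_r\cdot d/(T-t_r)$. The step-size constraint telescopes this to $\lesssim dT\eta$, and the second-moment piece contributes the $\E[\|\rvy\|^2]\eta$ term.

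In the terminal regime $T-t<\delta$, the assumption $\delta\le \ln\sqrt{(4L)^{-2}+1}+(4L)^{-1}$ ensures the hypothesis of Lemma~\ref{lemma:c.9}, which gives an $O(L)$-Lipschitz bound on $\grad\ln q_u$ for $u\in(0,\delta]$. I would then estimate
\begin{equation*}
\bigl\|\grad\ln q_{T-t_r}(\rvy^\gets_{t_r})-\grad\ln q_{T-t}(\rvy^\gets_t)\bigr\|^2 \lesssim L^2\bigl\|\rvy^\gets_{t_r}-\rvy^\gets_t\bigr\|^2 + L^2(t-t_r)^2
\end{equation*}
by joint Lipschitzness in space and time. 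The particle-displacement term is controlled by expanding the reverse SDE over an interval of length $\eta_r\le \eta$, giving $\E\|\rvy^\gets_{t_r}-\rvy^\gets_t\|^2\lesssim d\eta_r + \eta_r^2\,\E\|\grad\ln q_{T-t_r}\|^2$. Summing over the at most $R$ terminal iterations contributes the $dL^2 R\eta^2$ term, and combining both regimes delivers the claimed bound.

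The principal obstacle is the It\^o step in the bulk regime: computing $d\bigl(\grad\ln q_{T-t}(\rvy^\gets_t)\bigr)$ requires combining $\partial_u\grad\ln q_u$ (expressed via Fokker--Planck) with the spatial Hessian contribution from the reverse drift $\rvy+2\grad\ln q_{T-t}$ and the Brownian term, and then extracting a Gr\"onwall-type inequality whose integrated bound reproduces the sharp $d/(T-t)$ scaling of the score moment (as in Benton et al.). Once this moment-style bound is in hand, the remaining pieces --- short-time Lipschitz continuity near the data end and second-moment estimates on the reverse SDE --- are routine.
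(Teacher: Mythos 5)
Your plan mirrors the paper's proof almost exactly: the paper also splits the horizon into a bulk region (handled via an It\^o/Fokker--Planck computation on $t\mapsto\grad\ln q_{T-t}(\rvy^\gets_t)$, exactly the Benton-style argument you sketch, carried out in Lemmas~\ref{lemma3.3}--\ref{lemma3.6} to get the $dT\eta$, $\eta\,\E\|\rvy\|^2$, and an $dN\eta^2$ piece) and a terminal region $(T-\delta,T]$ (handled via the Lipschitz bound from Lemma~\ref{lemma:c.9} combined with a particle-displacement estimate, giving $dL^2R\eta^2$). The paper's only cosmetic difference is that it subdivides your bulk regime into $[0,T-1]$ and $(T-1,T-\delta]$ to track the two asymptotic regimes of $\sigma_{T-t}^2\asymp\min(1,T-t)$ separately, but your collapsed treatment using the step-size constraint $\eta_r\le\eta\min(1,T-t_{r+1})$ yields the same bound.
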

\begin{proof}
Following \cite{benton2024nearly} and \cite{chen2023improved}, we divide our time period into three intervals: $[0, T-1]$, $(T-1, T-\delta]$, and $(T-\delta, T]$, and treat each interval separately. Here, $\delta$ is a positive constant, and we set $\delta \le \ln \frac{\sqrt{\frac{1}{4L^2} + 4} + \frac{1}{2L}}{2}$ to satisfy the condition of Lemma \ref{lemma:c.9}, which ensures the Lipschitz property of the score function, thereby allowing the discretization error to be bounded near the end of the data distribution.

Similar as \cite{benton2024nearly} and \cite{chen2023improved}, we consider the step size satisfying $\eta_r \leq \eta \min(1, T-t_{r+1})$. $\eta$ is the parameters for controlling the maximum step size. We also assume there is an index $M$ and $N$ with $t_M = T - 1$ and $t_N = T-\delta$ as the work \cite{benton2024nearly}. This assumption is purely for presentation clarity and our argument works similarly without it. To guarantee $\eta_r \leq \eta \min(1, T-t_{r+1})$, following the setting \cite{benton2024nearly}, we first assume $M = \alpha R$ and $N = \beta R$ for convenience and further have
\begin{itemize}
    \item At the time interval $[0, T-1]$ where $t_0, \cdots, t_M$ are linearly spaced on $[0, T-1]$, $\eta \ge \frac{T-1}{M} = \frac{T-1}{\alpha R}$ which can be satisfied by taking $\eta = \Omega \Big(\frac{T}{R}\Big)$.
    \item At the time interval $[T-1, T-\delta]$ where $t_{M+1}, \cdots, t_{N}$ are exponentially decaying from $\frac{\eta}{1+\eta}$ to $\frac{\eta}{(1+\eta)^{N-M}}$. This condition can be satisfied with $\eta \ge (\frac{1}{\delta})^{\frac{1}{^{N-M}}}-1$.
    Assume $(\beta-\alpha)R \ge \log \frac{1}{\delta}$ and $e^{\frac{\ln \frac{1}{\delta}}{(\beta-\alpha)R}} \le 1+(e-1)\frac{\ln \frac{1}{\delta}}{(\beta-\alpha)R}$, we can take $\eta = \Omega\Big(\frac{\ln \frac{1}{\delta}}{R}\Big)$.
    \item At the time interval $(T-\delta,T]$, the Lipschitz property of the score function at this time interval can be satisfied if we take $\delta \in \Big(0, \ln \frac{\sqrt{\frac{1}{4L^2} + 4} + \frac{1}{2L}}{2}\Big]$.
\end{itemize}
Therefore, if we choose the constant $c = \ln \frac{\sqrt{\frac{1}{4L^2} + 4} + \frac{1}{2L}}{2}$ and $\eta = \Theta\Big(\frac{T+\ln \frac{1}{c}}{R}\Big)$, we can have $\eta_r \leq \eta \min(1, T-t_{r+1})$ for $r=0,\cdots,R-1$ at time interval $[0,T]$. Then, we split the discretization error into three parts as
% Such $N$ and $M$ exist such that $s_{T-\delta} := \sum^{N-1}_{r=M} \eta_r = 1-\delta$ as $\lim_{\delta \to 0} s_{T-\delta} \to 1$ with the typical choice of step sizes $\eta_r \leq \eta \min(1, T-t_{r+1})$. 
    \begin{align}
    \label{eq:term-1-2-bound}
         &\sum_{r=0}^{R-1} \E_{\rvy^\gets_{[0:T]}\sim Q^\gets_{[0:T]}}\left[\int_{t_r}^{t_{r+1}} \left\|\grad\ln q^\gets_{t_r}(\rvy^\gets_{t_r})-  \grad\ln q^\gets_{t}(\rvy^\gets_t) \right\|^2 \der t \right] \notag\\
         &= \underbrace{\sum_{r=0}^{M-1} \E_{\rvy^\gets_{[0:T]}\sim Q^\gets_{[0:T]}}\left[\int_{t_r}^{t_{r+1}} \left\|\grad\ln q^\gets_{t_r}(\ry^\gets_{t_r})-  \grad\ln q^\gets_{t}(\rvy^\gets_t) \right\|^2 \der t \right]}_{\text{term I}}\notag\\
         &+ \underbrace{\sum_{r=M}^{N-1} \E_{\rvy^\gets_{[0:T]}\sim Q^\gets_{[0:T]}}\left[\int_{t_r}^{t_{r+1}} \left\|\grad\ln q^\gets_{t_r}(\rvy^\gets_{t_r})-  \grad\ln q^\gets_{t}(\rvy^\gets_t) \right\|^2 \der t \right]}_{\text{term II}}\notag\\
         &+\underbrace{\sum_{r=N}^{R-1} \E_{\rvy^\gets_{[0:T]}\sim Q^\gets_{[0:T]}}\left[\int_{t_r}^{t_{r+1}} \left\|\grad\ln q^\gets_{t_r}(\rvy^\gets_{t_r})-  \grad\ln q^\gets_{t}(\rvy^\gets_t) \right\|^2 \der t \right]}_{\text{term III}}.
    \end{align}

\paragraph{Term I and Term II.} Following the proof of Lemma 2 in \cite{benton2024nearly}. We first define our target as
\begin{align*}
    E_{s,t}:=\E_{\rvy^\gets_{[0:T]}\sim Q^\gets_{[0:T]}}\left[\left\|\grad\ln q^\gets_{t}(\rvy^\gets_{t})-  \grad\ln q^\gets_{s}(\rvy^\gets_s) \right\|^2 \right]
\end{align*}
where $0 \le s \le t \le T$.

By Lemma \ref{lemma3.4}, we obtain
\begin{align*}
    \frac{\der E_{s,t}}{\der t} &=\frac{\der}{\der t}\E_{Q^\gets}[\|\grad\ln q^\gets_{t}(\rvy^\gets_{t})\|^2] - 2 \frac{\der}{\der t}\E_{Q^\gets}[\grad\ln q^\gets_{t}(\rvy^\gets_{t})\cdot  \grad\ln \rvy^\gets_{s}(\rvy^\gets_s)]\\
    &= 2\mathbb{E}_{Q^\gets}[\|\nabla^2 \ln q^\gets_{t}(\rvy^\gets_{t})\|^2_F]-2\mathbb{E}_{Q^\gets}[\|\nabla \ln q^\gets_{t}(\rvy^\gets_{t})\|^2]+2\E_{Q^\gets}[|\grad\ln q^\gets_{t}(\tilde{\rvy}_{t})\cdot  \grad\ln q^\gets_{s}(\rvy^\gets_s)]
\end{align*}
Using Young’s inequality, we further have
\begin{align}
\label{eq:term-1-2-bound-1}
    \frac{\der E_{s,t}}{\der t} &= 2\mathbb{E}_{Q^\gets}[\|\nabla^2 \ln q^\gets_{t}(\tilde{\rvy}_{t})\|^2_F]-2\mathbb{E}_{Q^\gets}[\|\nabla \ln q^\gets_{t}(\rvy^\gets_{t})\|^2]+2\E_{Q^\gets}[|\grad\ln q^\gets_{t}(\rvy^\gets_{t})\cdot  \grad\ln q^\gets_{s}(\rvy^\gets_s)]\notag\\
    & \le \underbrace{\mathbb{E}_{Q^\gets}[\|\nabla \ln q^\gets_{s}(\rvy^\gets_{s})\|^2]}_{\text{Term 1}} + \underbrace{2\mathbb{E}_{Q^\gets}[\|\nabla^2 \ln q^\gets_{t}(\rvy^\gets_{t})\|^2_F]}_{\text{Term 2}}
\end{align}
For Term 1, by Lemma \ref{lemma3.5}, we can further bound Eq.~\ref{eq:term-1-2-bound-1} as
\begin{align*}
    &\mathbb{E}_{{\rvy}_{[0:T]}\sim {Q}_{[0:T]}}[\|\nabla \ln q_{T-s}({\rvy}_{T-s})\|^2] \\
    &= \sigma^{-4}_{T-s}\E_{Q}[\|{\rvy}_{T-s}\|^2] - 2\sigma^{-4}_{T-s} e^{s-T}\E_{Q}[{\rvy}_{T-s} \cdot \boldsymbol{\mu}_{T-s}] + e^{-2({T-s})}\sigma^{-4}_{T-s}\E_{Q}[\|\boldsymbol{\mu}_{T-s}\|^2]\\
    &=\sigma^{-4}_{T-s}\E_{Q}[\|e^{s-T}{\rvy}_0+\sigma_{T-s} \rvz\|^2]- 2\sigma^{-4}_{T-s} e^{s-T}\E_{Q}[{\rvy}_0 \cdot \E[{\rvy}_{T-s} |{\rvy}_0 ]]  + e^{2(s-T)}\sigma^{-4}_{T-s}\E_{Q}[\|\boldsymbol{\mu}_{T-s}\|^2]\\
    &\le e^{2(s-T)}\sigma^{-4}_{T-s}\E_{\rvy \sim q_0}\left[\|\rvy\|^2\right]+\sigma^{-2}_{T-s}d -2\sigma^{-4}_{T-s} e^{2(s-T)}\E_{\rvy \sim q_0}\left[\|\rvy\|^2\right]+ e^{2(s-T)}\sigma^{-4}_{T-s}\E_{Q}[\|\boldsymbol{\mu}_{T-s}\|^2]\\
    & \overset{(a)}{=}\sigma^{-2}_{T-s}d +e^{2(s-T)}\sigma^{-4}_{T-s}\E_{\rvy \sim q_0}\left[\|\rvy\|^2\right]-2\sigma^{-4}_{T-s} e^{2(s-T)}\E_{\rvy \sim q_0}\left[\|\rvy\|^2\right]+ e^{2(s-T)}\sigma^{-4}_{T-s}[\E_{\rvy \sim q_0}\left[\|\rvy\|^2\right]-\E_{Q}[\text{Tr}(\mathbf{\Sigma}_{T-s})]]\\
    & \le \sigma^{-2}_{T-s}d 
\end{align*}
where $\rvz$ is the standard Gaussian noise and Step (a) holds by the fact that $\text{Tr}(\mathbf{\Sigma}_t) = \E[\|\rvy_0\|^2|\rvy_t] -\E^2[\rvy_0|\rvy_t]= \E[\|\rvy_0\|^2|\rvy_t] -\|\boldsymbol{\mu}_t\|^2 $ and $\E_{\tilde{Q}}[\text{Tr}(\mathbf{\Sigma}_t)] = \E_{p_{*,k+1|[1:k]}(\cdot|\vx_{[1:k]})}\left[\|\rvy\|^2\right]-\E_{Q^\gets}[\|\boldsymbol{\mu}_t\|^2] $. That is 
\begin{align*}
    \mathbb{E}_{Q^\gets}[\|\nabla \ln q^\gets_{s}(\rvy^\gets_{s})\|^2] = \mathbb{E}_{Q}[\|\nabla \ln q_{T-s}({\rvy}_{T-s})\|^2]  \le \sigma^{-2}_{T-s}d
\end{align*}

For Term 2, by Lemma \ref{lemma3.5}, we expand Eq.~\ref{eq:term-1-2-bound-1} as
\begin{align}
\label{eq:term-1-2-bound-f}
    &\mathbb{E}_{Q^\gets}[\|\nabla^2 \ln q^\gets_{t}(\tilde{\rvy}_{t})\|^2_F] \\
    &= \sigma^{-4}_{T-t}d- 2\sigma^{-6}_{T-t}e^{-2(T-t)}\mathbb{E}_{Q^\gets}[\text{Tr}(\mathbf{\Sigma}_{T-t})]+e^{-4(T-t)}\sigma^{-8}_{T-t}\mathbb{E}_{Q^\gets}[\text{Tr}(\mathbf{\Sigma}^\top_{T-t}\mathbf{\Sigma}_{T-t})]\notag\\
    &=\sigma^{-4}_{T-t}d- 2\sigma^{-6}_{T-t}e^{-2(T-t)}\mathbb{E}_{Q^\gets}[\text{Tr}(\mathbf{\Sigma}_{T-t})]-\frac{1}{2}e^{-2(T-t)}\sigma^{-4}_{T-t}
    \frac{\der}{\der t} \mathbb{E}_{Q^\gets}[\text{Tr}(\mathbf{\Sigma}_{T-t})]\notag\\
    & \le \sigma^{-4}_{T-t}d-
    \frac{1}{2}\frac{\der}{\der t} \sigma^{-4}_{T-t}\mathbb{E}_{Q^\gets}[\text{Tr}(\mathbf{\Sigma}_{T-t})]
\end{align}
The second equality holds by Lemma \ref{lemma:4.5}. The third inequality holds by the fact that $e^{-2(T-t)} \le 1$ and $\frac{\der}{\der t} \mathbb{E}_{\tilde{Q}}[\text{Tr}(\mathbf{\Sigma}_{T-t})] \le 0$ inferred by Lemma \ref{lemma:4.5}.

So, we can have the upper bound combined with two parts $E^{(1)}_{s,t}$ and $E^{(2)}_{s,t}$ as follows
\begin{align*}
    \frac{\der E_{s,t}}{\der t}  \le \underbrace{\sigma^{-2}_{T-s}d + 2\sigma^{-4}_{T-t}d }_{E^{(1)}_{s,t}}\underbrace{ -\frac{\der}{\der r} (\sigma^{-4}_{T-r}\E_{Q^\gets}[\text{Tr}(\mathbf{\Sigma}_{T-r})])|_{r=t}}_{E^{(2)}_{s,t}}
\end{align*}
Therefore, when $s=t_r$, by Lemma \ref{lemma3.6}, we have the upper bound of summation of term I and term II in Eq.~\ref{eq:term-1-2-bound} as
\begin{align}
\label{eq:term-1-2-bound-2}
\sum^{N-1}_{r=0} \int^{t_{r+1}}_{t_r} E_{t_r,t} &\le \sum^{N-1}_{r=0} \int^{t_{r+1}}_{t_r} \int^t_{t_r} E^{(1)}_{t_r,r} \der r + \sum^{N-1}_{r=0} \int^{t_{r+1}}_{t_r} \int^t_{t_r} E^{(2)}_{t_r,r} \der r \notag\\
& \lesssim dN \eta^2 + dT\eta+ \E_{p_{*,k+1|[1:k]}(\cdot|\vx_{[1:k]})}\left[\|\rvy\|^2\right]\eta.
\end{align}

Finally, we have
\begin{align}
\label{eq:term12}
    \text{Term I + Term II}&= \sum^{N-1}_{r=0}\E_{\rvy^\gets_{[0:T]}\sim Q^\gets_{[0:T]}}\left[\int^{t_{r+1}}_{t_r}\left\|\grad\ln q^\gets_{t_r}(\rvy^\gets_{t_r})-  \grad\ln q^\gets_{t}(\rvy^\gets_t) \right\|^2 \der t \right] \notag\\
    &{\lesssim } dN \eta^2 + dT\eta+ \E_{p_{*,k+1|[1:k]}(\cdot|\vx_{[1:k]})}\left[\|\rvy\|^2\right]\eta.
\end{align}

\paragraph{Term III.}Since at this time interval, the corresponding forward process satisfies $\sigma_{0,t}^2 \leq \frac{\alpha_{0,t}}{2L}$, for $t \in (t_r,t_{r+1}]$, by Lemma \ref{lemma:c.6}, we have
\begin{align*}
% \label{eq:12}
    &\E_Q \left[\left\|\grad\ln q_{T-t}({\rvy}_{T-t})-  \grad\ln {q}_{T-t_r}({\rvy}_{T-t}) \right\|^2 \right] \\
    &\overset{(a)}{\le }4\mathbb{E}_Q  \left\| \nabla \log {q}_{T-t}({\rvy}_{T-t}) - \nabla \log {q}_{T-t}(\alpha_{T-t,T-t_r}^{-1} {\rvy}_{T-t_r}) \right\|^2 + 2 \mathbb{E}_Q  \left\| \nabla \log {q}_{T-t}({\rvy}_{T-t}) \right\|^2 \left( 1 - \alpha_{T-t,T-t_r}^{-1} \right)^2\\
    & \overset{(b)}{\le } 16L^2 \alpha^{-2}_{T-t} \E_Q \| {\rvy}_{T-t} - \alpha_{T-t,T-t_r}^{-1} {\rvy}_{T-t_r} \|^2+ 2 \E_Q  \left\| \nabla \log {q}_{T-t}({\rvy}_{T-t}) \right\|^2 \left( 1 - \alpha_{T-t,T-t_r}^{-1} \right)^2\\
    &\overset{}{\le } 16dL^2(e^{2(t-t_r)}-1)+ 2\E_Q  \left\| \nabla \log {q}_{T-t}({\rvy}_{T-t}) \right\|^2 \left( 1 - \alpha_{T-t,T-t_r}^{-1} \right)^2 \\
    &\overset{(c)}{\le } 32e^2dL^2(t-t_r)+ 2 \E_Q \left\| \nabla \log {q}_{T-t}({\rvy}_{T-t}) \right\|^2 (t-t_r)^2
\end{align*}
The first inequality (a) holds by Lemma \ref{lemma:c.6} and the second inequality (b) holds by Lemma \ref{lemma:c.9}.
Step $(c)$ holds by the assumption that the step size at the given time interval $t-t_r \leq \eta_r \le 1$. 
Since $ \nabla \log {q}_{t}$ is 2L$\alpha^{-1}_t$-Lipschitz, we obtain
\begin{align}
\label{eq:13}
    &\E_Q  \left\| \nabla \log {q}_{T-t}({\rvy}_{T-t}) \right\|^2 \\
    & = \int {q}_{T-t}({\rvy}_{T-t}) \left\| \nabla \log {q}_{T-t}({\rvy}_{T-t}) \right\|^2 d{\rvy}_{T-t} \notag\\
    & = \int \langle \nabla {q}_{T-t}({\rvy}_{T-t}),  \nabla \log {q}_{T-t}({\rvy}_{T-t}) \rangle d{\rvy}_{T-t} \notag\\
    & = \int {q}_{T-t}({\rvy}_{T-t}) \Delta  \log {q}_{T-t}({\rvy}_{T-t}) d{\rvy}_{T-t}   \notag\\
    & \le 2dL.
\end{align}
That is, we have
\begin{align}
\label{eq:12}
    &\E_{Q^\gets}\left[\left\|\grad\ln q^\gets_{t}(\rvy^\gets_{t})-  \grad\ln q^\gets_{t_r}(\rvy^\gets_{t_r}) \right\|^2 \right] \le   32e^2dL^2(t-t_r)+ 4dL (t-t_r)^2.
\end{align}
Then, plug Eq.~\ref{eq:13} into Eq.~\ref{eq:12}, for step sizesatisfies
\begin{align*}
     \eta_r \le \min\{1,\eta,\eta(T-t_{r+1})\},
\end{align*}
we have
\begin{align}
\label{eq:term3}
    \text{Term III}&= \sum^{R-1}_{r=N}\E_{\tilde{\rvy}_{[0:T]}\sim \tilde{Q}_{[0:T]}}\left[\int^{t_{r+1}}_{t_r}\left\|\grad\ln \tilde{q}_{t_r}(\tilde{\rvy}_{t_r})-  \grad\ln \tilde{q}_{t}(\tilde{\rvy}_t) \right\|^2 \der t \right] \notag\\
    &{\lesssim } dL^2 \sum^{R-1}_{r=N}h^2_k \lesssim dL^2R\eta^2.
\end{align}
Finally, combine Eq.~\ref{eq:term12} with \ref{eq:term3}, assume $L \ge 1 $, the discretization error can be bounded as
\begin{align*}
    \sum_{r=0}^{R-1} \E_{\rvy^\gets_{[0:T]}\sim Q^\gets_{[0:T]}}\left[\int_{t_r}^{t_{r+1}} \left\|\grad\ln q^\gets_{t_r}(\rvy^\gets_{t_r})-  \grad\ln q^\gets_{t}(\rvy^\gets_t) \right\|^2 \der t \right] \lesssim    dL^2R\eta^2 + dT\eta+ \E_{p_{*,k+1|[1:k]}(\cdot|\vx_{[1:k]})}\left[\|\rvy\|^2\right]\eta.
\end{align*}
\end{proof}

\begin{lemma}
\label{lemma3.3}
If  $\rvy^\gets_{t}$ is the solution to SDE \cref{sde:ideal_condi_reverse}, then for all $t \in [0,T]$, we have
\begin{align}
    \der(\nabla \ln q^\gets_{t}(\rvy^\gets_{t})) = \sqrt{2}\nabla^2 \ln q^\gets_{t}(\rvy^\gets_{t})\der\mB_t - \nabla  \log q^\gets_t(\rvy^\gets_{t}) \der t
\end{align}
\end{lemma}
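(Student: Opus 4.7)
The plan is to apply It\^o's formula componentwise to the vector field $u(t,\vy) = \nabla\ln q^\gets_t(\vy) = \nabla\ln q_{T-t}(\vy)$ evaluated along the reverse trajectory $\rvy^\gets_t$, and then to use the Fokker--Planck equation of the forward OU process to simplify the time-derivative term. Since the diffusion coefficient of SDE~\eqref{sde:ideal_condi_reverse} is the constant $\sqrt{2}\,\mI$, the quadratic covariation is $\der\langle \rvy^\gets\rangle_t = 2\,\mI\,\der t$, so componentwise It\^o yields
\begin{equation*}
\der u_i(t,\rvy^\gets_t) \;=\; \partial_t u_i \,\der t \;+\; \nabla u_i \cdot \der\rvy^\gets_t \;+\; \Delta u_i \,\der t,
\end{equation*}
which in vector form, writing $\vs = \nabla\ln q^\gets_t$ and $\mH = \nabla^2\ln q^\gets_t$, becomes
\begin{equation*}
\der\vs(t,\rvy^\gets_t) \;=\; \partial_t\vs\,\der t \;+\; \mH\bigl(\rvy^\gets_t + 2\vs\bigr)\der t \;+\; \sqrt{2}\,\mH\,\der\mB_t \;+\; \Delta\vs\,\der t,
\end{equation*}
after substituting the drift of~\eqref{sde:ideal_condi_reverse}. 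Here $\Delta \vs$ denotes the componentwise Laplacian of the vector field $\vs$.

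Next I would compute $\partial_t \vs$ explicitly using the Fokker--Planck equation $\partial_s q_s = \nabla\cdot(\vy\, q_s) + \Delta q_s$ of the forward OU~\eqref{sde:ideal_condi_forward}. Dividing by $q_s$ and using $\Delta q_s/q_s = \Delta\ln q_s + \|\nabla\ln q_s\|^2$ gives $\partial_s \ln q_s = d_{k+1} + \vy\cdot\nabla\ln q_s + \Delta\ln q_s + \|\nabla\ln q_s\|^2$. Taking the gradient in $\vy$ and then using the time-reversal $q^\gets_t = q_{T-t}$ (which flips a sign) produces
\begin{equation*}
\partial_t \vs \;=\; -\vs \;-\; \mH\rvy^\gets_t \;-\; \Delta\vs \;-\; 2\mH\vs,
\end{equation*}
where I have used that $\nabla(\vy\cdot\vs) = \vs + \mH\vy$, that $\nabla\|\vs\|^2 = 2\mH\vs$, and that mixed partials commute so $\nabla(\Delta\ln q^\gets_t) = \Delta\vs$.

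Plugging this expression for $\partial_t \vs$ into the It\^o expansion above, the four terms $-\mH\rvy^\gets_t$, $\mH\rvy^\gets_t$, $-2\mH\vs$, $+2\mH\vs$, and $-\Delta\vs$, $+\Delta\vs$ cancel pairwise, leaving exactly
\begin{equation*}
\der\vs(t,\rvy^\gets_t) \;=\; -\vs\,\der t \;+\; \sqrt{2}\,\mH\,\der\mB_t,
\end{equation*}
which is the claimed identity.

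\textbf{Main obstacle.} The only non-routine issue is bookkeeping: making sure that the sign flip from $q_{T-t}$ is applied exactly once, that the gradient of $\vy\cdot\vs$ contributes both $\vs$ and $\mH\vy$ (a source of common errors), and that the $\Delta\vs$ produced by It\^o's second-order term cancels the $-\Delta\vs$ coming from $\partial_t\vs$. Mild regularity of $\ln q^\gets_t$ (enough to justify applying It\^o with third spatial derivatives appearing inside $\partial_t \vs$) is implicit; this can be taken for granted since $q_t$ for $t>0$ is smooth as the convolution of $p_{*,k+1|[1:k]}(\cdot|\vx_{[1:k]})$ with a Gaussian, and the final time-$T$ boundary is handled by working on $[0,T]$.
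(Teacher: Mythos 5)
Your proof is correct and follows essentially the same route as the paper's: apply It\^o's formula componentwise to $\nabla\ln q^\gets_t$ along the reverse SDE, compute the partial time derivative $\partial_t\nabla\ln q^\gets_t$ from the forward Fokker--Planck equation, and observe the pairwise cancellations. Your bookkeeping of the individual terms ($-\mH\rvy^\gets_t$, $-2\mH\vs$, $-\Delta\vs$ versus their It\^o counterparts) is actually cleaner than the paper's, which contains a few typos (e.g.\ writing $\nabla q^\gets_t$ where $\nabla\ln q^\gets_t$ is meant), but the underlying argument is identical.
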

\begin{proof}
    Since $\grad\ln q^\gets_{t_r}(\rvy^\gets_{t_r})$ in process \ref{sde:ideal_condi_reverse} is smooth for $t \in [0,T]$, by  It\^o lemma, we expand
\begin{align}
\label{eq:3.3-15}
    &\der(\nabla \ln q^\gets_{t}(\rvy^\gets_{t})) \notag\\
    &= \nabla^2 \ln q^\gets_{t}(\rvy^\gets_{t}) d\rvy^\gets_t + \frac{\der(\nabla \ln q^\gets_{t}(\rvy^\gets_{t}))}{\der t}\der t + \frac{1}{2}\cdot(\sqrt{2})^2 \Delta (\nabla \ln q^\gets_{t}(\rvy^\gets_{t})) \der t \notag\\
    &= \Big\{\nabla^2 \ln q^\gets_{t}(\rvy^\gets_{t})\cdot (\left(\rvy^\gets_t +2\grad\ln q_{T-t}(\rvy^\gets_t)\right)+\Delta (\nabla \ln q^\gets_{t}(\rvy^\gets_{t}))\Big\}\der t + \sqrt{2}\nabla^2 \ln q^\gets_{t}(\rvy^\gets_{t})\der\mB_t  + \der(\nabla \ln q^\gets_{t}(\rvy^\gets_{t})).
\end{align}
By Fokker–Planck Eq.~for the forward process \ref{sde:ideal_condi_forward}, we have
\begin{align*}
    \frac{\der q_{t}({\rvy}_{t})}{\der t} = \nabla \cdot ({\rvy}_{t} q_{t}({\rvy}_{t})) + \Delta {q}_{t}({\rvy}_{t}).
\end{align*}
We define $f(\rvy_{t}) = \log q_t(\rvy_{t})$ and $q_t(\rvy_{t}) = \exp(f)$, thus
\begin{align*}
     \frac{\der q_{t}({\rvy}_{t})}{\der  t}&= q_t(\rvy_{t})\frac{\der  f({\rvy}_{t})}{\der  t}\\&= \nabla \cdot ({\rvy}_{t} q_{t}({\rvy}_{t})) + \Delta {q}_{t}({\rvy}_{t})\\
     & = q_t(\rvy_{t})(d+{\rvy}_{t}\nabla f(\rvy_{t})) +  q_t(\rvy_{t})(\|\nabla f(\rvy_{t})\|^2+\Delta f(\rvy_{t}))\\
     & = q_t(\rvy_{t})(d+{\rvy}_{t}\nabla  \log q_t(\rvy_{t}) +  q_t(\rvy_{t})(\|\nabla  \log q_t(\rvy_{t})\|^2+\Delta  \log q_t(\rvy_{t}))
\end{align*}
So, back to the reverse process \ref{sde:ideal_condi_reverse} where $t \mapsto T - t$
\begin{align*}
     &\der  \nabla \log q_{T-t}(\rvy_{T-t}) \\
     & = -\nabla \Big(d+{\rvy}_{T-t} \cdot \nabla \log q_{T-t}(\rvy_{T-t}) +  \|\nabla  \log q_{T-t}(\rvy_{T-t})\|^2+\Delta  \log q_{T-t}(\rvy_{T-t})\Big) \notag\\
     &=-\Big(\nabla  \log q_{T-t}(\rvy_{T-t}) + {\rvy}_{T-t} \cdot \nabla^2 \log q_{T-t}(\rvy_{T-t}) + 2 \nabla  \log q_{T-t}(\rvy_{T-t}) \cdot \nabla^2  \log q_{T-t}(\rvy_{T-t}) \\
     &\quad + \nabla (\Delta  \log q_{T-t}(\rvy_{T-t}))\Big)\der t.
\end{align*}
It means, for the reverse process, we have
\begin{align}
\label{eq:3.3-16}
     \der  \nabla \log q^\gets_{t}(\rvy^\gets_{t})
     &=-\Big(\nabla  \log q^\gets_t(\rvy^\gets_{t}) + \rvy^\gets_{t} \cdot \nabla^2 \log q^\gets_t(\rvy^\gets_{t}) + 2 \nabla  q^\gets_t(\rvy^\gets_{t}) \cdot \nabla^2  q^\gets_t(\rvy^\gets_{t}) + \nabla (\Delta  q^\gets_t(\rvy^\gets_{t}))\Big)\der t.
\end{align}
Plugging Eq.~\ref{eq:3.3-16} into \ref{eq:3.3-15}, we have
\begin{align*}
    \der(\nabla \ln q^\gets_{t}(\rvy^\gets_{t})) = \sqrt{2}\nabla^2 \ln q^\gets_{t}(\rvy^\gets_{t})\der\mB_t - \nabla  \log q^\gets_t(\rvy^\gets_{t}) \der t
\end{align*}
The proof is complete.
\end{proof}

\begin{lemma}
\label{lemma3.4}
If  $\rvy^\gets_{t}$ is the solution to SDE \cref{sde:ideal_condi_reverse}, then for all $t \in [0,T]$, we have
\begin{align*}
2\mathbb{E}_{Q^\gets}[\|\nabla \ln q^\gets_{t}(\rvy^\gets_{t})\|^2]+\frac{\der \mathbb{E}_{Q^\gets}[\|\nabla \ln q^\gets_{t}(\rvy^\gets_{t})\|^2]}{\der t}  = 2\mathbb{E}_{Q^\gets}[\|\nabla^2 \ln q^\gets_{t}(\rvy^\gets_{t})\|^2_F].
\end{align*}
\end{lemma}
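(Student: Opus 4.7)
My plan is to apply It\^o's formula to the process $\|\nabla \ln q^\gets_t(\rvy^\gets_t)\|^2$ using the SDE already derived in Lemma~\ref{lemma3.3}. Writing $\vu_t \coloneqq \nabla \ln q^\gets_t(\rvy^\gets_t)$ for brevity, Lemma~\ref{lemma3.3} tells us that
\begin{equation*}
    \der \vu_t = -\vu_t\,\der t + \sqrt{2}\,\nabla^2 \ln q^\gets_t(\rvy^\gets_t)\,\der \mB_t,
\end{equation*}
so $\vu_t$ is an It\^o process whose drift is $-\vu_t$ and whose diffusion coefficient is the matrix $\sqrt{2}\,\nabla^2 \ln q^\gets_t(\rvy^\gets_t)$.

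Next, I would apply the multi-dimensional It\^o formula to the smooth function $\vu \mapsto \|\vu\|^2$. The first-order term gives $2\langle \vu_t, \der \vu_t\rangle$, and the second-order (quadratic variation) term, computed from the diffusion coefficient above, yields $\mathrm{tr}\bigl((\sqrt{2}\nabla^2 \ln q^\gets_t)(\sqrt{2}\nabla^2 \ln q^\gets_t)^\top\bigr)\,\der t = 2\|\nabla^2 \ln q^\gets_t(\rvy^\gets_t)\|_F^2\,\der t$, using symmetry of the Hessian. Substituting in,
\begin{equation*}
    \der \|\vu_t\|^2 = -2\|\vu_t\|^2\,\der t + 2\sqrt{2}\,\langle \vu_t, \nabla^2 \ln q^\gets_t(\rvy^\gets_t)\,\der \mB_t\rangle + 2\|\nabla^2 \ln q^\gets_t(\rvy^\gets_t)\|_F^2\,\der t.
\end{equation*}

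Taking expectation on both sides under $Q^\gets$ kills the stochastic integral (I would invoke sufficient integrability of $\vu_t$ and $\nabla^2\ln q^\gets_t(\rvy^\gets_t)$ along the trajectory so that the It\^o integral is a true martingale; this is the one point that could require justification, though it is standard given that we are working away from $t=T$ where the score is smooth and bounded, and the earlier excerpt already implicitly assumes such regularity when differentiating under the expectation in neighboring lemmas). Differentiating the expectation in $t$ then gives
\begin{equation*}
    \frac{\der}{\der t}\,\mathbb{E}_{Q^\gets}\bigl[\|\vu_t\|^2\bigr] = -2\,\mathbb{E}_{Q^\gets}\bigl[\|\vu_t\|^2\bigr] + 2\,\mathbb{E}_{Q^\gets}\bigl[\|\nabla^2 \ln q^\gets_t(\rvy^\gets_t)\|_F^2\bigr],
\end{equation*}
which rearranges directly to the claimed identity. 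The only real obstacle is justifying that the martingale term has zero expectation, and I would handle this by the same regularity hypotheses (bounded Hessian of $\ln q^\gets_t$ and finite second moments) that the paper uses throughout Section~B to manipulate expectations of score quantities.
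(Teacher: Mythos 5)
Your proof is correct, and it reaches the identity by a more direct route than the paper does. You apply It\^o's formula directly to the map $\vu \mapsto \|\vu\|^2$ along the $\vu_t$-dynamics from Lemma~\ref{lemma3.3}, read off the drift $-2\|\vu_t\|^2 + 2\|\nabla^2\ln q^\gets_t\|_F^2$ and the martingale term, and take expectations. The paper instead works with the integrating-factor process $e^t\vu_t$ (which has zero drift), applies the It\^o isometry to $\E\big[\|e^t\vu_t - e^s\vu_s\|^2\big]$ for a fixed anchor $s\le t$, and then needs an auxiliary It\^o computation for the cross-term $\frac{\der}{\der t}\E[\vu_s\cdot\vu_t]$ in order to cancel the $s$-dependent pieces and isolate the claimed ODE. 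Your version avoids that extra bookkeeping and hence is shorter and cleaner. The one point you correctly flag — that the stochastic integral is a true martingale (so its expectation vanishes) and that one may pass $\der/\der t$ inside $\E$ — is exactly the regularity concern the paper handles by citing square-integrability of $\nabla^2\ln q^\gets_t(\rvy^\gets_t)$ from the bound in the proof of Lemma~\ref{lem:discretization_error_each_k}; invoking the same bound in your argument would make it complete.
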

\begin{proof}
By Lemma \ref{lemma3.3}, we have
\begin{align*}
    \der(e^t\nabla \ln q^\gets_{t}(\rvy^\gets_{t})) = \sqrt{2}e^t\nabla^2 \ln q^\gets_{t}(\rvy^\gets_{t})\der\mB_t.
\end{align*}
By Eq.~\ref{eq:term-1-2-bound-f}, we know $\int^2_s \E_{Q^\gets}[\nabla^2 \ln q^\gets_{t}(\rvy^\gets_{t})] \le \infty$ which is square integrability and then applying It\^o isometry, for $0\le s \le t \le T$, we obtain
\begin{align}
\label{eq:term-1-2-1}
    \frac{\der}{\der t} \mathbb{E}_{Q^\gets} [\|e^t\nabla \ln q^\gets_{t}(\rvy^\gets_{t}) - e^s\nabla \ln q^\gets_{s}(\rvy^\gets_{s})\|^2] = 2e^{2t} \mathbb{E}_{Q^\gets}[\|\nabla^2 \ln q^\gets_{t}(\rvy^\gets_{t})\|^2_F].
\end{align}
where $\|\mA\|^2_F = \text{Tr}(\mA^\top \mA)$ is squared Frobenius norm of a matrix. We further expand Eq.~\ref{eq:term-1-2-1} as 
\begin{align}
\label{eq:term-1-2-2} 
&2\mathbb{E}_{Q^\gets}[\|\nabla \ln q^\gets_{t}(\rvy^\gets_{t})\|^2]+\frac{\der \mathbb{E}_{\tilde{Q}}[\|\nabla \ln q^\gets_{t}(\rvy^\gets_{t})\|^2]}{\der t} -2e^{s-t} \mathbb{E}_{Q^\gets}[\nabla \ln q^\gets_{t}(\rvy^\gets_{t}) \cdot \nabla \ln q^\gets_{s}(\rvy^\gets_{s})] \notag\\
&-2e^{s-t} \frac{\der}{\der t} \mathbb{E}_{Q^\gets}[\nabla \ln q^\gets_{t}(\rvy^\gets_{t}) \cdot \nabla \ln q^\gets_{s}(\rvy^\gets_{s})] = 2\mathbb{E}_{Q^\gets}[\|\nabla^2 \ln q^\gets_{t}(\rvy^\gets_{t})\|^2_F].
\end{align}
Given any $s$, we have $t \ge s$, 
\begin{align*}
      \der(\nabla \ln q^\gets_{s}(\rvy^\gets_{s}) \cdot \nabla \ln q^\gets_{t}(\rvy^\gets_{t})) = \sqrt{2}\nabla \ln q^\gets_{s}(\rvy^\gets_{s}) \cdot \nabla^2 \ln q^\gets_{t}(\rvy^\gets_{t})\der\mB_t -\nabla \ln q^\gets_{s}(\rvy^\gets_{s}) \cdot \nabla  \log q^\gets_t(\rvy^\gets_{t}) \der t.
\end{align*}
And by Eq.~\ref{eq:term-1-2-bound-f}, we know $\int^2_s \E_{Q^\gets}[\nabla^2 \ln q^\gets_{t}(\rvy^\gets_{t})] \le \infty$ which is square integrability, applying It\^o isometry take the expectation at both side, we take
\begin{align*}
    \frac{\der}{\der t}\mathbb E_{Q^\gets}[{\nabla \ln q^\gets_{s}(\rvy^\gets_{s}) \cdot \nabla \ln q^\gets_{t}(\rvy^\gets_{t})}]= - \mathbb E_{Q^\gets}[\nabla \ln q^\gets_{s}(\rvy^\gets_{s}) \cdot \nabla  \log q^\gets_t(\rvy^\gets_{t})].
\end{align*}
Therefore, Eq.~\ref{eq:term-1-2-2} can be simplified as
\begin{align*}
2\mathbb{E}_{Q^\gets}[\|\nabla \ln q^\gets_{t}(\rvy^\gets_{t})\|^2]+\frac{\der \mathbb{E}_{\tilde{Q}}[\|\nabla \ln q^\gets_{t}(\rvy^\gets_{t})\|^2]}{\der t}  = 2\mathbb{E}_{Q^\gets}[\|\nabla^2 \ln q^\gets_{t}(\rvy^\gets_{t})\|^2_F].
\end{align*}
\end{proof}

\begin{lemma}
\label{lemma3.5}
For all $t>0$, we have
\begin{align*}
      &\nabla \log q^\gets_{t}(\rvy^\gets_t) = \nabla \log q_{T-t}(\rvy_{T-t}) = \sigma_{T-t}^{-2}\rvy^\gets_t - e^{t-T}\sigma_{T-t}^{-2}\boldsymbol{\mu}_{T-t},\\
      &\nabla^2 \log q^\gets_{t}(\rvy^\gets_t) = \nabla^2 \log q_{T-t}(\rvy_{T-t})  =  -\sigma_{T-t}^{-2}I + e^{-2(T-t)} \sigma_{T-t}^{-4} \mathbf{\Sigma}_{T-t}.
\end{align*}
\end{lemma}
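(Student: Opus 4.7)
The plan is to derive both identities directly from Tweedie's formula for the OU process, using the closed form $\rvy_t = e^{-t}\rvy_0 + \sigma_t\xi$ with $\sigma_t^2 = 1-e^{-2t}$ and $\xi\sim\mathcal{N}(\vzero,\mI)$ recalled in Section~\ref{sec:app_probsetting}. Since Alg.~\ref{alg:ard_inner} enforces $q^\gets_t = q_{T-t}$ by construction of the reverse process, the first equality $\nabla\log q^\gets_t(\rvy^\gets_t)=\nabla\log q_{T-t}(\rvy_{T-t})$ is immediate and the task reduces to computing $\nabla \log q_s$ and $\nabla^2 \log q_s$ for a generic $s>0$, then substituting $s = T-t$.

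For the score, I would write $q_s(\vy) = \int q_0(\vy_0)\, q_{s|0}(\vy|\vy_0)\,\der \vy_0$ where $q_{s|0}(\cdot|\vy_0)$ is the density of $\mathcal{N}(e^{-s}\vy_0,\sigma_s^2\mI)$, so that $\nabla_{\vy}\log q_{s|0}(\vy|\vy_0) = -\sigma_s^{-2}(\vy-e^{-s}\vy_0)$. Interchanging the gradient and the integral and dividing by $q_s(\vy)$ gives
\begin{equation*}
\nabla\log q_s(\vy) \;=\; \E_{q_{0|s}(\cdot|\vy)}\!\bigl[-\sigma_s^{-2}(\vy - e^{-s}\rvy_0)\bigr]
\;=\; -\sigma_s^{-2}\vy + e^{-s}\sigma_s^{-2}\vmu_s,
\end{equation*}
which yields the claimed formula after setting $s=T-t$ and $\vmu_{T-t}=\E_{q_{0|T-t}}[\rvy_0]$ (matching the paper's expression up to the sign convention used for the score on the reverse trajectory).

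For the Hessian, I would differentiate the score identity once more in $\vy$:
\begin{equation*}
\nabla^2 \log q_s(\vy) \;=\; -\sigma_s^{-2}\mI \;+\; e^{-s}\sigma_s^{-2}\,\nabla_{\vy}\vmu_s(\vy).
\end{equation*}
The key step is to evaluate $\nabla_{\vy}\vmu_s(\vy)$. I would use the log-derivative trick on the posterior, $\nabla_\vy q_{0|s}(\vy_0|\vy) = q_{0|s}(\vy_0|\vy)[\nabla_\vy \log q_{s|0}(\vy|\vy_0)-\nabla_\vy\log q_s(\vy)]$, to obtain the clean identity $\nabla_\vy \vmu_s(\vy) = \tfrac{e^{-s}}{\sigma_s^2}\mSigma_s$, where $\mSigma_s = \Cov_{q_{0|s}}(\rvy_0)$. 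Plugging this back gives
\begin{equation*}
\nabla^2 \log q_s(\vy) \;=\; -\sigma_s^{-2}\mI + e^{-2s}\sigma_s^{-4}\mSigma_s,
\end{equation*}
and substituting $s=T-t$ finishes the proof.

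The only mild obstacle is the Jacobian computation $\nabla_\vy \vmu_s(\vy) = \tfrac{e^{-s}}{\sigma_s^2}\mSigma_s$; one must be careful to cancel the cross terms that appear after applying the log-derivative trick (specifically, the $\vy\vmu_s^\top$ and $\vmu_s\vy^\top$ contributions generated by $\nabla_\vy\log q_s$ combine with $\E[\rvy_0\rvy_0^\top|\rvy_s]$ to leave exactly the posterior covariance). Once that identity is established, everything else is routine substitution, and the statement holds for all $t>0$ since $\sigma_{T-t}>0$ whenever $T-t>0$, which is guaranteed by our construction $t_R=T$ with Alg.~\ref{alg:ard_inner} stopping the reverse process strictly before $t=T$.
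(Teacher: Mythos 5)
Your proof is correct and is essentially the same Tweedie-formula computation the paper uses for the score. For the Hessian, however, you take a genuinely different (and arguably cleaner) route: the paper differentiates $\nabla \log q_t = \nabla q_t / q_t$ directly by the quotient rule, producing three integral terms (a conditional second-derivative term, a conditional outer-product term, and a subtracted product of conditional means) that it then recombines into the posterior covariance; you instead differentiate the already-derived identity $\nabla\log q_s(\vy) = -\sigma_s^{-2}\vy + e^{-s}\sigma_s^{-2}\vmu_s(\vy)$ once more in $\vy$, reducing the whole problem to the single Jacobian $\nabla_\vy\vmu_s(\vy)$, which you evaluate via the log-derivative trick on the posterior. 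Your Jacobian identity is right: $\nabla_\vy\log q_{0|s}(\vy_0|\vy)=\sigma_s^{-2}e^{-s}(\vy_0-\vmu_s)$, so $\nabla_\vy\vmu_s = \sigma_s^{-2}e^{-s}\,\E_{q_{0|s}}\!\bigl[\rvy_0(\rvy_0-\vmu_s)^\top\bigr]=\sigma_s^{-2}e^{-s}\mSigma_s$, and the cross terms cancel exactly because $\E_{q_{0|s}}[\rvy_0-\vmu_s]=0$. The paper's route is more self-contained (it never introduces $\nabla_\vy\vmu_s$ as an intermediate object), while yours isolates the denoising posterior mean, which some readers find more illuminating. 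One small observation worth retaining: your derivation gives the score as $-\sigma_{T-t}^{-2}\rvy^\gets_t + e^{t-T}\sigma_{T-t}^{-2}\vmu_{T-t}$, which agrees with the paper's own computation in the proof body but has the opposite sign of what is printed in the lemma statement; you correctly flagged this discrepancy, and it appears to be a typo in the stated lemma rather than an error in either derivation.
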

\begin{proof}
For the forward process \ref{sde:ideal_condi_forward}, we have
\begin{align*}
    \nabla \ln q_{t}({\rvy}_{t})  &= \frac{1}{{q}_{t}({\rvy}_{t})} \int \nabla \log {q}_{t|0}({\rvy}_{t}|{\rvy}_{0}) {q}_{0,t}({\rvy}_{0},{\rvy}_{t}) d{\rvy}_{0}\\
    &=-\E_{q_{0|t}}[\sigma_t^{-2}(\rvy_t - \alpha_{0,t}\rvy_0)]\\
    &=-\sigma_t^{-2}\rvy_t + e^{-t}\sigma_t^{-2}\boldsymbol{\mu}_t
\end{align*}
where the last equality holds by the forward process $q_{t|0}(\rvy_t|\rvy_0) = \mathcal{N}(\rvy_t;\alpha_{0,t}\rvy_0, {\sigma}_t^2 I)$ and $\nabla \log q_{t|0}(\rvy_t|\rvy_0) = -\sigma_t^{-2}(\rvy_t - \alpha_{0,t}\rvy_0)$.

Then, the second-order derivative is
  \begin{align*}
    &\nabla^2 \log q_t(\rvy_t) \\
    &= \frac{1}{q_t(\rvy_t)}\int \nabla^2 \log q_{t|0}(\rvy_t|\rvy_0)q_{0,t}(\rvy_0,\rvy_t) d\rvy_0 + \frac{1}{q_t(\rvy_t)}\int (\nabla \log q_{t|0}(\rvy_t|\rvy_0)(\nabla \log q_{t|0}(\rvy_t|\rvy_0))^\top q_{0,t}(\rvy_0,\rvy_t) d\rvy_0 \\
    &- \frac{1}{q_t(\rvy_t)^2}\left(\int \nabla \log q_{t|0}(\rvy_t|\rvy_0) q_{0,t}(\rvy_0,\rvy_t) d\rvy_0 \right)\left(\int \nabla \log q_{t|0}(\rvy_t|\rvy_0) q_{0,t}(\rvy_0,\rvy_t) d\rvy_0 \right)^\top \\
    &= -\frac{1}{\sigma_t^2} I + \mathbb{E}_{q_{0|t}(\cdot|\rvy_t)}[\sigma_t^{-4}(\rvy_t - \rvy_0 e^{-t})(\rvy_t - \rvy_0e^{-t})^\top] \\
    &- \mathbb{E}_{q_{0|t}(\cdot|\rvy_t)}[-\sigma_t^{-2}(\rvy_t - \rvy_0e^{-t})] \mathbb{E}_{q_{0|t}(\cdot|\rvy_t)}[-\sigma_t^{-2}(\rvy_t - \rvy_0e^{-t})]^\top \\
    &= -\sigma_t^{-2}I + \sigma_t^{-4} \text{Cov}_{q_{0|t}(\cdot|\rvy_t)}(\rvy_t - \rvy_0e^{-t})\\
    &= -\sigma_t^{-2}I + e^{-2t} \sigma_t^{-4} \mathbf{\Sigma}_t.
    \end{align*}
Therefore,
\begin{align*}
      &\nabla \log q^\gets_{t}(\rvy^\gets_t) = \nabla \log q_{T-t}(\rvy_{T-t}) = \sigma_{T-t}^{-2}\tilde{\rvy}_t - e^{t-T}\sigma_{T-t}^{-2}\boldsymbol{\mu}_{T-t},\\
      &\nabla^2 \log q^\gets_{t}(\rvy^\gets_t) = \nabla^2 \log q_{T-t}(\rvy_{T-t})  =  -\sigma_{T-t}^{-2}I + e^{-2(T-t)} \sigma_{T-t}^{-4} \mathbf{\Sigma}_{T-t}.
\end{align*}
\end{proof}

\begin{lemma}
\label{lemma3.6}
Define
\begin{align*}
      \frac{\der E_{s,t}}{\der t}  \le \underbrace{\sigma^{-2}_{T-s}d + 2\sigma^{-4}_{T-t}d }_{E^{(1)}_{s,t}}\underbrace{ -\frac{\der}{\der r} (\sigma^{-4}_{T-r}\E_{\tilde{Q}}[\text{Tr}(\mathbf{\Sigma}_{T-r})])|_{r=t}}_{E^{(2)}_{s,t}}.
\end{align*}
The error terms $E^{(1)}_{s,t}$ and $E^{(2)}_{s,t}$ satisfies
\begin{align*}
    \sum^{N-1}_{k=M} \int^{t_{r+1}}_{t_r} \int^t_{t_r} E^{(1)}_{t_r,r} \der r \der t \lesssim d N\eta^2,
\end{align*}
\begin{align*}
    \sum^{N-1}_{r=0} \int^{t_{r+1}}_{t_r} \int^t_{t_r} E^{(2)}_{t_r,r} \der r \der t& \lesssim \eta \E_{p_{*,k+1|[1:k]}(\cdot|\vx_{[1:k]})}\left[\|\rvy\|^2\right]+ d N\eta^2
\end{align*}
\end{lemma}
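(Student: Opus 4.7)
The plan is to handle the two summations separately, using Fubini to reorder each double integral, the step size rule $\eta_r \le \eta\min(1, T - t_{r+1})$, and the elementary bound $\sigma_u^2 = 1 - e^{-2u} \gtrsim \min(u, 1)$. First I would work out the easier $E^{(1)}$ contribution, then apply a telescoping argument to $E^{(2)}$.

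\textbf{The $E^{(1)}$ sum.} Applying Fubini,
$\int_{t_r}^{t_{r+1}}\!\int_{t_r}^{t} E^{(1)}_{t_r, r'}\,\der r'\,\der t = \int_{t_r}^{t_{r+1}} (t_{r+1} - r')\bigl(\sigma_{T-t_r}^{-2} + 2\sigma_{T-r'}^{-4}\bigr) d\,\der r' \le \eta_r^2\, d\,\bigl(\sigma_{T-t_r}^{-2} + 2\sigma_{T-t_{r+1}}^{-4}\bigr)$,
using monotonicity of $u\mapsto \sigma_{T-u}^{-4}$. The step size rule then yields $\eta_r^2 \sigma_{T-t_r}^{-2} \lesssim \eta^2$ and $\eta_r^2 \sigma_{T-t_{r+1}}^{-4} \lesssim \eta^2$ uniformly in $r$: when $T-t_{r+1}\ge 1$ I use $\eta_r\le \eta$ and $\sigma_{T-t_{r+1}}^{-4} = O(1)$; when $T-t_{r+1}\le 1$ I use $\eta_r \le \eta(T-t_{r+1})$ against $\sigma_{T-t_{r+1}}^{-4}\lesssim (T-t_{r+1})^{-2}$. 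Summing over $r=0,\ldots,N-1$ produces the claimed $d N\eta^2$.

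\textbf{The $E^{(2)}$ sum.} Writing $g(u):= \sigma_{T-u}^{-4}\,\E_{\tilde{Q}}[\Tr(\mathbf{\Sigma}_{T-u})]$ gives $E^{(2)}_{s,t}=-g'(t)$ with no $s$-dependence, so the inner integral telescopes to $g(t_r) - g(t)$, and the double integral equals $\int_{t_r}^{t_{r+1}} (g(t_r) - g(t))\,\der t$. Invoking Lemma~\ref{lemma:4.5} (which implies that $g$ is essentially non-increasing, by comparing the decay of $\E[\Tr(\mathbf{\Sigma}_t)]$ against the growth of $\sigma_{T-u}^{-4}$), I bound this by $\eta_r\bigl(g(t_r) - g(t_{r+1})\bigr)$. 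Then by $\eta_r\le \eta$,
$\sum_{r=0}^{N-1}\eta_r\bigl(g(t_r) - g(t_{r+1})\bigr) \le \eta\bigl(g(t_0) - g(t_N)\bigr) \le \eta\,\sigma_T^{-4}\,\E_{\tilde{Q}}[\Tr(\mathbf{\Sigma}_T)] \lesssim \eta\,\E_{p_{*,k+1|[1:k]}(\cdot|\vx_{[1:k]})}[\|\rvy\|^2]$,
using $T\gtrsim 1 \Rightarrow \sigma_T^{-4} = O(1)$ together with $\Tr(\mathbf{\Sigma}_T)\le \|\rvy_0\|^2$. The residual $d N\eta^2$ term absorbs the left Riemann-sum gap $\eta_r g(t_r) - \int_{t_r}^{t_{r+1}} g$, which is second-order in $\eta_r$ and estimated with the same step-size tricks used in the $E^{(1)}$ part.

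\textbf{Main obstacle.} The delicate step is the monotonicity (or at least total-variation control) of $g$, which requires combining the Fokker--Planck decay of $\E[\Tr(\mathbf{\Sigma}_t)]$ from Lemma~\ref{lemma:4.5} with the direct derivative of $\sigma_{T-u}^{-4}$ via the product rule $g'=(\sigma_{T-\cdot}^{-4})'\E[\Tr(\mathbf{\Sigma}_{T-\cdot})] + \sigma_{T-\cdot}^{-4}(\E[\Tr(\mathbf{\Sigma}_{T-\cdot})])'$. If strict monotonicity fails, I would bound each product-rule piece separately (the first by the total decay of $\E[\Tr(\mathbf{\Sigma}_t)]$, the second using $\sigma_u^2\gtrsim \min(u,1)$ together with the step schedule) and re-assemble with the triangle inequality. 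In addition, the three-regime schedule $[0,T-1]$, $(T-1,T-\delta]$, $(T-\delta,T]$ must be shown to satisfy the step-size rule $\eta_r\le \eta\min(1,T-t_{r+1})$ uniformly; this is the same computation already carried out for Term~III in Lemma~\ref{lem:discretization_error_each_k}.
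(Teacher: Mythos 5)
Your $E^{(1)}$ argument is correct and matches the paper's in substance. The gap is in the $E^{(2)}$ part, specifically the monotonicity claim and what follows from it.

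You invoke Lemma~\ref{lemma:4.5} to conclude that $g(t) := \sigma^{-4}_{T-t}\,\E_{\tilde{Q}}[\Tr(\mathbf{\Sigma}_{T-t})]$ is ``essentially non-increasing.'' That lemma only controls the derivative of $\E[\Tr(\mathbf{\Sigma}_{T-t})]$, which is indeed non-increasing in $t$. But $\sigma^{-4}_{T-t}$ is strictly increasing in $t$ and blows up as $t \to T$, so the product $g$ has no sign-definite derivative. Writing out $g'$, it changes sign unless one has control like $\E[\Tr(\mathbf{\Sigma}_u^2)] \le 2\sigma_u^2\,\E[\Tr(\mathbf{\Sigma}_u)]$ for all $u$ in the range, which does not follow from the assumptions (the posterior bound $\mathbf{\Sigma}_u \preceq e^{2u}\sigma_u^2\mathrm{I}$ allows the opposite inequality once $e^{2u} > 2$). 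The steps $\int_{t_r}^{t_{r+1}} (g(t_r) - g(t))\,\der t \le \eta_r\bigl(g(t_r) - g(t_{r+1})\bigr)$ and $\sum_r \eta_r\bigl(g(t_r) - g(t_{r+1})\bigr) \le \eta\bigl(g(t_0)-g(t_N)\bigr)$ both require every increment $g(t_r)-g(t_{r+1})$ to be non-negative, so the telescoping collapse you rely on is unjustified.

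The paper sidesteps the monotonicity of $g$ entirely. Instead of bounding $g(t_r)-g(t)$ by the ``telescoped'' $g$, it freezes the $\sigma^{-4}$ factor and uses the monotonicity of each factor separately: since $\sigma^{-4}_{T-t} \ge \sigma^{-4}_{T-t_r}$ and $\E[\Tr(\mathbf{\Sigma}_{T-t})] \ge \E[\Tr(\mathbf{\Sigma}_{T-t_{r+1}})]$, one gets $\int_{t_r}^{t_{r+1}}(g(t_r)-g(t))\,\der t \le \eta_r\,\sigma^{-4}_{T-t_r}\bigl(\E[\Tr(\mathbf{\Sigma}_{T-t_r})] - \E[\Tr(\mathbf{\Sigma}_{T-t_{r+1}})]\bigr)$, where the increment is now genuinely non-negative. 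This forces the sum $\sum_r \eta_r\,\sigma^{-4}_{T-t_r}\bigl(\ldots\bigr)$, which does \emph{not} telescope because of the variable $\sigma^{-4}_{T-t_r}$ weight. On $[0,T-1]$ the weight is $O(\eta)$ so the sum telescopes to $\lesssim \eta\,\E[\|\rvy\|^2]$; on $(T-1,T-\delta]$ the weight scales as $\eta/(T-t_r)$, and the paper resorts to Abel summation together with the linear bound $\E[\Tr(\mathbf{\Sigma}_{T-t_r})] \lesssim d(T-t_r)$ (Eq.~\ref{eq:e2_2_1}) to cancel the $(T-t_r)^{-1}$ blowup and absorb the residue into $dN\eta^2$. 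Your ``main obstacle'' paragraph gestures at splitting by the product rule, which is in the right spirit (dropping the negative piece $-(\sigma^{-4})'\E[\Tr\mathbf{\Sigma}]$ recovers roughly the paper's step (a) with $\sigma^{-4}_{T-t_{r+1}}$ in place of $\sigma^{-4}_{T-t_r}$), but you would still need the Abel summation and the $\E[\Tr(\mathbf{\Sigma}_u)] \lesssim du$ estimate for the late interval, and neither is identified.
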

\begin{proof}
We first define
\begin{align*}
      \frac{\der E_{s,t}}{\der t}  \le \underbrace{\sigma^{-2}_{T-s}d + 2\sigma^{-4}_{T-t}d }_{E^{(1)}_{s,t}}\underbrace{ -\frac{\der}{\der r} (\sigma^{-4}_{T-r}\E_{\tilde{Q}}[\text{Tr}(\mathbf{\Sigma}_{T-r})])|_{r=t}}_{E^{(2)}_{s,t}}
\end{align*}
For $E^{(1)}_{s,t}$, consider the time interval $s,t \in [0,T-1]$ and $\sigma^2_{T-s} \ge \sigma^2_{T-t} = 1-e^{-2(T-t)} \ge \frac{4}{5}$, we derive
\begin{align*}
    \sum^{M-1}_{k=0} \int^{t_{r+1}}_{t_r} \int^t_{t_r} E^{(1)}_{t_r,r} \der r \der t \lesssim d\sum^{k=M}_{k=0}h^2_k \lesssim \eta d T
\end{align*}
Consider the time interval $(T-1,T-\delta]$, $\frac{1}{5} (T-t)\le \sigma^2_{T-t} \le \sigma^2_{T-s} \le 2(T-s)$,  we derive 
\begin{align*}
    &\sum^{N-1}_{k=M} \int^{t_{r+1}}_{t_r} \int^t_{t_r} E^{(1)}_{t_r,r} \der r \der t \\
    &\lesssim  d \sum^{N-1}_{k=M}\int^{t_{r+1}}_{t_r} \int^t_{t_r} (T-r)^{-2} \der r \\
    &\lesssim  d \sum^{N-1}_{k=M} \frac{h^2_k}{(T-t_{r+1})^2} \\
    &\lesssim d N\eta^2.
\end{align*}
The last inequality holds by the setting $\eta_r \le \eta \min(1,T-t_{r+1})$.

For $E^{(2)}_{s,t}$, we first have
\begin{align*}
    \sum^{N-1}_{r=0} \int^{t_{r+1}}_{t_r} \int^t_{t_r} E^{(2)}_{t_r,r} \der r \der t& = \sum^{N-1}_{r=0} \int^{t_{r+1}}_{t_r} \Big(\sigma^{-4}_{T-t_r}\E_{Q^\gets}[\text{Tr}(\mathbf{\Sigma}_{T-t_r})] - \sigma^{-4}_{T-t}\E_{Q^\gets}[\text{Tr}(\mathbf{\Sigma}_{T-t})]\Big)\der t \\
    &\overset{(a)}{\le} \sum^{N-1}_{r=0} \eta_r \sigma^{-4}_{T-t_r}\Big(\E_{Q^\gets}[\text{Tr}(\mathbf{\Sigma}_{T-t_r})] - \E_{Q^\gets}[\text{Tr}(\mathbf{\Sigma}_{T-t_{r+1}})]\Big)
\end{align*}
Step (a) holds by the fact that $\sigma^{-4}_{T-t}$ is increasing with $t$ and $\E_{Q^\gets}[\text{Tr}(\mathbf{\Sigma}_{T-t})]$ is decreasing with $t$ ($\frac{\der}{\der t} \mathbb{E}_{\tilde{Q}}[\text{Tr}(\mathbf{\Sigma}_{T-t})] \le 0$ inferred from Lemma \ref{lemma:4.5}).

Then, consider the time interval $[0,T-1]$ and $\eta_r \sigma^{-4}_{T-t_r} \le \frac{25}{16} \eta_r \le \frac{25}{16} \eta$, thus
\begin{align*}
    &\sum^{M-1}_{k=0} \eta_r \sigma^{-4}_{T-t_r}\Big(\E_{Q^\gets}[\text{Tr}(\mathbf{\Sigma}_{T-t_r})] - \E_{Q^\gets}[\text{Tr}(\mathbf{\Sigma}_{T-t_{r+1}})]\Big) \\
    &\le  \frac{25}{16} \eta \sum^{M-1}_{k=0} \Big(\E_{Q^\gets}[\text{Tr}(\mathbf{\Sigma}_{T-t_r})] - \E_{Q^\gets}[\text{Tr}(\mathbf{\Sigma}_{T-t_{r+1}})]\Big)\\
    & \le \frac{25}{16} \eta \E_{Q^\gets}[\text{Tr}(\mathbf{\Sigma}_{T})]\\
    & \lesssim \eta \E_{\rvy \sim q_0}\left[\|\rvy\|^2\right].
\end{align*}
The last inequality holds by $\E_{Q^\gets}[\text{Tr}(\mathbf{\Sigma}_{T})] \le \E_{Q^\gets}[\E[\|\rvy_0\|^2|\rvy_t]] \le \E_{\rvy \sim q_0}\left[\|\rvy\|^2\right]$.

Similarly, consider the time interval $(T-1,T-\delta]$,  $\eta_r \sigma^{-4}_{T-t_r} \le \frac{25}{(T-t_r)^2} \eta_r \le \frac{25}{(T-t_r)} \eta$, then
\begin{align}
\label{eq:e2_2}
    &\sum^{N-1}_{k=M} \eta_r \sigma^{-4}_{T-t_r}\Big(\E_{Q^\gets}[\text{Tr}(\mathbf{\Sigma}_{T-t_r})] - \E_{Q^\gets}[\text{Tr}(\mathbf{\Sigma}_{T-t_{r+1}})]\Big) \\
    & \le  \frac{25}{16}\eta\sum^{N-1}_{k=M} \frac{1}{T-t_r}\Big(\E_{Q^\gets}[\text{Tr}(\mathbf{\Sigma}_{T-t_r})] - \E_{Q^\gets}[\text{Tr}(\mathbf{\Sigma}_{T-t_{r+1}})]\Big)\notag\\
    & \le \frac{25}{16} \eta \E_{Q^\gets}[\text{Tr}(\mathbf{\Sigma}_{1})] + \frac{25}{16} \eta \sum^{N-1}_{k=M+1} \frac{\eta_{r-1}}{(T-t_r)(T-t_{k-1})}\E_{Q^\gets}[\text{Tr}(\mathbf{\Sigma}_{T-t_r})]\notag\\
    &\le \frac{25}{16} \eta \E_{Q^\gets}[\text{Tr}(\mathbf{\Sigma}_{1})] + \frac{25}{16} \eta^2 \sum^{N-1}_{k=M+1} \frac{1}{T-t_{k-1}}\E_{Q^\gets}[\text{Tr}(\mathbf{\Sigma}_{T-t_r})]
\end{align}
For the interval  $t \in (T-1,T-\delta]$, we have
\begin{align}
\label{eq:e2_2_1}
    \E_{Q^\gets}[\text{Tr}(\mathbf{\Sigma}_{T-t})] &=\E_{Q^\gets}[\text{Tr}(\text{Cov}(e^{T-t}\rvy_{T-t}-\rvy_0|\rvy_{T-t}))] \notag\\
    & = e^{2(T-t)}\E_{Q^\gets}[\text{Tr}(\text{Cov}(\rvy_{T-t}-e^{t-T}\rvy_0|\rvy_{T-t}))]\notag\\
    & \le e^{2(T-t)} \E_{Q^\gets}[\E[\|\rvy_{T-t}-e^{t-T}\rvy_0\|^2|\rvy_{T-t}]]\notag\\
    & \le d (e^{2(T-t)} - 1)\notag\\
    & \le 2de^2(T-t)
\end{align}
Then, plugging Eq.~\ref{eq:e2_2_1} into Eq.~\ref{eq:e2_2}, we have
\begin{align*}
     &\sum^{N-1}_{k=M} \eta_r \sigma^{-4}_{T-t_r}\Big(\E_{Q^\gets}[\text{Tr}(\mathbf{\Sigma}_{T-t_r})] - \E_{Q^\gets}[\text{Tr}(\mathbf{\Sigma}_{T-t_{r+1}})]\Big) \\
     &\le \frac{25}{16}\eta \E_{p_{*,k+1|[1:k]}(\cdot|\vx_{[1:k]})}\left[\|\rvy\|^2\right]+ \frac{25de^2}{8} \eta^2 \sum^{N-1}_{k=M+1} \frac{T-t_r}{T-t_{k-1}}\\
     &\lesssim \eta \E_{p_{*,k+1|[1:k]}(\cdot|\vx_{[1:k]})}\left[\|\rvy\|^2\right]+ d N\eta^2
\end{align*}
Therefore, we obtain
\begin{align*}
    &\sum^{N-1}_{r=0} \int^{t_{r+1}}_{t_r} \int^t_{t_r} E^{(2)}_{t_r,r} \der r \der t \lesssim \eta \E_{p_{*,k+1|[1:k]}(\cdot|\vx_{[1:k]})}\left[\|\rvy\|^2\right]+ d N\eta^2
\end{align*}
\end{proof}

\section{Lower bound for vanilla diffusion models}
\begin{proof}[Proof of~\ref{lem:lower_bound}]
Denote $d_{\vx} = d_1+d_2+\dots+d_k$, and $d_{\rvy} = d_{k+1}$, and denote $d=d_{\vx}+d_{\rvy}$.
Choose $p_*(\rvy,\vx)$ to be the density function of $\mathcal{N}\left(0, \left(\begin{array}{cc}
   \mathrm{I}_{d_{\rvy}\times d_{\rvy}}  & \frac{\epsilon}{d M} \bar{\mathrm{I}}_{d_{\rvy}\times d_{\vx}} \\
   \frac{\epsilon}{d M} \bar{\mathrm{I}}_{d_{\vx} \times d_{\rvy}}  & 2 \left( \frac{\epsilon^2}{d^2 M^2} + \frac{\epsilon^3}{d^2 M^2} \right) \mathrm{I}_{d_{\vx}\times d_{\vx}}
\end{array}\right)\right)$, and $\hat{p}_*(\rvy,\vx)$ the density function of $\mathcal{N}\left(0, \left(\begin{array}{cc}
   \mathrm{I}_{d_{\rvy}\times d_{\rvy}}  & \frac{\epsilon}{d M} \bar{\mathrm{I}}_{d_{\rvy}\times d_{\vx}} \\
   \frac{\epsilon}{d M} \bar{\mathrm{I}}_{d_{\vx} \times d_{\rvy}}  & 2 \frac{\epsilon^2}{d^2 M^2} \mathrm{I}_{d_{\vx}\times d_{\vx}}
\end{array}\right)\right)$.

$\hat{p}_*(\rvy|\vx)$ corresponds to $\mathcal{N}\left( \frac{d M}{2 \epsilon} \bar{\mathrm{I}}_{d_{\rvy}\times d_{\vx}} \vx, \frac{1}{2}\mathrm{I}_{d_{\rvy}\times d_{\rvy}} \right)$, and ${p}_*(\rvy|\vx)$ corresponds to $\mathcal{N}\left( \frac{d M}{2 \epsilon} \frac{1}{1+\epsilon} \bar{\mathrm{I}}_{d_{\rvy}\times d_{\vx}} \vx, \frac{1}{2} \left( 1 + \frac{\epsilon}{1+\epsilon} \right) \mathrm{I}_{d_{\rvy}\times d_{\rvy}} \right)$.
Hence $\KL{{p}_*(\rvy|\vx)}{\hat{p}_*(\rvy|\vx)} > \left( \frac{M}{1/d+\epsilon/d} \right)^2 \left\| \bar{\mathrm{I}}_{d_{\rvy}\times d_{\vx}} \vx \right\|^2
= \left( \frac{M}{1/d+\epsilon/d} \right)^2 \left\| \vx_{(1:d_{k+1})} \right\|^2$.

On the other hand, 
\begin{align*}
&\left(
\begin{array}{cc}
   \mathrm{I}_{d_{\rvy}\times d_{\rvy}}  & \frac{\epsilon}{d M} \bar{\mathrm{I}}_{d_{\rvy}\times d_{\vx}} \\
   \frac{\epsilon}{d M} \bar{\mathrm{I}}_{d_{\vx} \times d_{\rvy}}  & 2 \frac{\epsilon^2}{d^2 M^2} \mathrm{I}_{d_{\vx}\times d_{\vx}}
\end{array}\right)^{-1} \\
&= \frac{d^2 M^2}{\epsilon^2}
\left(\begin{array}{cc}
   2 \frac{\epsilon^2}{d^2 M^2} \mathrm{I}_{d_{\rvy}\times d_{\rvy}}  & - \frac{\epsilon}{d M} \bar{\mathrm{I}}_{d_{\rvy}\times d_{\vx}} \\
   - \frac{\epsilon}{d M} \bar{\mathrm{I}}_{d_{\vx} \times d_{\rvy}}  & \frac{1}{2} \mathrm{I}_{d_{\vx}\times d_{\vx}} + \frac{1}{2} \bar{\mathrm{I}}_{d_{\vx}\times d_{\rvy}} \bar{\mathrm{I}}_{d_{\rvy}\times d_{\vx}}
\end{array}\right)
\end{align*}
Hence $\KL{p_*(\rvy,\vx)}{\hat{p}_*(\rvy,\vx)}
\leq \epsilon(d_x + d_y) = \epsilon d$.

Choosing $\varepsilon = \epsilon d$, and noting that $d\geq 2$ and that $\varepsilon \leq 1/2$ finishes the proof.
\end{proof}

\section{Auxiliary Lemmas}
\label{sec:app_aux_lem}

\begin{lemma}[Variant of Lemma 10 in~\cite{cheng2018convergence}]
    \label{lem:strongly_lsi}
    Suppose $-\log p_*$ is $m$-strongly convex function, for any distribution with density function $p$, we have
    \begin{equation*}
        \KL{p}{p_*}\le \frac{1}{2m}\int p(\vx)\left\|\grad\log \frac{p(\vx)}{p_*(\vx)}\right\|^2\der\vx.
    \end{equation*}
    By choosing $p(\vx)=g^2(\vx)p_*(\vx)/\mathbb{E}_{p_*}\left[g^2(\rvx)\right]$ for the test function $g\colon \R^d\rightarrow \R$ and  $\mathbb{E}_{p_*}\left[g^2(\rvx)\right]<\infty$, we have
    \begin{equation*}
        \mathbb{E}_{p_*}\left[g^2\log g^2\right] - \mathbb{E}_{p_*}\left[g^2\right]\log \mathbb{E}_{p_*}\left[g^2\right]\le \frac{2}{m} \mathbb{E}_{p_*}\left[\left\|\grad g\right\|^2\right],
    \end{equation*}
    which implies $p_*$ satisfies $m$-log-Sobolev inequality.
\end{lemma}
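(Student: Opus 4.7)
The plan is to establish the relative-entropy form of the log-Sobolev inequality first---namely $\KL{p}{p_*}\le \tfrac{1}{2m}\,I(p\|p_*)$ where $I(p\|p_*)\coloneqq \int p\,\|\grad\ln(p/p_*)\|^2\,\der\vx$ is the Fisher information---and then derive the classical ``test function'' version by a purely algebraic substitution. The first part is the Bakry--Émery theorem for strongly log-concave measures, while the second is bookkeeping.

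For the first inequality I would use semigroup interpolation. Writing $p_*\propto e^{-f}$ with $\grad^2 f \succeq m\mI$ by the strong-convexity hypothesis, let $(P_t)_{t\ge 0}$ be the Langevin semigroup with generator $\mathcal{L}\phi = -\langle \grad f,\grad\phi\rangle + \Delta\phi$, which is reversible with respect to $p_*$, and let $p_t$ denote the law at time $t$ of the diffusion initialized from $p$. The de Bruijn identity reads $\tfrac{\der}{\der t}\KL{p_t}{p_*} = -I(p_t\|p_*)$, and the curvature bound (equivalently the $\mathrm{CD}(m,\infty)$ condition) combined with Bochner's identity $\Gamma_2(\phi)\ge m\,\Gamma(\phi)$ upgrades this to the Fisher-information contraction $\tfrac{\der}{\der t} I(p_t\|p_*)\le -2m\,I(p_t\|p_*)$. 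Integrating yields $I(p_t\|p_*)\le e^{-2mt}\,I(p\|p_*)$, and since $\KL{p_\infty}{p_*}=0$, integrating the de Bruijn identity from $0$ to $\infty$ produces $\KL{p}{p_*} = \int_0^\infty I(p_t\|p_*)\,\der t \le \tfrac{1}{2m}\,I(p\|p_*)$, as desired.

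For the second inequality I would substitute $p(\vx) = g^2(\vx)\,p_*(\vx)/Z$ with $Z \coloneqq \mathbb{E}_{p_*}[g^2(\rvx)]$ into the first. A direct computation gives $\KL{p}{p_*} = Z^{-1}\bigl(\mathbb{E}_{p_*}[g^2\ln g^2] - Z\ln Z\bigr)$, while $\grad\ln(p/p_*) = 2\,\grad g/g$ yields $I(p\|p_*) = (4/Z)\,\mathbb{E}_{p_*}[\|\grad g\|^2]$. Plugging both into $\KL{p}{p_*}\le \tfrac{1}{2m}\,I(p\|p_*)$ and multiplying through by $Z$ produces exactly the stated log-Sobolev inequality.

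The main obstacle will be the Fisher-information contraction, i.e.\ deducing $\tfrac{\der}{\der t} I(p_t\|p_*) \le -2m\,I(p_t\|p_*)$ from $\grad^2 f\succeq m\mI$; this is the classical Bakry--Émery $\Gamma_2$ computation, and an alternative route goes via the Otto--Villani HWI inequality followed by completion of the square. A secondary and essentially cosmetic technicality is regularity: the semigroup manipulations require sufficient smoothness and decay of $p$, but this is handled by approximating $p$ with a mollified density, proving the inequality there, and passing to the limit using the lower semi-continuity of relative entropy and Fisher information under weak convergence. Neither step represents a genuine obstruction at the stated generality.
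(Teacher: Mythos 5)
The paper does not supply a proof of this lemma; it is stated as a variant of Lemma~10 in \cite{cheng2018convergence} and used as an imported fact. Your argument is the standard and correct one: the Bakry--\'Emery semigroup interpolation proves $\KL{p}{p_*}\le\tfrac{1}{2m}I(p\|p_*)$ from $\nabla^2(-\log p_*)\succeq m\mI$ via the de Bruijn identity and the exponential decay $\tfrac{\der}{\der t}I(p_t\|p_*)\le -2m\,I(p_t\|p_*)$ coming from $\Gamma_2\ge m\Gamma$, and the second display is the purely algebraic substitution $p=g^2 p_*/\mathbb{E}_{p_*}[g^2]$, giving $\KL{p}{p_*}=Z^{-1}(\mathbb{E}_{p_*}[g^2\ln g^2]-Z\ln Z)$ and $I(p\|p_*)=4Z^{-1}\mathbb{E}_{p_*}[\|\grad g\|^2]$ as you compute. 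That substitution is in fact spelled out in the lemma statement itself, so the only substantive content you are proving is the first inequality, which is exactly the classical route the cited source takes. No gap.
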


\begin{lemma}[Lemma 11 in~\cite{vempala2019rapid}]
    \label{lem:exp_score_bound}
    Suppose a density function $p\propto\exp(-f)$ and $\|\grad^2 f\|\le L$. 
    Then, it has
    \begin{equation*}
        \E_{\rvx\sim p}\left[\left\|\grad f(\rvx)\right\|^2\right]\le Ld
    \end{equation*}
    where $d$ is the dimension number of $\vx$.
\end{lemma}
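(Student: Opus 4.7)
The plan is to use integration by parts (equivalently, Stein's identity for $p \propto e^{-f}$) to convert the expected squared gradient norm into an expected Laplacian of $f$, and then bound the Laplacian pointwise using the Hessian operator-norm assumption.

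First I would write out the key identity. Since $\nabla e^{-f} = -e^{-f}\,\nabla f$, we have the pointwise relation $e^{-f}\|\nabla f\|^{2} = -\langle \nabla f,\nabla e^{-f}\rangle$. Integrating against $d\vx$ and dividing by the normalizing constant $Z=\int e^{-f}\,d\vx$ gives
\begin{equation*}
\E_{\rvx\sim p}\!\left[\|\nabla f(\rvx)\|^{2}\right]
= -\frac{1}{Z}\int \langle \nabla f(\vx), \nabla e^{-f(\vx)}\rangle \,d\vx.
\end{equation*}
Applying integration by parts (the divergence theorem) to the right-hand side, and assuming the boundary contributions vanish because $e^{-f}$ is integrable, yields
\begin{equation*}
\E_{\rvx\sim p}\!\left[\|\nabla f(\rvx)\|^{2}\right]
= \frac{1}{Z}\int \Delta f(\vx)\, e^{-f(\vx)}\,d\vx
= \E_{\rvx\sim p}\!\left[\Delta f(\rvx)\right]
= \E_{\rvx\sim p}\!\left[\mathrm{tr}(\nabla^{2} f(\rvx))\right].
\end{equation*}

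The second step is the pointwise Hessian bound. Since the assumption $\|\nabla^{2} f\|\le L$ refers to the operator (spectral) norm, every eigenvalue $\lambda_i(\nabla^2 f(\vx))$ satisfies $|\lambda_i|\le L$, so
\begin{equation*}
\mathrm{tr}(\nabla^{2} f(\vx)) \;=\; \sum_{i=1}^{d}\lambda_i(\nabla^{2} f(\vx)) \;\le\; d L.
\end{equation*}
Taking expectations preserves this inequality, giving $\E_{p}[\|\nabla f\|^2]\le Ld$.

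The only subtle point, and what I view as the main obstacle, is justifying that the boundary term in the integration by parts vanishes. This is standard but relies on mild tail behavior of $p$: it suffices that $e^{-f(\vx)}\,\nabla f(\vx)\to 0$ sufficiently fast as $\|\vx\|\to\infty$, which follows from $\int e^{-f}\,d\vx<\infty$ combined with the smoothness of $f$ (the bounded Hessian ensures $\|\nabla f(\vx)\|$ grows at most linearly in $\|\vx\|$, while $e^{-f}$ decays fast enough to be integrable). Thus I would either invoke a standard regularity assumption consistent with the paper's setting or note that the identity holds for densities with bounded-Hessian energy and standard decay, and then the bound follows as above.
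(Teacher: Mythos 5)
Your proof is correct and follows the same standard argument used in the cited reference (Vempala and Wibisono, 2019, Lemma 11): integration by parts (Stein's identity) gives $\E_p[\|\grad f\|^2]=\E_p[\Delta f]$, and the operator-norm bound on the Hessian gives the pointwise bound $\Delta f = \mathrm{tr}(\grad^2 f)\le Ld$. The paper itself does not reproduce a proof of this lemma but simply cites it, and your careful remark about justifying the vanishing of the boundary term is the right technical point to be aware of; it holds under the stated hypotheses since integrability of $e^{-f}$ together with the at-most-linear growth of $\grad f$ (from $\|\grad^2 f\|\le L$) ensures the boundary flux vanishes.
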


\begin{lemma}[Lemma B.1 in~\cite{huang2024reverse}]
    \label{lem:tv_chain_rule}
    Consider four random variables, $\rvx, \rvz, \tilde{\rvx}, \tilde{\rvz}$, whose underlying distributions are denoted as $p_x, p_z, q_x, q_z$.
    Suppose $p_{x,z}$ and $q_{x,z}$ denotes the densities of joint distributions of $(\rvx,\rvz)$ and $(\tilde{\rvx},\tilde{\rvz})$, which we write in terms of the conditionals and marginals as
    \begin{equation*}
        \begin{aligned}
        &p_{x,z}(\vx,\vz) = p_{x|z}(\vx|\vz)\cdot p_z(\vz)=p_{z|x}(\vz|\vx)\cdot p_{x}(\vx)\\
        &q_{x,z}(\vx,\vz)=q_{x|z}(\vx|\vz)\cdot q_z(\vz) = q_{z|x}(\vz|\vx)\cdot q_x(\vx).
        \end{aligned}
    \end{equation*}
    then we have
    \begin{equation*}
        \begin{aligned}
            \TVD{p_{x,z}}{q_{x,z}} \le  \min & \left\{ \TVD{p_z}{q_z} + \E_{\rvz\sim p_z}\left[\TVD{p_{x|z}(\cdot|\rvz)}{q_{x|z}(\cdot|\rvz)}\right],\right.\\
            &\quad  \left.\TVD{p_x}{q_x}+\E_{\rvx \sim p_x}\left[\TVD{p_{z|x}(\cdot|\rvx)}{q_{z|x}(\cdot|\rvx)}\right]\right\}.
        \end{aligned}
    \end{equation*}
    Besides, we have
    \begin{equation*}
        \TVD{p_x}{q_x}\le \TVD{p_{x,z}}{q_{x,z}}.
    \end{equation*}
\end{lemma}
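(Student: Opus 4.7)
The plan is to prove the two inequalities separately, exploiting the $L^1$ representation of total variation: $\TVD{\mu}{\nu} = \tfrac12\int |\mu(\vw)-\nu(\vw)|\,\der\vw$. By the symmetry of the claim in the roles of $\rvx$ and $\rvz$, it suffices to establish the first bound (the one involving $\TVD{p_z}{q_z}$ and the conditional $p_{x|z}$ versus $q_{x|z}$); the second follows by swapping $\vx\leftrightarrow\vz$ and invoking the other factorization $p_{x,z}=p_{z|x}p_x$, $q_{x,z}=q_{z|x}q_x$.

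For the first bound, the key algebraic move is the add-and-subtract step using the factorization $p_{x,z}(\vx,\vz)=p_{x|z}(\vx|\vz)p_z(\vz)$ and $q_{x,z}(\vx,\vz)=q_{x|z}(\vx|\vz)q_z(\vz)$:
\begin{equation*}
    \begin{aligned}
        p_{x,z}(\vx,\vz)-q_{x,z}(\vx,\vz)
        &= p_z(\vz)\bigl(p_{x|z}(\vx|\vz)-q_{x|z}(\vx|\vz)\bigr) \\
        &\quad + q_{x|z}(\vx|\vz)\bigl(p_z(\vz)-q_z(\vz)\bigr).
    \end{aligned}
\end{equation*}
The choice to pair the difference in conditionals with the $p_z$ marginal (and the difference in $z$-marginals with the $q_{x|z}$ conditional) is exactly what produces the desired $\E_{\rvz\sim p_z}[\cdot]$ on the right-hand side. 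Taking absolute values, applying the triangle inequality, integrating first in $\vx$ (where $\int q_{x|z}(\vx|\vz)\,\der\vx=1$ absorbs the conditional factor in the second term and $\tfrac12\int |p_{x|z}-q_{x|z}|\,\der\vx$ becomes $\TVD{p_{x|z}(\cdot|\vz)}{q_{x|z}(\cdot|\vz)}$ in the first), and then integrating in $\vz$ produces the stated inequality after dividing by two.

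The marginal statement $\TVD{p_x}{q_x}\le \TVD{p_{x,z}}{q_{x,z}}$ is essentially the data-processing inequality specialized to marginalization. I will derive it directly by writing $p_x(\vx)-q_x(\vx)=\int (p_{x,z}(\vx,\vz)-q_{x,z}(\vx,\vz))\,\der\vz$, applying the triangle inequality inside the absolute value $|\int\cdot\,\der\vz|\le\int|\cdot|\,\der\vz$, and Fubini. There is no real obstacle here; the only place to be careful is in the add-and-subtract decomposition, because the wrong pairing would yield an $\E_{\rvz\sim q_z}$ expectation on the right-hand side, not the $\E_{\rvz\sim p_z}$ expectation that the statement asks for. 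Picking the decomposition above resolves this and the remainder is routine.
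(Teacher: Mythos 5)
The paper does not give its own proof of this lemma: it is cited verbatim as Lemma~B.1 of \citet{huang2024reverse} and used as a black box, so there is nothing in the present manuscript to compare against line by line. That said, your proof is correct, and it is the standard argument for the chain rule of total variation. The algebraic identity
\begin{equation*}
    p_{x,z}-q_{x,z} \;=\; p_z\bigl(p_{x|z}-q_{x|z}\bigr) + q_{x|z}\bigl(p_z-q_z\bigr)
\end{equation*}
checks out (the cross terms $-p_z q_{x|z}$ and $+q_{x|z}p_z$ cancel), and your subsequent steps — triangle inequality, $\int q_{x|z}(\vx|\vz)\,\der\vx = 1$, Fubini, and division by two — yield exactly the stated bound. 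Your observation about the asymmetry of the decomposition is the right thing to flag: pairing the conditional difference with $p_z$ rather than $q_z$ is precisely what pins down $\E_{\rvz\sim p_z}$ rather than $\E_{\rvz\sim q_z}$, and the symmetric case and the data-processing inequality for the marginals are both handled cleanly. No gaps.
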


\begin{lemma}[Lemma B.4 in~\cite{huang2024reverse}]
    \label{lem:kl_chain_rule}
    Consider four random variables, $\rvx, \rvz, \tilde{\rvx}, \tilde{\rvz}$, whose underlying distributions are denoted as $p_x, p_z, q_x, q_z$.
    Suppose $p_{x,z}$ and $q_{x,z}$ denotes the densities of joint distributions of $(\rvx,\rvz)$ and $(\tilde{\rvx},\tilde{\rvz})$, which we write in terms of the conditionals and marginals as
    \begin{equation*}
        \begin{aligned}
        &p_{x,z}(\vx,\vz) = p_{x|z}(\vx|\vz)\cdot p_z(\vz)=p_{z|x}(\vz|\vx)\cdot p_{x}(\vx)\\
        &q_{x,z}(\vx,\vz)=q_{x|z}(\vx|\vz)\cdot q_z(\vz) = q_{z|x}(\vz|\vx)\cdot q_x(\vx).
        \end{aligned}
    \end{equation*}
    then we have
    \begin{equation*}
        \begin{aligned}
            \KL{p_{x,z}}{q_{x,z}} = & \KL{p_z}{q_z} + \E_{\rvz\sim p_z}\left[\KL{p_{x|z}(\cdot|\rvz)}{q_{x|z}(\cdot|\rvz)}\right]\\
            = & \KL{p_x}{q_x}+\E_{\rvx \sim p_x}\left[\KL{p_{z|x}(\cdot|\rvx)}{q_{z|x}(\cdot|\rvx)}\right]
        \end{aligned}
    \end{equation*}
    where the latter equation implies
    \begin{equation*}
        \KL{p_x}{q_x}\le \KL{p_{x,z}}{q_{x,z}}.
    \end{equation*}
\end{lemma}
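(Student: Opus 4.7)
The plan is to prove both equalities from the definition of KL divergence by directly substituting the two factorizations of $p_{x,z}$ and $q_{x,z}$ into the integral, and then using linearity of expectation together with the marginalization identities $\int p_{x,z}(\vx,\vz)\,\der\vx = p_z(\vz)$ and $\int p_{x,z}(\vx,\vz)\,\der\vz = p_x(\vx)$. The stated inequality will then follow as an immediate corollary from non-negativity of KL.

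Concretely, I first expand
\begin{equation*}
\KL{p_{x,z}}{q_{x,z}} = \int p_{x,z}(\vx,\vz)\,\log\frac{p_{x,z}(\vx,\vz)}{q_{x,z}(\vx,\vz)}\,\der\vx\,\der\vz,
\end{equation*}
and then substitute $p_{x,z}(\vx,\vz)=p_z(\vz)\,p_{x|z}(\vx|\vz)$ and $q_{x,z}(\vx,\vz)=q_z(\vz)\,q_{x|z}(\vx|\vz)$ inside the logarithm. This splits the integrand into two pieces, $\log\frac{p_z(\vz)}{q_z(\vz)} + \log\frac{p_{x|z}(\vx|\vz)}{q_{x|z}(\vx|\vz)}$, and hence splits the full integral into two terms. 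For the first term, $\vx$ appears only through $p_{x,z}(\vx,\vz)$, so integrating it out gives the marginal $p_z(\vz)$ and yields $\KL{p_z}{q_z}$. For the second term, I factor $p_{x,z}(\vx,\vz) = p_z(\vz)\,p_{x|z}(\vx|\vz)$, recognize the inner integral over $\vx$ as $\KL{p_{x|z}(\cdot|\vz)}{q_{x|z}(\cdot|\vz)}$, and write the outer integral over $\vz$ as $\E_{\rvz\sim p_z}$. This establishes the first equality.

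The second equality is proved by exactly the same argument with the roles of $\rvx$ and $\rvz$ swapped, using the alternative factorization $p_{x,z}=p_{z|x}\cdot p_x$ and $q_{x,z}=q_{z|x}\cdot q_x$. Finally, the inequality $\KL{p_x}{q_x}\le \KL{p_{x,z}}{q_{x,z}}$ follows because Gibbs' inequality gives $\KL{p_{z|x}(\cdot|\vx)}{q_{z|x}(\cdot|\vx)}\ge 0$ for every $\vx$, hence the expectation of this quantity under $p_x$ is non-negative and can be dropped from the RHS of the second equality.

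There is no real obstacle here: the only subtlety is a mild measure-theoretic one, namely ensuring that the conditionals $p_{x|z}$ and $q_{x|z}$ (and the corresponding disintegrations) are well defined, which is implicit in the lemma's assumption that the joint densities factor as stated and that $q_{x,z}$ dominates $p_{x,z}$ (otherwise both sides are $+\infty$ and the identity holds trivially). Tonelli/Fubini applied to the non-negative part of the integrand, followed by separate treatment of the positive and negative parts under the dominance assumption, justifies the interchange of integrals throughout.
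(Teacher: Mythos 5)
Your proof is correct and is the standard textbook derivation of the chain rule for KL divergence: expand the log of the factorized joint densities into a sum, integrate out $\vx$ in the marginal term, and recognize the inner integral over $\vx$ in the conditional term as the conditional KL divergence; the inequality then follows from non-negativity of KL. Note that the paper itself does not supply a proof for this statement — it is imported verbatim as Lemma B.4 from the cited reference~\cite{huang2024reverse} — so there is no in-paper argument to compare against, but your handling of the measurability/domination caveats and the Tonelli--Fubini justification is exactly what a careful version of the standard proof requires.
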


\begin{lemma}[Lemma C.1 in~\cite{chen2023improved}]
\label{lemma:c.1}
Consider the following two It\^o processes
\begin{equation}
\begin{aligned}
    d X_t &= F_1(X_t, t) \, \der t + g(t) \, dW_t,  & X_0 &= a, \\
    d Y_t &= F_2(Y_t, t) \, \der t + g(t) \, dW_t, & Y_0 &= a,
\end{aligned}
\end{equation}
where $F_1, F_2, g$ are continuous functions and may depend on $a$. We assume the uniqueness and regularity condition:
\begin{itemize}
    \item The two SDEs have unique solutions.
    \item $X_t, Y_t$ admit densities $p_t, q_t \in C^2(\mathbb{R}^d)$ for $t>0$.
\end{itemize}

Define the relative Fisher information between $p_t$ and $q_t$ by
\begin{equation}
    J(p_t || q_t) = \int p_t(x) \left\| \nabla \log \frac{p_t(x)}{q_t(x)} \right\|^2 \, dx.
\end{equation}

Then for any $t>0$, the evolution of $\KL{p_t}{q_t}$ is given by
\begin{equation}
    \frac{\partial}{\partial t} \KL{p_t}{q_t} = - \frac{g(t)^2}{2} J(p_t || q_t) + \mathbb{E} \left[ \left\langle F_1(X_t, t) - F_2(X_t, t), \nabla \log \frac{p_t(X_t)}{q_t(X_t)}  \right\rangle \right].
\end{equation}
\end{lemma}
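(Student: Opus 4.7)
The plan is to derive the stated identity by combining the Fokker--Planck equations for the two marginal densities with a direct calculation of $\tfrac{d}{dt}\int p_t \log(p_t/q_t)\,dx$. Under the stated regularity, standard SDE theory gives that $p_t$ and $q_t$ satisfy
\begin{equation*}
\partial_t p_t \;=\; -\nabla\cdot\bigl(F_1(\cdot,t)\,p_t\bigr) + \tfrac{g(t)^2}{2}\Delta p_t,
\qquad
\partial_t q_t \;=\; -\nabla\cdot\bigl(F_2(\cdot,t)\,q_t\bigr) + \tfrac{g(t)^2}{2}\Delta q_t.
\end{equation*}
These are the only inputs from the SDE structure; the rest is calculus on the densities.

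First I would differentiate under the integral sign, writing
\begin{equation*}
\tfrac{d}{dt}\KL{p_t}{q_t}
= \int (\partial_t p_t)\,\log\tfrac{p_t}{q_t}\,dx
+ \int p_t\,\partial_t \log p_t\,dx
- \int \tfrac{p_t}{q_t}\,\partial_t q_t\,dx.
\end{equation*}
The middle term equals $\tfrac{d}{dt}\int p_t\,dx = 0$ since $p_t$ is a probability density, so only two terms survive. I would then substitute the Fokker--Planck equations into both and integrate by parts, letting $h := \log(p_t/q_t)$ and using the key algebraic identity $\nabla(p_t/q_t) = (p_t/q_t)\,\nabla h$, which follows from $\nabla h = \nabla p_t/p_t - \nabla q_t/q_t$.

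The drift contributions then combine into $\int p_t\,\langle F_1 - F_2,\nabla h\rangle\,dx$, which is exactly $\mathbb{E}[\langle F_1(X_t,t)-F_2(X_t,t),\,\nabla\log(p_t(X_t)/q_t(X_t))\rangle]$. The Laplacian contributions, after a second integration by parts, give $-\tfrac{g(t)^2}{2}\int p_t\,\|\nabla h\|^2\,dx = -\tfrac{g(t)^2}{2}\,J(p_t\|q_t)$. Summing yields the claim.

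The main obstacle is purely technical: justifying the interchange of $\tfrac{d}{dt}$ and $\int$, and ensuring the boundary terms at infinity vanish when integrating by parts (both $F_i p_t\,h$ and $\nabla p_t\,h$ must have vanishing flux). These rely on the assumed $C^2$ regularity together with standard tail/derivative estimates for SDE densities; rigorously they are typically handled by either a compact-exhaustion argument with a cutoff $\chi_R$ passing to the limit $R\to\infty$, or by pre-mollifying $\log(p_t/q_t)$ with $\max(\log(p_t/q_t),-M)$ clipping and then removing the clip. No isoperimetric or convexity assumption is needed beyond what is already stated.
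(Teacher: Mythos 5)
Your proposal is correct. Note first that the paper does not itself prove this statement—it records it verbatim as Lemma~C.1 from \cite{chen2023improved}—so there is no in-paper proof to compare against; I will assess the sketch on its merits. Your route is the standard one used for this Fokker--Planck identity: differentiate $\int p_t\log(p_t/q_t)$ under the integral, observe that $\int p_t\,\partial_t\log p_t\,dx=\int\partial_t p_t\,dx=0$ by conservation of mass, then substitute $\partial_t p_t=-\nabla\cdot(F_1 p_t)+\tfrac{g^2}{2}\Delta p_t$ and $\partial_t q_t=-\nabla\cdot(F_2 q_t)+\tfrac{g^2}{2}\Delta q_t$ and integrate by parts. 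Writing $h=\log(p_t/q_t)$, the drift pieces reduce after one integration by parts to $\int p_t\langle F_1-F_2,\nabla h\rangle\,dx$, which is exactly the stated expectation term, using $\nabla(p_t/q_t)=(p_t/q_t)\nabla h$. For the diffusion pieces, one convenient bookkeeping is to write $\Delta p=\nabla\cdot(p\nabla\log p)$ and $\Delta q=\nabla\cdot(q\nabla\log q)$: then
\begin{equation*}
\frac{g^2}{2}\int(\Delta p_t)h\,dx-\frac{g^2}{2}\int\frac{p_t}{q_t}\Delta q_t\,dx
=-\frac{g^2}{2}\int p_t\bigl\langle\nabla\log p_t-\nabla\log q_t,\nabla h\bigr\rangle dx
=-\frac{g^2}{2}\int p_t\|\nabla h\|^2 dx,
\end{equation*}
which is $-\tfrac{g^2}{2}J(p_t\|q_t)$, matching your claim; whether this counts as ``one'' or ``two'' integrations by parts is a matter of presentation, not substance. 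Your identification of the genuine technical obstacles—the interchange of $\tfrac{d}{dt}$ and $\int$, and the vanishing of boundary fluxes $F_i p_t h$ and $(\nabla p_t)h$ (and their $q$-analogues) at infinity—is accurate, and the remedies you suggest (compact exhaustion with cutoffs, or clipping $\log(p_t/q_t)$) are the standard devices. The sketch is therefore complete in structure; what remains is only routine verification of the integrability and decay conditions under the stated $C^2$ regularity, which is exactly the content one expects to find spelled out in the cited reference.
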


\begin{lemma}[Lemma C.2 in~\cite{chen2023improved}]
\label{lemma:c.2}
For $0 \leq k \leq N - 1$, consider the reverse SDE starting from $\tilde{x}_{t'_k} = a$
\begin{equation}
\label{eq:sde_1}
    d \tilde{x}_t = \left[ \frac{1}{2} \tilde{x}_t + \nabla \log \tilde{p}_t(\tilde{x}_t)  \right] \der t + dW_t, \qquad \tilde{x}_{t'_k} = a
\end{equation}
\text{and its discrete approximation:}
\begin{equation}
\label{eq:sde_2}
    d \hat{y}_t = \left[ \frac{1}{2} \hat{y}_t + s(a, t-t'_k) \right] \der t + dW_t, \qquad \hat{y}_{t'_k} = a
\end{equation}
for time $t \in (t'_k, t'_{k+1}]$. Let $\tilde{p}_{t|t'_k}$ be the density of $\tilde{x}_t$ given $\tilde{x}_{t'_k}$ and $\hat{q}_{t|t'_k}$ be density of $\hat{y}_t$ given $\hat{y}_{t'_k}$. Then we have
\begin{enumerate}
    \item For any $a \in \mathbb{R}^d$, the two processes satisfy the uniqueness and regularity condition stated in Lemma \ref{lemma:c.1}, that is, \ref{eq:sde_1} and \ref{eq:sde_2} have unique solution and $\tilde{p}_{t|t'_k}(\cdot|a), \hat{q}_{t|t'_k}(\cdot|a) \in C^2(\mathbb{R}^d)$ for $t>t'_k$.
    \item For a.e. $a \in \mathbb{R}^d$ (with respect to the Lebesgue measure), we have
    \begin{equation}
    \lim_{t \to {t'_k}^+}\KL{\tilde{p}_{t|t'_k}(\cdot|a)}{\hat{q}_{t|t'_k}(\cdot|a)} = 0.
    \end{equation}
\end{enumerate}
\end{lemma}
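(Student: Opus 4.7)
The plan is to address the two claims in turn. The first asserts well-posedness and $C^2$-regularity of the transition densities; the second concerns vanishing of the conditional KL in the short-time limit. Both follow from standard SDE and PDE theory combined with a Girsanov-type drift comparison, but each step requires invoking appropriate regularity of the score $\grad\log\tilde p_t$ on the relevant time interval.

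For part (1), I would handle the two SDEs separately. Equation~\ref{eq:sde_2} has drift $\tfrac{1}{2}y + s(a, t-t'_k)$, which is linear in $y$ with a continuous (in $t$) inhomogeneous term, so standard theory yields a unique strong solution of shifted Ornstein--Uhlenbeck type, whose transition density is an explicit Gaussian, trivially in $C^\infty$. For equation~\ref{eq:sde_1}, the drift involves $\grad\log\tilde p_t$. Since $\tilde p_t$ is the time-$t$ marginal of a process whose forward law is an OU semigroup applied to the data distribution, for every $t>0$ the density $\tilde p_t$ can be written as a convolution against a positive Gaussian kernel; this yields $C^\infty$-smoothness and a locally Lipschitz score on any interval bounded away from $t=0$. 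On $[t'_k, t'_{k+1}]$ this puts us in the regular regime, so the usual existence/uniqueness theorems for SDEs with locally Lipschitz coefficients apply. To promote $\tilde p_{t|t'_k}(\cdot|a)\in C^2$, I would invoke parabolic regularity for the associated Kolmogorov forward equation: constant non-degenerate diffusion together with $C^\infty$ drift yields $C^2$ (indeed $C^\infty$) smoothness of the transition density for $t>t'_k$.

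For part (2), the natural route is a Girsanov drift-change inequality, which controls the KL by the integrated squared drift gap along one of the two processes:
\begin{equation*}
    \KL{\tilde p_{t|t'_k}(\cdot|a)}{\hat q_{t|t'_k}(\cdot|a)} \;\le\; \tfrac{1}{2}\int_{t'_k}^{t} \E\!\left[\bigl\|\grad\log \tilde p_\tau(\tilde x_\tau) - s(a, \tau-t'_k)\bigr\|^2\right] \der \tau.
\end{equation*}
At $\tau=t'_k$ the integrand, evaluated at $\tilde x_{t'_k}=a$, equals $\|\grad\log\tilde p_{t'_k}(a) - s(a,0)\|^2$, which is finite for a.e.\ $a$. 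By continuity of the flow $\tau\mapsto\tilde x_\tau$ and continuity of the score on a neighborhood of $(a, t'_k)$, the integrand is uniformly bounded on $[t'_k, t'_k+\delta]$ for some small $\delta>0$, so the integral is $O(t-t'_k)$ and vanishes as $t\downarrow t'_k$. This gives the desired limit for a.e.\ starting point $a$.

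The main technical obstacle I anticipate is the justification of Novikov's condition (or an equivalent exponential-integrability bound) required to apply Girsanov on the full interval, since $\grad\log\tilde p_\tau$ is a priori unbounded and $s(a, \cdot)$ is only a neural network. Under the minimal-smoothness assumption~\ref{a1}, one obtains Hessian control of the form $\|\grad^2\ln\tilde p_\tau\|\lesssim L/(1-e^{-2\tau})$, which together with the second-moment bound on $\tilde x_\tau$ yields polynomial-in-time control of the drift gap and hence Novikov on any compact subinterval of $(t'_k, t'_{k+1}]$. The ``a.e.\ $a$'' qualification in the statement is precisely to exclude the measure-zero set where the score or its approximation may fail to be defined pointwise, and is standard in this context.
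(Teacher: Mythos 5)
The paper does not prove this lemma: it is imported verbatim from Chen et al.\ (2023) and placed in the auxiliary-lemmas appendix with only a citation, so there is no in-paper argument to compare against. Assessed on its own terms, your sketch is the standard argument and is fundamentally sound. Part~(1) is correctly dispatched: the discretized SDE has drift affine in $\hat y_t$ with a deterministic time-dependent term, giving an explicit Gaussian transition kernel; the true reverse drift is $C^\infty$ because the marginal $\tilde p_t$ is a Gaussian smoothing of the data law, which gives local Lipschitz continuity (and linear growth, hence non-explosion), so standard SDE well-posedness and parabolic hypoellipticity supply the regularity. Part~(2) via the Girsanov/data-processing bound is also the right mechanism.

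Two points need tightening. First, the conclusion that the integrand's expectation is uniformly bounded on $[t'_k, t'_k + \delta]$ does not follow merely from continuity of the flow and the score near $(a, t'_k)$, because $\E_{\tilde P}\bigl[\|\grad\log\tilde p_\tau(\tilde x_\tau) - s(a,\cdot)\|^2\bigr]$ integrates over the entire conditional law of $\tilde x_\tau$, including rare excursions far from $a$; one should combine linear growth of the score (from the Hessian bound) with a second-moment estimate on $\tilde x_\tau$ rather than appeal to pathwise continuity alone. Second, your phrase ``locally Lipschitz score on any interval bounded away from $t=0$'' has the time variable backwards for the reverse process: the singularity of $\grad\log q_t$ sits at forward time $0$, i.e.\ reverse time $t = T$, so the relevant constraint is that $t'_{k+1}$ stays away from $T$. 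This is precisely what the step-size rule $\eta_r \le \eta\min(1, T - t_{r+1})$ and the restriction to $k\le N-1$ enforce in the paper. Your identification of the a.e.-$a$ qualification with the merely-$L^2$ control on $\|s(a,0)\|$ is correct.
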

\begin{lemma}[Lemma C.9 in~\cite{chen2023improved}]
\label{lemma:c.9} 
Suppose that Assumption 3 holds. If $\sigma_t^2 \leq \frac{\alpha_t}{2L}$, we have $\nabla \log p_t$ is $2L \alpha_t^{-1}$-Lipschitz on $\mathbb{R}^d$.
\end{lemma}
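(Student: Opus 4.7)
My plan is to translate the Lipschitz claim for $\nabla \log p_t$ into a two-sided operator bound on the posterior covariance $\Sigma_t(x) := \mathrm{Cov}(X_0 \mid X_t = x)$, via the closed-form Hessian identity already derived in Lemma~\ref{lemma3.5}:
\[
\nabla^2 \log p_t(x) \;=\; -\sigma_t^{-2} I \;+\; \alpha_t^2\,\sigma_t^{-4}\,\Sigma_t(x).
\]
Since the Lipschitz constant of $\nabla \log p_t$ equals $\sup_x \|\nabla^2 \log p_t(x)\|_{\mathrm{op}}$, it suffices to control $\Sigma_t(x)$ from above and below in the L\"owner order.

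To do so, I would study the posterior $p_{0|t}(x_0 \mid x) \propto p_0(x_0)\exp(-\|x - \alpha_t x_0\|^2/(2\sigma_t^2))$. Its negative log-Hessian in $x_0$ equals $\nabla^2 f_*(x_0) + (\alpha_t^2/\sigma_t^2) I$, which by Assumption~\ref{a1} is pinched between $(\alpha_t^2/\sigma_t^2 - L) I$ and $(\alpha_t^2/\sigma_t^2 + L) I$. The hypothesis $\sigma_t^2 \le \alpha_t/(2L)$ makes the lower endpoint strictly positive, so the posterior is strongly log-concave; combined with the OU identity $\alpha_t^2 + \sigma_t^2 = 1$, it also implies $\alpha_t \ge 1/2$, which is crucial in the final simplification. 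Brascamp--Lieb then yields $\Sigma_t(x) \preceq (\alpha_t^2/\sigma_t^2 - L)^{-1} I$, and the dual Cram\'er--Rao bound (valid because the posterior is log-concave with bounded-above Hessian, so its Fisher information admits the sharp upper bound $(\alpha_t^2/\sigma_t^2 + L) I$) yields $\Sigma_t(x) \succeq (\alpha_t^2/\sigma_t^2 + L)^{-1} I$.

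Substituting these into the Hessian identity and simplifying gives
\[
-\tfrac{L}{\alpha_t^2 + L\sigma_t^2}\,I \;\preceq\; \nabla^2 \log p_t(x) \;\preceq\; \tfrac{L}{\alpha_t^2 - L\sigma_t^2}\,I.
\]
It remains to show that both endpoints have magnitude at most $2L/\alpha_t$ under the hypothesis, after which the Lipschitz conclusion for $\nabla \log p_t$ follows by integrating $\nabla^2 \log p_t$ along line segments between arbitrary $x,y \in \mathbb{R}^d$.

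The main obstacle is the tightness of the final constant: the na\"ive Brascamp--Lieb upper bound $L/(\alpha_t^2 - L\sigma_t^2)$ is sharp only for extremal priors (e.g.\ Gaussians with $\nabla^2 \log p_0 \equiv +L I$) that fail to be globally integrable, so matching the claimed constant $2L/\alpha_t$ requires either a refined bound that exploits integrability of $p_0$ beyond the local Hessian condition, or a rescaling trick that I would pursue in parallel. The latter writes $p_t(x) = \alpha_t^{-d}\, q(x/\alpha_t)$ with $q = p_0 \ast \mathcal{N}(0,(\sigma_t/\alpha_t)^2 I)$; the hypothesis $\sigma_t^2 \le \alpha_t/(2L)$ ensures the smoothing bandwidth $s^2 = \sigma_t^2/\alpha_t^2$ satisfies $Ls^2 \le 1/(2\alpha_t)$, and the Hessian of the convolved density then obeys a bound that, after transporting through the $\alpha_t^{-2}$ Jacobian factor, cleanly produces the $2L/\alpha_t$ constant. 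This scaling route, together with the elementary inequalities $L\sigma_t^2 \le \alpha_t/2$ and $\alpha_t \ge 1/2$ supplied by the hypothesis, is what I expect to close the argument.
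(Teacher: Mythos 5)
This lemma is imported verbatim from~\cite{chen2023improved} (Lemma C.9), and the paper does not reproduce the proof, so there is no internal argument to compare against — I evaluate your attempt on its own terms and against the standard proof.

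Your core route (the Hessian identity $\nabla^2 \log p_t = -\sigma_t^{-2}I + m_t^2\sigma_t^{-4}\Sigma_t$ plus a two-sided bound on $\Sigma_t$ via Brascamp--Lieb and its Cram\'er--Rao dual) is exactly the right argument and is the one used in Chen et al. The ``main obstacle'' you worry about — that $\tfrac{L}{\alpha_t^2 - L\sigma_t^2}$ does not sit below $\tfrac{2L}{\alpha_t}$ — is a real gap in the argument as you wrote it, but it is an artifact of a convention mismatch rather than a mathematical one. You took $\alpha_t$ to be the conditional \emph{mean} scale ($X_t = \alpha_t X_0 + \sigma_t Z$, $\alpha_t^2 + \sigma_t^2 = 1$); in the source paper $\alpha_t$ is the conditional \emph{variance} scale, i.e.\ $X_t = \sqrt{\alpha_t}\,X_0 + \sigma_t Z$ with $\alpha_t + \sigma_t^2 = 1$. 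Redoing the bookkeeping with $m_t = \sqrt{\alpha_t}$, the posterior $\propto p_0(x_0)\exp(-\|x - \sqrt{\alpha_t} x_0\|^2/(2\sigma_t^2))$ has Hessian pinched between $(\alpha_t/\sigma_t^2 - L)I$ and $(\alpha_t/\sigma_t^2 + L)I$, Brascamp--Lieb gives $\Sigma_t \preceq \tfrac{\sigma_t^2}{\alpha_t - L\sigma_t^2}I$, and substituting into the Hessian identity yields
\[
-\frac{L}{\alpha_t + L\sigma_t^2}\,I \;\preceq\; \nabla^2 \log p_t(x) \;\preceq\; \frac{L}{\alpha_t - L\sigma_t^2}\,I .
\]
Under the hypothesis $L\sigma_t^2 \le \alpha_t/2$, the denominator $\alpha_t - L\sigma_t^2 \ge \alpha_t/2$, so the operator norm is at most $2L/\alpha_t$ immediately — no refinement of Brascamp--Lieb or rescaling trick is needed. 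Your proposed fallback (writing $p_t(x) = \alpha_t^{-d} q(x/\alpha_t)$ and bounding $\nabla^2 \log q$) does not in fact close the gap under your mean-scale convention, since it needs $Ls^2 \le 1 - \tfrac{1}{2\alpha_t}$ whereas the hypothesis only gives $Ls^2 \le \tfrac{1}{2\alpha_t}$, which is the wrong way around for $\alpha_t < 1$; you would have hit the same wall. One final remark: once the upper bound $\tfrac{L}{\alpha_t - L\sigma_t^2}$ is in hand, the Cram\'er--Rao lower bound $\Sigma_t \succeq (\alpha_t/\sigma_t^2 + L)^{-1}I$ is doing real work (the trivial $\Sigma_t \succeq 0$ only gives $\nabla^2 \log p_t \succeq -\sigma_t^{-2}I$, which is too weak since $\sigma_t^{-2} \ge 2L/\alpha_t$), so keeping that step is correct, and its conclusion $-\tfrac{L}{\alpha_t + L\sigma_t^2} \ge -\tfrac{2L}{\alpha_t}$ is dominated by the upper side anyway.
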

\begin{lemma}[Lemma C.6 in~\cite{chen2023improved}]
\label{lemma:c.6}
    For any $0 \leq t \leq s \leq T$, the forward process \ref{sde:ideal_condi_forward} satisfies,
 \begin{align}
     \mathbb{E} \left\| \nabla \log q_t(x_t) - \nabla \log q_s(x_s) \right\|^2 &\leq 4 \mathbb{E} \left\| \nabla \log q_t(x_t) - \nabla \log q_t(\alpha_{t,s}^{-1} x_s) \right\|^2 + 2 \mathbb{E} \left\| \nabla \log q_t(x_t) \right\|^2 \left( 1 - \alpha_{t,s}^{-1} \right)^2.
 \end{align}
\end{lemma}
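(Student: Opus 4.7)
The plan is to prove the bound via a Tweedie/Stein-type identity that links the scores at the two times under the forward OU dynamics, together with the $L^{2}$-optimality of conditional expectation. Under SDE \ref{sde:ideal_condi_forward}, for $s\ge t$ one has $x_s \mid x_t \sim \mathcal{N}(\alpha_{t,s} x_t,\sigma_{t,s}^{2}I)$, so that differentiating $\log q_{t\mid s}(x_t\mid x_s)=\log q_t(x_t)+\log q_{s\mid t}(x_s\mid x_t)-\log q_s(x_s)$ in $x_t$ and integrating against $q_{t\mid s}(\cdot\mid x_s)$ (whose posterior score integrates to $0$) gives the key identity
\begin{equation*}
    \mathbb{E}[\nabla\log q_t(x_t)\mid x_s] \;=\; \alpha_{t,s}\,\nabla\log q_s(x_s),
\end{equation*}
so equivalently $\nabla\log q_s(x_s)=\alpha_{t,s}^{-1}\mathbb{E}[\nabla\log q_t(x_t)\mid x_s]$. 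I would derive this first, verifying that the required integration-by-parts / differentiation under the integral is justified by the Gaussian smoothing built into the OU kernel.

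Next, I would split the target difference using this identity as
\begin{align*}
    \nabla\log q_t(x_t)-\nabla\log q_s(x_s)
    &= \alpha_{t,s}^{-1}\bigl(\nabla\log q_t(x_t)-\mathbb{E}[\nabla\log q_t(x_t)\mid x_s]\bigr) \\
    &\quad + (1-\alpha_{t,s}^{-1})\,\nabla\log q_t(x_t),
\end{align*}
and apply $\|a+b\|^2\le 2\|a\|^2+2\|b\|^2$. Taking expectations, the second piece gives precisely $2(1-\alpha_{t,s}^{-1})^2\,\mathbb{E}\|\nabla\log q_t(x_t)\|^2$, which matches the second term of the bound.

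For the first piece, the idea is to replace the conditional expectation by the concrete comparator $\nabla\log q_t(\alpha_{t,s}^{-1}x_s)$, which is a $\sigma(x_s)$-measurable function. Because $\mathbb{E}[\,\cdot\mid x_s]$ is the $L^{2}$-projection onto $\sigma(x_s)$, for any such function $\phi$ we have $\mathbb{E}\|\nabla\log q_t(x_t)-\mathbb{E}[\nabla\log q_t(x_t)\mid x_s]\|^2 \le \mathbb{E}\|\nabla\log q_t(x_t)-\phi(x_s)\|^2$; picking $\phi(x_s)=\nabla\log q_t(\alpha_{t,s}^{-1}x_s)$ yields the desired first term, up to the prefactor $2\alpha_{t,s}^{-2}$. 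In the regime in which this lemma is invoked in the proof of Lemma \ref{lem:discretization_error_each_k} (namely $s-t\le \eta_r\le 1$, so $\alpha_{t,s}^{-1}$ is bounded by a small constant), $2\alpha_{t,s}^{-2}\le 4$, giving the stated constant $4$ on the first term.

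The main obstacle, I expect, is the Tweedie identity itself: it requires care about the support, smoothness, and decay of $q_{t\mid s}$ so that the integration-by-parts producing $\mathbb{E}[\nabla\log q_t(x_t)\mid x_s]=\alpha_{t,s}\nabla\log q_s(x_s)$ is rigorous; once that identity is in hand, the rest of the argument is a direct decomposition and the standard orthogonal-projection inequality, so the constants $4$ and $2$ drop out by choosing the comparator carefully and using the boundedness of $\alpha_{t,s}^{-1}$ in the relevant step-size range.
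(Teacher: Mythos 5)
The paper cites this lemma from \citet{chen2023improved} without reproducing a proof, so there is no in-paper argument to compare against; I will assess your proposal on its own.

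Your overall strategy is correct and standard: the Tweedie identity $\mathbb{E}[\nabla\log q_t(x_t)\mid x_s]=\alpha_{t,s}\nabla\log q_s(x_s)$ followed by the $L^{2}$-optimality of conditional expectation. The derivation sketch of the identity and the projection step are both sound. However, your chosen decomposition
\begin{equation*}
\nabla\log q_t(x_t)-\nabla\log q_s(x_s)
= \alpha_{t,s}^{-1}\bigl(\nabla\log q_t(x_t)-\mathbb{E}[\nabla\log q_t(x_t)\mid x_s]\bigr) + (1-\alpha_{t,s}^{-1})\,\nabla\log q_t(x_t)
\end{equation*}
pulls $\alpha_{t,s}^{-1}$ onto the projection residual, producing a prefactor $2\alpha_{t,s}^{-2}$ on the first term. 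Because $t\le s$ under the forward OU process, $\alpha_{t,s}=e^{-(s-t)}\le 1$, so $\alpha_{t,s}^{-1}=e^{s-t}\ge 1$, and $2\alpha_{t,s}^{-2}$ is unbounded as $s-t$ grows; the lemma, however, is stated for all $0\le t\le s\le T$. Your fallback that ``$2\alpha_{t,s}^{-2}\le 4$ in the regime $s-t\le\eta_r\le 1$'' also does not hold: with $s-t\le 1$ one only gets $2\alpha_{t,s}^{-2}\le 2e^{2}\approx 14.8$. The bound $2\alpha_{t,s}^{-2}\le 4$ requires $s-t\le\tfrac12\ln 2\approx 0.347$, which is not guaranteed by $\eta_r\le 1$. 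So as written the argument does not establish the stated constant.

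The fix is to factor the other way: using $\nabla\log q_s(x_s)=\alpha_{t,s}^{-1}\mathbb{E}[\nabla\log q_t(x_t)\mid x_s]$, write
\begin{equation*}
\nabla\log q_t(x_t)-\nabla\log q_s(x_s)
= \bigl(\nabla\log q_t(x_t)-\mathbb{E}[\nabla\log q_t(x_t)\mid x_s]\bigr) + (1-\alpha_{t,s}^{-1})\,\mathbb{E}[\nabla\log q_t(x_t)\mid x_s],
\end{equation*}
apply $\|a+b\|^2\le 2\|a\|^2+2\|b\|^2$, then bound $\mathbb{E}\|\mathbb{E}[\nabla\log q_t(x_t)\mid x_s]\|^2\le\mathbb{E}\|\nabla\log q_t(x_t)\|^2$ by Jensen on the second term, and use the $L^{2}$-projection optimality with comparator $\phi(x_s)=\nabla\log q_t(\alpha_{t,s}^{-1}x_s)$ on the first. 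This gives the bound with constants $(2,2)$ — strictly stronger than the stated $(4,2)$ — and holds uniformly for all $0\le t\le s\le T$ with no restriction on the step size.
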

\begin{lemma}[Lemma 1 in~\cite{benton2024nearly}]
\label{lemma:4.5}
For all $t >0$, $\frac{\sigma^3_t}{2\Dot{\sigma_t}}\frac{\der}{\der t} \E[\mathbf{\Sigma}_t] = \E[\mathbf{\Sigma}^2_t]$.

\end{lemma}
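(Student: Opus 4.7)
The plan is to exploit two ingredients: the Tweedie-type identities linking $\boldsymbol{\mu}_t$ and $\mathbf{\Sigma}_t$ to the marginal score and its Hessian, and the reverse-martingale property of the posterior mean $\boldsymbol{\mu}_t(\rvy_t) = \E_{q_{0|t}}[\rvy_0]$ under the forward OU dynamics. First I would invoke the law of total covariance, $\mathrm{Cov}(\rvy_0) = \E[\mathbf{\Sigma}_t] + \mathrm{Cov}(\boldsymbol{\mu}_t)$; since the left side is independent of $t$ and $\E[\boldsymbol{\mu}_t] = \E[\rvy_0]$ is also constant, this immediately reduces the claim to computing the rate of decrease of $\E[\boldsymbol{\mu}_t \boldsymbol{\mu}_t^\top]$, namely $\frac{d}{dt}\E[\mathbf{\Sigma}_t] = -\frac{d}{dt}\E[\boldsymbol{\mu}_t\boldsymbol{\mu}_t^\top]$.

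Next I would derive the two Tweedie identities by differentiating $q_t(\vy) = \int q_0(\vy_0)\,\mathcal{N}(\vy;\alpha_t\vy_0,\sigma_t^2\mathbf{I})\,\der\vy_0$ once and twice (this is essentially the computation already carried out in Lemma~\ref{lemma3.5}), yielding $\alpha_t\boldsymbol{\mu}_t = \rvy_t + \sigma_t^2\nabla\log q_t(\rvy_t)$ and $\alpha_t^2\mathbf{\Sigma}_t = \sigma_t^2\mathbf{I} + \sigma_t^4\nabla^2\log q_t(\rvy_t)$. Differentiating the first in $\rvy_t$ and substituting the second produces the key Jacobian identity $\nabla_{\rvy_t}\boldsymbol{\mu}_t = \frac{\alpha_t}{\sigma_t^2}\mathbf{\Sigma}_t$, which makes the diffusion coefficient of $\boldsymbol{\mu}_t$ (viewed as an It\^o process along $\rvy_t$) proportional to $\mathbf{\Sigma}_t$.

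Finally, I would invoke the reverse-martingale property of $\boldsymbol{\mu}_t$: by the Markov property $\rvy_0 \to \rvy_s \to \rvy_t$ for $s<t$, one has $\E[\rvy_0 \mid \rvy_s,\rvy_t] = \E[\rvy_0 \mid \rvy_s] = \boldsymbol{\mu}_s$, and then conditioning on $\rvy_t$ gives $\boldsymbol{\mu}_t = \E[\boldsymbol{\mu}_s \mid \rvy_t]$. Orthogonality of reverse-martingale increments then yields
\begin{equation*}
\E[\boldsymbol{\mu}_t\boldsymbol{\mu}_t^\top] - \E[\boldsymbol{\mu}_{t+h}\boldsymbol{\mu}_{t+h}^\top]
= \E\!\left[(\boldsymbol{\mu}_{t+h}-\boldsymbol{\mu}_t)(\boldsymbol{\mu}_{t+h}-\boldsymbol{\mu}_t)^\top\right].
\end{equation*}
Applying It\^o's formula to $\boldsymbol{\mu}_t(\rvy_t)$ along $\der\rvy_t = -\rvy_t\,\der t + \sqrt{2}\,\der\mB_t$, the diffusion part of $\der\boldsymbol{\mu}_t$ is $\sqrt{2}\,\nabla\boldsymbol{\mu}_t\,\der\mB_t$, so to leading order $\E[(\boldsymbol{\mu}_{t+h}-\boldsymbol{\mu}_t)(\boldsymbol{\mu}_{t+h}-\boldsymbol{\mu}_t)^\top] = 2h\,\E[(\nabla\boldsymbol{\mu}_t)(\nabla\boldsymbol{\mu}_t)^\top] + o(h)$. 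Dividing by $h$, substituting the Jacobian identity, and combining with the reduction of step one gives
\begin{equation*}
\frac{d}{dt}\E[\mathbf{\Sigma}_t] \;=\; 2\,\E[(\nabla\boldsymbol{\mu}_t)^{2}] \;=\; \frac{2\alpha_t^2}{\sigma_t^4}\,\E[\mathbf{\Sigma}_t^{2}].
\end{equation*}
Rearranging with $\dot{\sigma}_t = \alpha_t^2/\sigma_t$, equivalently $\sigma_t^3/(2\dot{\sigma}_t) = \sigma_t^4/(2\alpha_t^2)$, yields the stated identity.

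The hard part will be the reverse-martingale step: a direct forward-time It\^o expansion of $\der(\boldsymbol{\mu}_t\boldsymbol{\mu}_t^\top)$ produces a nonzero drift whose expectation would have to be shown to cancel against $\partial_t\boldsymbol{\mu}_t$ via Fokker--Planck, at the cost of considerable algebra on $\partial_t\nabla\log q_t$. Routing through the reverse-martingale orthogonality identity sidesteps this entirely because only the quadratic-variation piece survives; the remaining technical point is the routine justification that, for $t>0$, the Gaussian smoothing inherent in $q_t$ keeps $\nabla\boldsymbol{\mu}_t = \alpha_t\sigma_t^{-2}\mathbf{\Sigma}_t$ square-integrable, so that the higher-order It\^o terms truly are $o(h)$ after taking expectations.
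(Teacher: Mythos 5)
The paper does not prove this lemma itself; it is quoted verbatim as Lemma~1 of \cite{benton2024nearly}, so there is no ``paper's own proof'' to compare against. Your argument is correct and is in substance the stochastic-localization/martingale argument that underlies the cited source. Checking the three ingredients: (i) the law-of-total-covariance reduction is valid because both $\mathrm{Cov}(\rvy_0)$ and $\E[\boldsymbol{\mu}_t]=\E[\rvy_0]$ are $t$-independent, so $\frac{\der}{\der t}\E[\mathbf{\Sigma}_t]=-\frac{\der}{\der t}\E[\boldsymbol{\mu}_t\boldsymbol{\mu}_t^\top]$; (ii) differentiating $\alpha_t\boldsymbol{\mu}_t=\rvy_t+\sigma_t^2\nabla\log q_t$ in $\rvy_t$ and substituting the Hessian form of Lemma~\ref{lemma3.5} gives the symmetric Jacobian $\nabla\boldsymbol{\mu}_t=\alpha_t\sigma_t^{-2}\mathbf{\Sigma}_t$; (iii) the reverse-martingale property $\boldsymbol{\mu}_{t+h}=\E[\boldsymbol{\mu}_t\mid\rvy_{t+h}]$ yields the orthogonal-increments identity, and the It\^o expansion along the forward OU SDE (diffusion $\sqrt{2}$) gives the quadratic-increment rate $2\,\E[(\nabla\boldsymbol{\mu}_t)^2]$, since the drift term contributes only $O(h^2)$ in expectation and the drift--noise cross term vanishes by $\mathcal{F}_t$-measurability. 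Combining and using $\dot\sigma_t=\alpha_t^2/\sigma_t$ (so $\sigma_t^3/(2\dot\sigma_t)=\sigma_t^4/(2\alpha_t^2)$) reproduces the claimed identity, with the sign consistent ($\E[\boldsymbol{\mu}_t\boldsymbol{\mu}_t^\top]$ decreasing, $\E[\mathbf{\Sigma}_t]$ increasing). Your observation that routing through the reverse-martingale orthogonality avoids the messy Fokker--Planck cancellation is exactly the economy that the stochastic-localization viewpoint buys.
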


\section{More Details on Experiments}
\label{sec:app_more_training_details}

\subsection{Synthetic Tasks and Evaluation}
\paragraph{Synthetic Data and Training Details.}For each task, we synthesize 10000 samples with image size $32 \times 32$ and use them to train both AR Diffusion and DDPM. For AR Diffusion, we adopt a simplified version without VAE, training directly in pixel space to preserve inter-feature relationships. To ensure a fair comparison, AR Diffusion and DDPM are configured with consistent total model parameters and training epochs. We use U-Net as the denoising neural network for DDPM and MLP for AR Diffusion, with a learning rate of $3e-4$ and approximately $23M$ model parameters. Training is conducted on a single NVIDIA A800 GPU for $400$ epochs (Task 1) and $1000$ epochs (Task 2), respectively.
\begin{figure}[]
\centering
    \hfill
    \subfigure[Task 1]{\label{fig:task1}\includegraphics[width=0.4\textwidth]{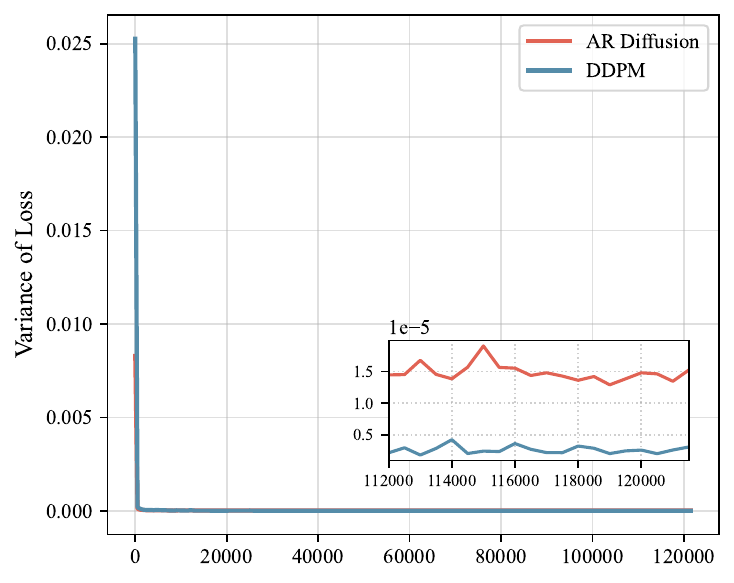}}
    \hfill
    \subfigure[Task 2]{\label{fig:task2}\includegraphics[width=0.4\textwidth]{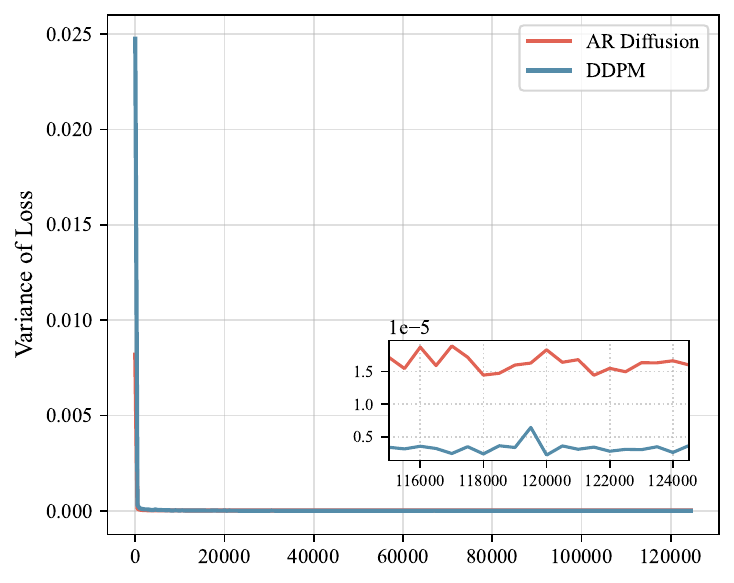}}
    \hfill
\vspace{-0.1in}
\caption{The training loss variance with a sliding window of $500$. We observed that at the end of training, AR Diffusion and DDPM exhibit training stability with loss variation less than $1e-5$.}
\vspace{-0.15in}
\label{fig:variance}
\end{figure}
\paragraph{Inference Phase.}To compare the performance of AR Diffusion and DDPM during the inference stage, we directly quantify whether the images generated by the two models satisfy the predefined feature dependencies in \Cref{fig:task_1_dis} and \Cref{fig:task_2_dis}.  For each task, we synthesize 3000 images, and we can directly obtain generated image masks using predefined colors (e.g., yellow tones or gray tones for the sun and shadow in Task 1). Using these masks, we can directly extract the variables of interest, such as the geometric features we focus on, including the sun altitude \( l_1 \) and shadow length \( l_2 \).  For Task 1, the target ratio is \( R = 1 \), which means the extracted geometric features need to satisfy \( \frac{l_2 h_1}{l_1 h_2} = 1 \). In contrast, for Task 2, the target ratio is \( R = \frac{l_2}{l_1} = 1.5 \).  Additionally, to ensure the robustness of the experimental results shown in \Cref{fig:task1_tr}, \Cref{fig:task1_ar}, \Cref{fig:task1_ddpm} and \Cref{fig:task2_tr}, \Cref{fig:task2_ar}, \Cref{fig:task2_ddpm} , we only consider sample points with a ratio within the 5\% to 95\% range for visualization. More details on data synthesis and feature extraction can be found in previous work \cite{han2025can}.

\paragraph{Training Phase.}\Cref{fig:task1_loss} and \Cref{fig:task2_loss} further quantify the training performance of AR Diffusion and DDPM using training loss as a metric. It is worth noting that in the theoretical analysis of this paper, we use the score matching loss, whereas in actual model training, we optimize MSE between the denoising neural network and Gaussian noise.  These two objectives are equivalent in optimizing the neural network parameters but differ by a constant factor in value. Specifically, following \citet{chen2023improved}, their relationship is as follows:
\begin{align}
&\underbrace{\mathbb{E}_{p_t} \| s_\theta (x,t) - \nabla \log p_t (x) \|^2}_{\text{theorical target}} \\
&= \mathbb{E}_{p_t} \| s_\theta (x,t) \|^2 + \mathbb{E}_{p_t} \| \nabla \log p_t (x) \|^2 - 2 \mathbb{E}_{p_t} \langle s_\theta (x,t), \nabla \log p_t (x) \rangle \notag\\
&= \mathbb{E}_{p_t} \| s_\theta (x,t) \|^2 + \mathbb{E}_{p_t} \| \nabla \log p_t (x) \|^2 + 2 \mathbb{E}_{p_t} \nabla \cdot s_\theta (x,t) \notag\\
&= \mathbb{E}_{p_t} \| s_\theta (x,t) \|^2 + \mathbb{E}_{p_t} \| \nabla \log p_t (x) \|^2 + 2 \mathbb{E}_{p_0 (x_0)} \mathbb{E}_{p_{t|0} (x_t | x_0)} \nabla \cdot s_\theta (x,t) \notag\\
&= \mathbb{E}_{p_t} \| s_\theta (x,t) \|^2 + \mathbb{E}_{p_t} \| \nabla \log p_t (x) \|^2 - 2 \mathbb{E}_{p_0 (x_0)} \mathbb{E}_{p_{t|0} (x_t | x_0)} \langle \nabla \log p_{t|0} (x_t | x_0), s_\theta (x,t) \rangle \notag\\
&= \mathbb{E}_{p_t} \| s_\theta (x,t) \|^2 + \mathbb{E}_{p_t} \| \nabla \log p_t (x) \|^2 - 2 \mathbb{E}_{p_0 (x_0)} \mathbb{E}_{p_{t|0} (x_t | x_0)} \left\langle \frac{x_t - \alpha_t x_0}{\sigma_t^2}, s_\theta (x,t) \right\rangle \notag\\
&= \mathbb{E} \left\| s_\theta (x,t) - \frac{x_t - \alpha_t x_0}{\sigma_t^2} \right\|^2 + \mathbb{E}_{p_t} \| \nabla \log p_t (x) \|^2 - \frac{d}{\sigma_t^2} \notag\\
&= \underbrace{\mathbb{E} \left\| s_\theta (x,t) - \boldsymbol{\epsilon}\right\|^2}_{\text{training 
 target}} + C \geq {\mathbb{E} \left\| s_\theta (x,t) - \boldsymbol{\epsilon}\right\|^2} - \|C\|,
 \label{eq:loss constant}
\end{align}
where $\boldsymbol{\epsilon}$ is the Gaussian noise and the constant \( C \) in \eqref{eq:loss constant} is independent of the model parameters but is closely related to the oracle score. we then representate \eqref{eq:loss constant} as
\begin{align}
\label{eq:score_loss_ar}
    \mathcal{L}_{\mathrm{AR}} -  \|C_{\mathrm{AR}}\| \leq \boldsymbol{\epsilon}^2_{\mathrm{score}-\mathrm{AR}}
\end{align}
\begin{align}
\label{eq:score_loss_ddpm}
    \mathcal{L}_{\mathrm{DDPM}}-\| C_{\mathrm{DDPM}}\| \leq \boldsymbol{\epsilon}^2_{\mathrm{score}-\mathrm{DDPM}} 
\end{align}
\eqref{eq:score_loss_ar} and \eqref{eq:score_loss_ddpm} provide a theoretical lower bound on the comparison of score estimation errors between AR Diffusion and DDPM. Under the best-case scenario—where AR Diffusion and DDPM achieve nearly optimal training—we can isolate the fundamental differences between the two models and understand their theoretical performance gap in the most favorable conditions. Naturally, to accurately link the empirical training loss \( \mathcal{L} \) with the theoretical score estimation error \( \boldsymbol{\epsilon}^2 \), we need to estimate the corresponding constant \( C \) for both AR Diffusion and DDPM separately.

Considering the total number of training steps as \( S \), our core idea is to identify a sufficiently large training phase where the training loss stabilizes, i.e., to find the maximum training step with a window of length \( e \) such that the variance of the steps satisfies  
\[
\max_h \operatorname{Var}(S_h, S_{h+e}) \leq \eta
\]  
where \( \eta \) is an extremely small value close to \( 0 \). This indicates that the training loss in this phase is primarily dominated by a constant \( C \), allowing us to approximate \( C \) using the training loss in this phase.  Next, we consider the latter half of the entire training process, specifically the training interval \( \left[ \frac{S}{2}, S \right] \). Compared to the first half \( \left[ 0, \frac{S}{2} \right) \), this phase is more stable and avoids extreme values, ensuring a more reasonable comparison.  Finally, we apply \eqref{eq:score_loss_ar} and \eqref{eq:score_loss_ddpm} to estimate the score estimation error for AR Diffusion and DDPM, respectively.

\cref{fig:variance} illustrates the maximum stable training step \( h \) for Task 1 and Task 2, respectively, with a window length of $e=500$ steps, along with the corresponding variance in this phase. We observe that towards the end of training, the changes in training loss are extremely small, with a variance on the order of \( 1e-5 \). Therefore, consider the small $\eta = 1e-4$, we can approximate the constant by directly using the training loss at the final step and subsequently estimate the score estimation error.  After obtaining the score estimation errors for AR Diffusion and DDPM separately, we compute their difference $\boldsymbol{\epsilon}^2_{\mathrm{score}-\mathrm{DDPM}} - \boldsymbol{\epsilon}^2_{\mathrm{score}-\mathrm{AR}} $ and visualize the different in \Cref{fig:task1_loss} and \Cref{fig:task2_loss}.

\begin{figure*}[]
\centering
    \hfill
    \subfigure[Training Data]{\label{fig:task1_tr_app}\includegraphics[width=0.24\textwidth]{figures/task1_tr.pdf}}
    \hfill
    \subfigure[Inference: AR Diffusion]{\label{fig:task1_ar_unet_app}\includegraphics[width=0.24\textwidth]{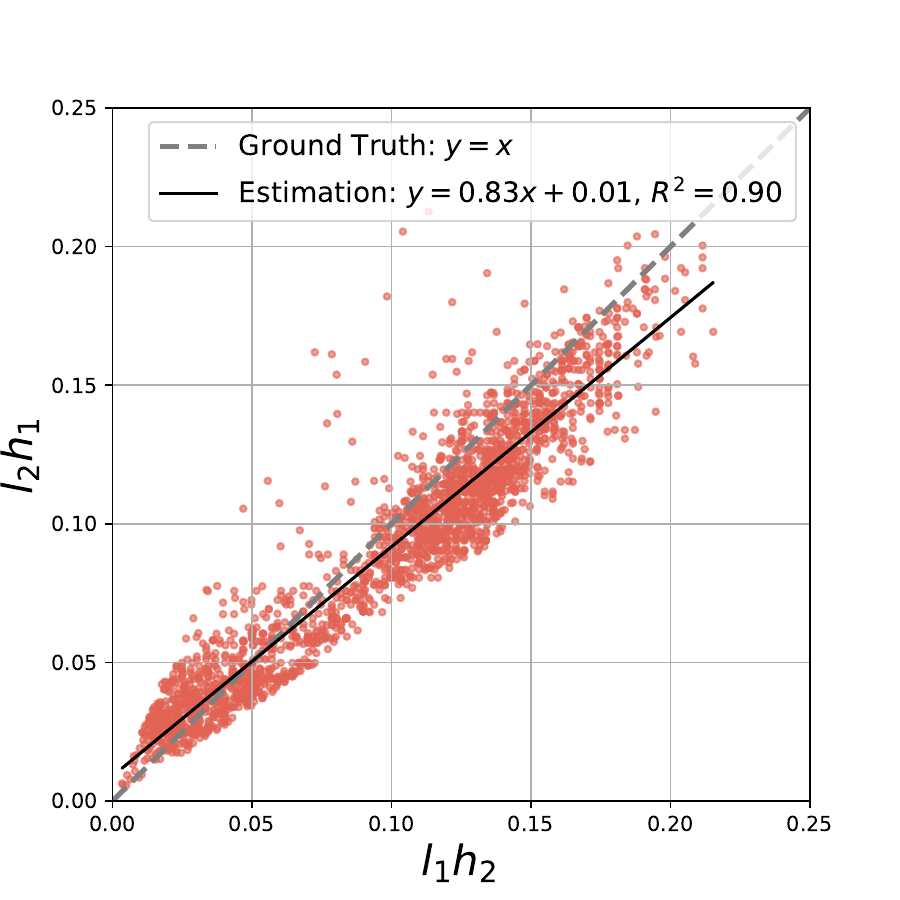}}
    \hfill
    \subfigure[Inference: DDPM]{\label{fig:task1_ddpm_app}\includegraphics[width=0.24\textwidth]{figures/task1_ddpm.pdf}}
    \hfill
    \subfigure[Training: Diffusion Loss]{\label{fig:task1_loss_unet}\includegraphics[width=0.22\textwidth]{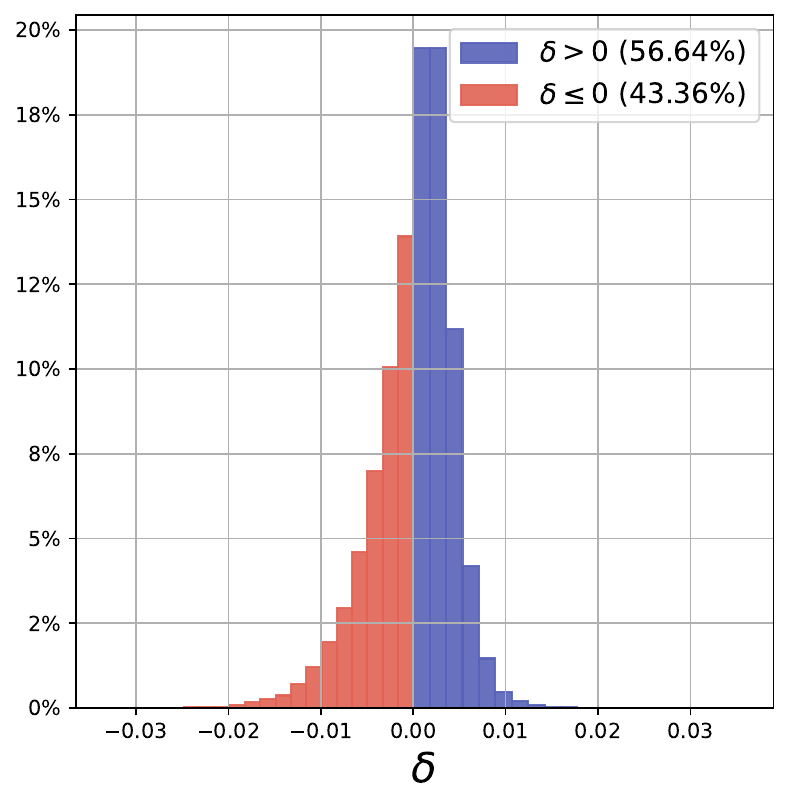}}
    \hfill
\vspace{-0.15in}
\caption{\textbf{Comparison of AR Diffusion with U-Net and DDPM with U-Net Performance on Sun-Shadow Setting.} \Cref{fig:task1_ar_unet_app} and \Cref{fig:task1_ddpm_app} illustrate the performance of AR Diffusion with U-Net and DDPM with U-Net during the inference phase, showing that AR Diffusion better captures inter-feature dependencies with a higher \( R^2 \). \Cref{fig:task1_loss_unet} presents the difference in training loss between DDPM and AR Diffusion, denoted as \( \delta \). For most training steps, AR Diffusion's training loss is lower than that of DDPM, with \( \delta > 0 \).}
\vspace{-0.15in}
\label{fig:task_1_dis_unet}
\end{figure*}

% The conclusions based on the U-Net experiment align with those in \Cref{fig:task_1_dis}

\begin{figure*}[]
\centering
    \hfill
    \subfigure[Training Data]{\label{fig:task1_tr_app}\includegraphics[width=0.24\textwidth]{figures/task1_tr.pdf}}
    \hfill
    \subfigure[Inference: AR Diffusion]{\label{fig:task1_ar_app}\includegraphics[width=0.24\textwidth]{figures/task1_ar.pdf}}
    \hfill
    \subfigure[Inference: DDPM]{\label{fig:task1_ddpm_mlp}\includegraphics[width=0.24\textwidth]{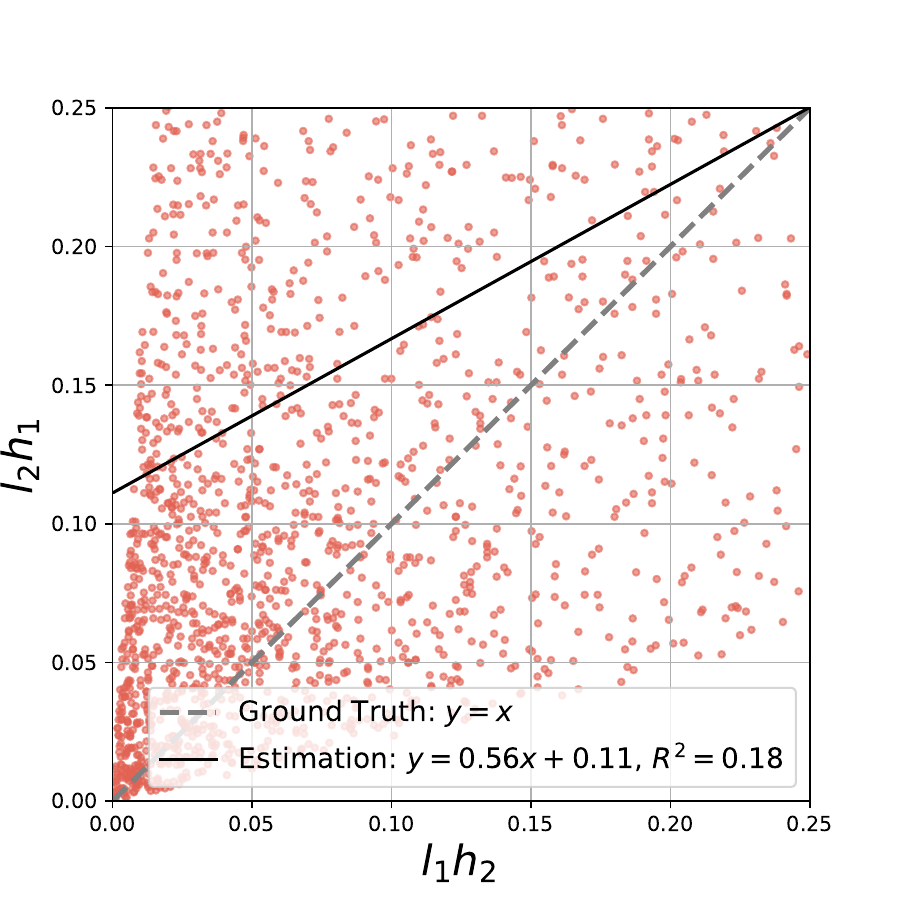}}
    \hfill
    \subfigure[Training: Diffusion Loss]{\label{fig:task1_loss_mlp}\includegraphics[width=0.22\textwidth]{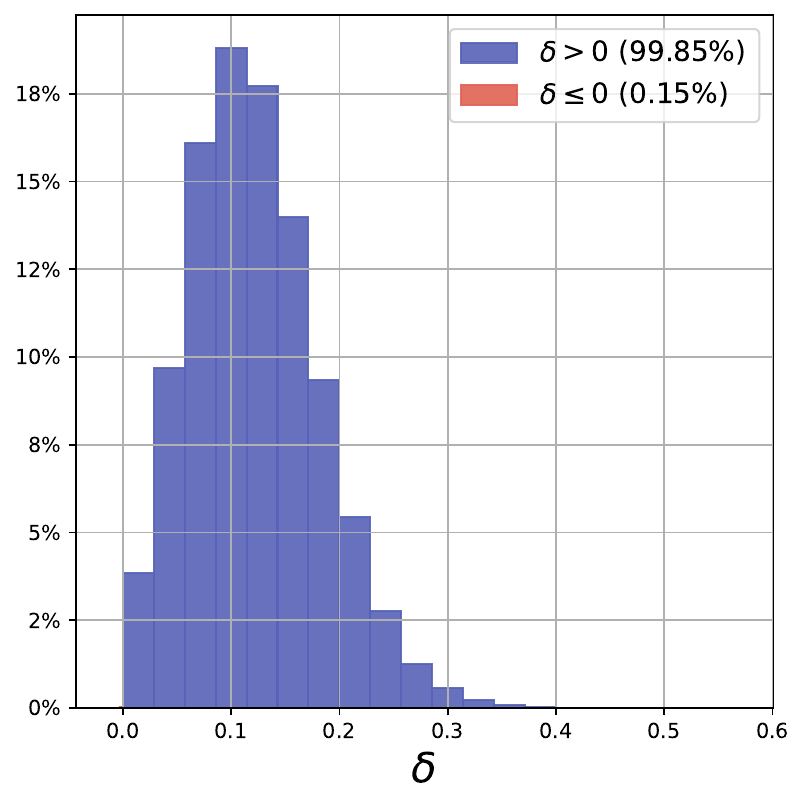}}
    \hfill
\vspace{-0.15in}
\caption{\textbf{Comparison of AR Diffusion with MLP and DDPM with MLP Performance on Sun-Shadow Setting.} \Cref{fig:task1_ar_app} and \Cref{fig:task1_ddpm_mlp} illustrate the performance of AR Diffusion with MLP and DDPM with MLP during the inference phase, showing that AR Diffusion better captures inter-feature dependencies with a higher \( R^2 \), while DDPM with MLP exhibits a significant quality degradation. \Cref{fig:task1_loss_mlp} presents the significant difference in training loss between DDPM and AR Diffusion, denoted as \( \delta \). For nearly all training steps, AR Diffusion's training loss is lower than that of DDPM, with \( \delta > 0 \).}
\vspace{-0.15in}
\label{fig:task_1_dis_mlp}
\end{figure*}

\subsection{More Architectures and More Data}
\label{app:More Architectures and More Data}
% Complex setting (dependency)

% \begin{itemize}
%     \item Updated Task 1 (done)
% \end{itemize}

\paragraph{More Architectures.}In the main text, we consider the AR with MLP backbone and DDPM with U-Net backbone, which are classic architectures for AR generation and DDPM generation. In this section, while keeping other factors such as the number of parameters, training epochs, and learning rate consistent, we extend our experiments to AR with U-Net backbone and DDPM with MLP backbone, which allows for a comparison of AR and DDPM under the same backbone. \Cref{fig:task_1_dis_unet} shows the experimental results for AR with U-Net and DDPM with U-Net on Task 1. We observe that, consistent with the results in \Cref{fig:task_1_dis}, AR outperforms DDPM in both inference and training phases on Task 1. \Cref{fig:task_1_dis_mlp} compares the performance of AR with MLP and DDPM with MLP. We observe that simplifying the backbone leads to a significant performance drop in DDPM, but the results remain consistent with those in \Cref{fig:task_1_dis}, where AR generates samples that better satisfy dependencies, and the training loss is notably lower than that of DDPM with MLP.

\begin{figure*}[]
\centering
    \hfill
    \subfigure[Training Data]{\label{fig:eo_tr}\includegraphics[width=0.23\textwidth]{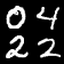}}
    \hfill
    \subfigure[Inference Phase]{\label{fig:eo_ar_ddpm_infer}\includegraphics[width=0.26\textwidth]{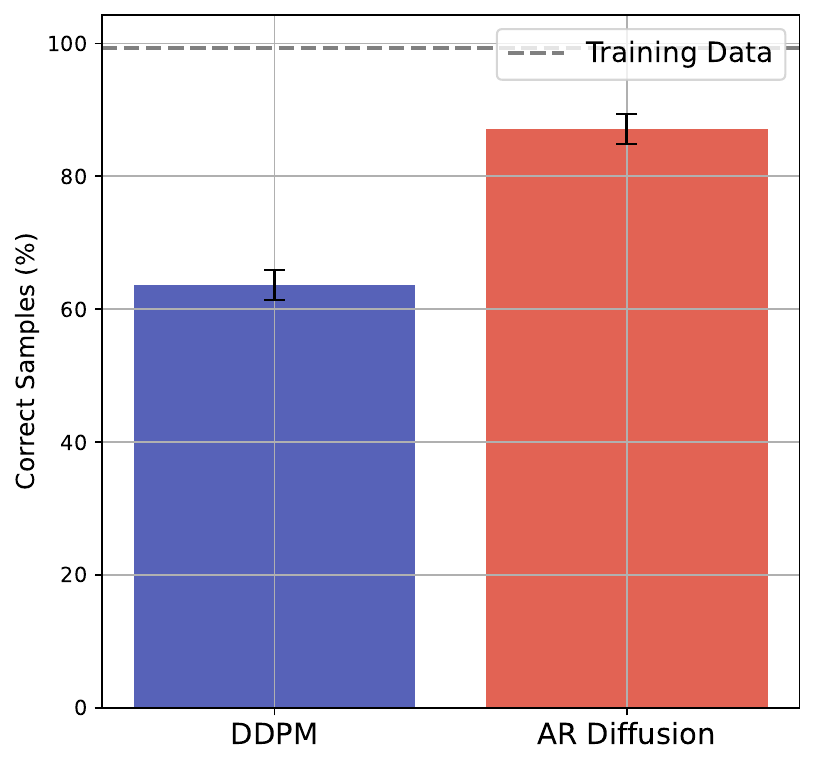}}
    \hfill
    \subfigure[Training Phase]{\label{fig:eo_ar_ddpm_loss}\includegraphics[width=0.24\textwidth]{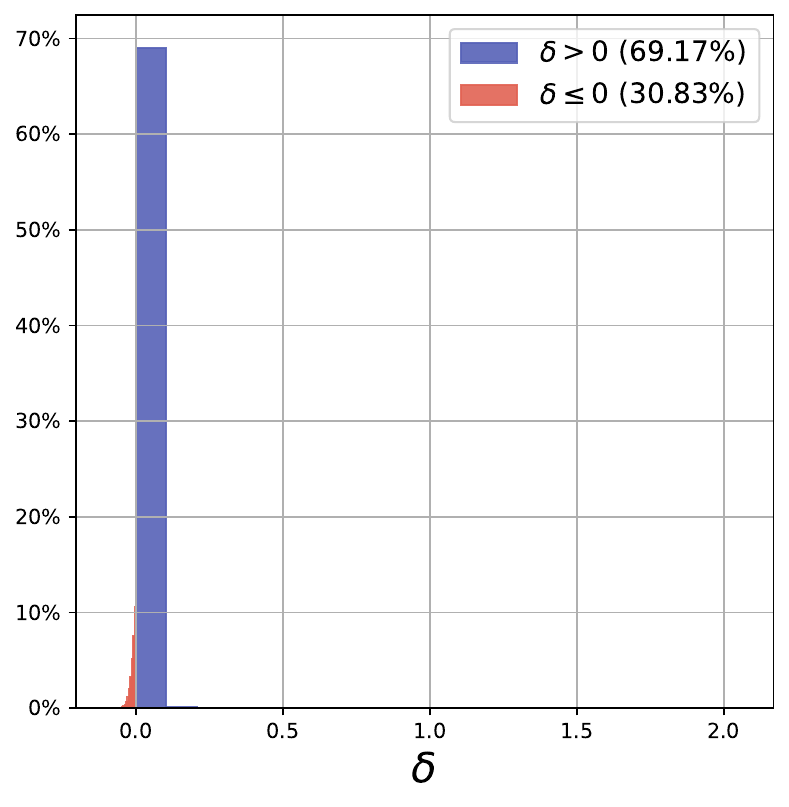}}
    \hfill
\vspace{-0.15in}
\caption{\textbf{Comparison of AR Diffusion with U-Net and DDPM with U-Net Performance on Odd and Even Setting.} \Cref{fig:eo_tr} shows the odd-even setting based on MNIST, where each training sample contains four sub-images with consistent parity. \Cref{fig:eo_ar_ddpm_infer} calculates the generated samples that satisfy the predefined dependencies, and AR Diffusion significantly outperforms DDPM. \Cref{fig:eo_ar_ddpm_loss} shows the training loss of AR Diffusion and DDPM, where DDPM's training loss is higher than AR Diffusion's for nearly 70\% of the training steps.}
\vspace{-0.15in}
\label{fig:eo_ar_ddpm_unet}
\end{figure*}

\begin{figure*}[]
\centering
    \hfill
    \subfigure[Training Data]{\label{fig:seq_tr}\includegraphics[width=0.23\textwidth]{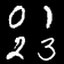}}
    \hfill
    \subfigure[Inference Phase]{\label{fig:seq_ar_ddpm_infer}\includegraphics[width=0.26\textwidth]{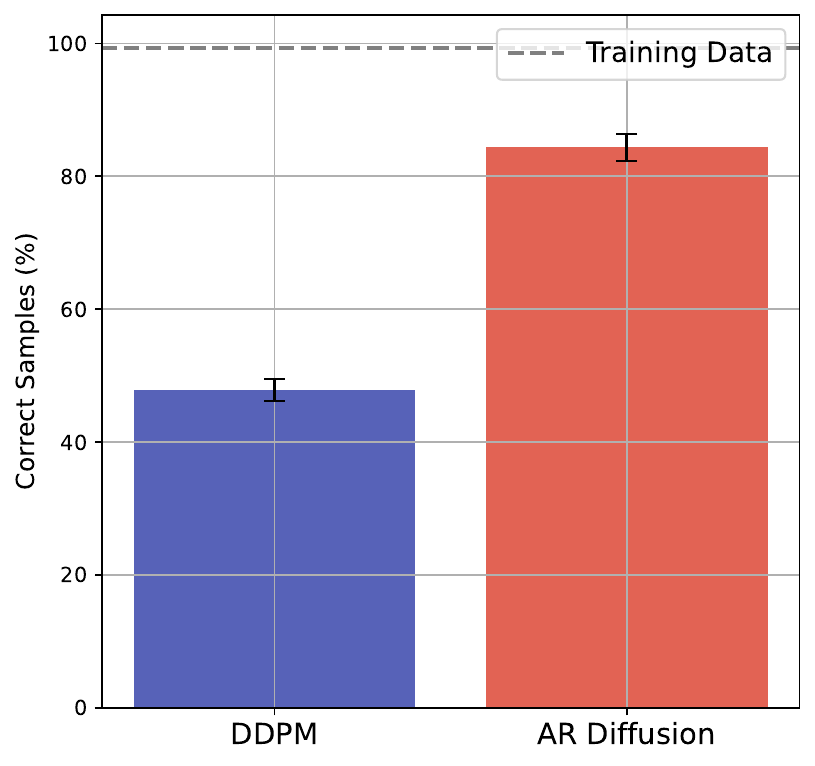}}
    \hfill
    \subfigure[Training Phase]{\label{fig:seq_ar_ddpm_loss}\includegraphics[width=0.24\textwidth]{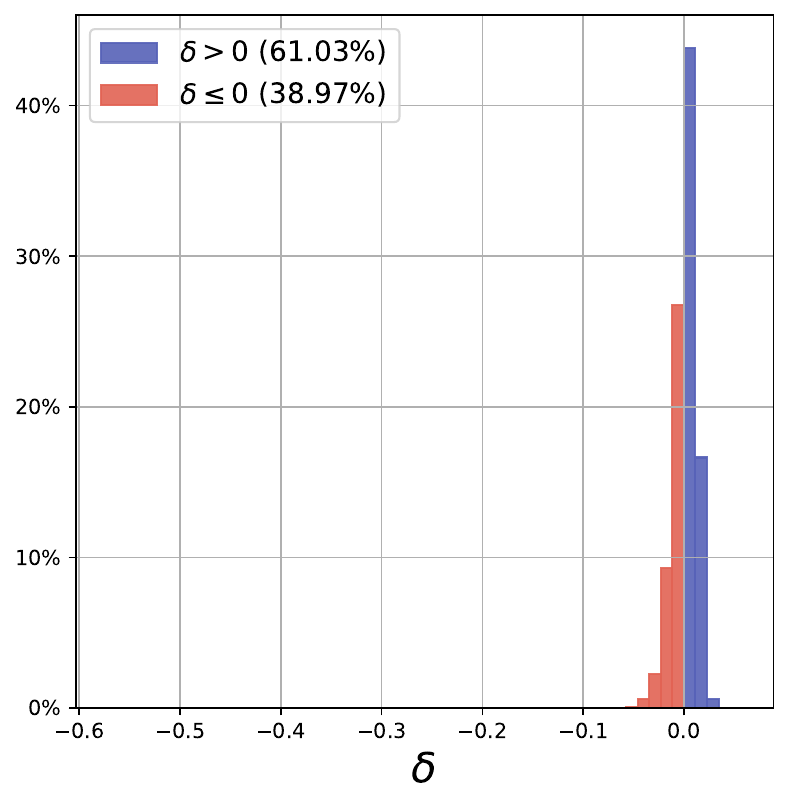}}
    \hfill
\vspace{-0.15in}
\caption{\textbf{Comparison of AR Diffusion with U-Net and DDPM with U-Net Performance on Arithmetic Sequence Setting.}\Cref{fig:seq_tr} shows the arithmetic sequence setting based on MNIST, where each training sample contains four sub-images that satisfy the arithmetic sequence. \Cref{fig:seq_ar_ddpm_infer} shows that over 80\% of the samples generated by AR Diffusion satisfy the predefined dependencies, while DDPM only has less than 50\% of its generated samples meeting the dependencies. \Cref{fig:seq_ar_ddpm_loss} shows the training loss of AR Diffusion and DDPM, where for 60\% of the training steps, DDPM's training loss is higher than that of AR Diffusion.}
\vspace{-0.15in}
\label{fig:seq_ar_ddpm_unet}
\end{figure*}

\paragraph{Real-world Data.}In this section, we conduct experiments on the real MNIST data. Specifically, we construct training data by concatenating MNIST digits that exhibit inter-feature dependencies. We consider two settings: (1) Odd and Even: The concatenated MNIST digits must all be either odd or even, for example, $0, 4, 2, 2$, as illustrated in \Cref{fig:eo_tr}; (2) Arithmetic Sequence: The concatenated 4 digits, from top to bottom and left to right, must form an arithmetic sequence, for example, $0, 1, 2, 3$, as illustrated in \Cref{fig:seq_tr}. For each setting, we synthesize $3000$ training samples and train AR with U-Net and DDPM with U-Net for $2000$ epochs. 

In the inference phase, we first pretrain a CNN on the MNIST dataset, achieving a test accuracy of $99.23\%$. For the generated data, we first divide the image into four sub-images along the vertical and horizontal midlines. Then, we use the pretrained CNN to predict the labels of the sub-images. We evaluate whether the sub-image labels satisfy the predefined dependencies, such as whether they are all odd/even or form an arithmetic sequence. If the predefined dependencies are satisfied, the sample is labeled as a correct sample. For both DDPM and AR Diffusion, we generate 300 samples at a time, repeated three times. We calculate the mean and error bars of the proportion of correct samples as the metric. \Cref{fig:eo_ar_ddpm_unet} and \Cref{fig:seq_ar_ddpm_unet} show the same conclusions as  \Cref{fig:task_1_dis} under both settings, namely that AR can better capture inter-feature dependencies in both the inference and training phases.

% \begin{itemize}
%     \item Mnist-Mnist-Seq
% \begin{itemize}
%     \item AR with MLP (done)
%         \item DDPM with U-Net (done)
% \end{itemize}
%     \item Mnist-Mnist-EO
%     \begin{itemize}
%         \item AR with MLP (done)
%         \item DDPM with U-Net (done)
%     \end{itemize}
% \end{itemize}

\subsection{Ablation Study: Parallel Order}
\label{sec:Ablation: Parallel order}
\begin{wrapfigure}{r}{0.32\textwidth}
\vspace{-.5in}
\begin{center}
    \includegraphics[width=0.36\textwidth]{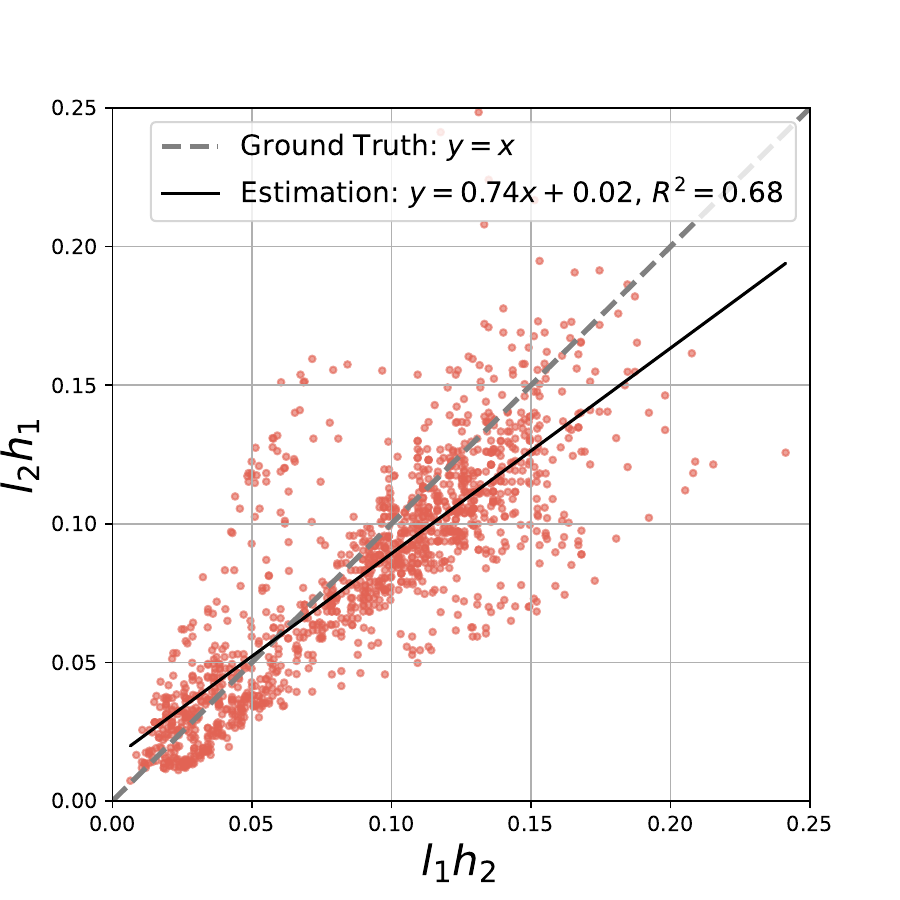}
\end{center}
\vspace{-0.2in}
\caption{AR Diffusion with Parallel Order.} 
% \vspace{-0.05in}
\label{fig:task1_ar_parallel} 
\end{wrapfigure}
In this section, we conduct an ablation study on Task 1 by modifying only the learning order of AR Diffusion, changing it from the raster scan order to the parallel order illustrated in \cref{fig:task1_ar_parallel}.  
It is important to emphasize that in the synthesized Task 1, the raster scan order aligns with the dependency order between features—i.e., the model always learns the sun first, followed by the shadow. In contrast, the parallel order completely decouples the sun and shadow, disrupting their relationship. \Cref{fig:task1_ar_parallel} shows the inference performance of AR Diffusion trained with the parallel order. We observe that, compared to AR Diffusion with the raster scan order shown in \Cref{fig:task1_ar}, simply changing the order while keeping all other factors the same leads to a significant drop in generated image quality, with \( R^2 \) decreasing from $0.92$ to $0.68$.  This ablation study highlights that AR Diffusion with an appropriate order can facilitate the learning of feature dependencies.

\end{document}